\newcommand{\EE}{\mathbb{E}}
\newcommand{\PP}{\mathbb{P}}
\newcommand{\RR}{\mathbb{R}}
\newcommand{\bx}{\boldsymbol{x}}
\newcommand{\by}{\boldsymbol{y}}
\newcommand{\bv}{\boldsymbol{v}}
\newcommand{\cE}{\mathcal{E}}
\newcommand{\cI}{\mathcal{I}}
\newcommand{\cM}{\mathcal{M}}
\newcommand{\cV}{\mathcal{V}}
\newcommand{\reg}{\mathrm{reg}}
\newcommand{\eg}{\emph{e.g.}\xspace}
\newcommand{\ie}{\emph{i.e.}\xspace}
\newcommand{\one}{\mathds{1}}
\newtheorem{proposition}{Proposition}
\newtheorem{theorem}{Theorem}
\newtheorem{definition}{Definition}
\title{Uncertainty in Language Models: Assessment through Rank-Calibration}
\author{Xinmeng Huang\thanks{The first two authors are listed alphabetically. Correspondence to: Xinmeng Huang <xinmengh@sas.upenn.edu> and Shuo Li <lishuo1@seas.upenn.edu>.}\thanks{University of Pennsylvania, Philadelphia (PA),
US} \And Shuo Li$^{*\dag}$ \And Mengxin Yu$^\dag$ \And  Matteo Sesia\thanks{University of Southern California,  Los Angeles (CA),
US} \And
Hamed Hassani$^\dag$ \AND Insup Lee$^\dag$ \And Osbert Bastani\thanks{Collaborative advising.}$^\dag$ \And Edgar Dobriban$^{\S\dag}$
  }
\begin{document}
    \maketitle

\begin{abstract}
Language Models (LMs) have shown promising performance in natural language generation.
However, as LMs often generate incorrect or hallucinated
responses, it is crucial to correctly quantify their uncertainty in responding to given inputs. 
In addition to verbalized confidence elicited via prompting, 
many uncertainty measures (\eg, semantic entropy and affinity-graph-based measures) have been proposed. 
However, these measures can differ greatly, and it is unclear how to compare them, partly because they take values over different ranges (\eg, $[0,\infty)$ or $[0,1]$). 
In this work, we address this issue by developing a novel and practical framework, termed {\em Rank-Calibration}, to assess uncertainty and confidence measures for LMs.
Our key tenet is that higher
uncertainty (or lower confidence) should imply lower generation quality, on average.
Rank-calibration quantifies deviations from this ideal relationship in a principled manner, without requiring ad hoc binary thresholding of the correctness score (\eg, ROUGE or METEOR).
The broad applicability and the granular interpretability of our methods are demonstrated empirically. \href{https://github.com/shuoli90/Rank-Calibration/tree/main}{\bf The code to replicate
our experiments is here}.
\end{abstract}

\section{Introduction}\label{sec:introduction}
Language Models (LMs), especially
Large Language Models (LLMs), have shown promising performance in Natural Language Generation (NLG). 
These models, fitted on huge text corpora, can produce responses resembling those of humans~\cite{touvron2023llama2,gpt-4}. 
However, since LMs often generate wrong or hallucinated responses~\citep{weidinger2021ethical,xiao2021hallucination,huang2024one}, 
it is crucial to correctly quantify their level of uncertainty in responding to particular inputs.

\definecolor{blue}{RGB}{30,100,255}
\definecolor{shallow red}{RGB}{249, 127, 114}
\begin{figure}[ht]
    \centering
    \hspace{-4mm}
    \includegraphics[width=0.53\linewidth]{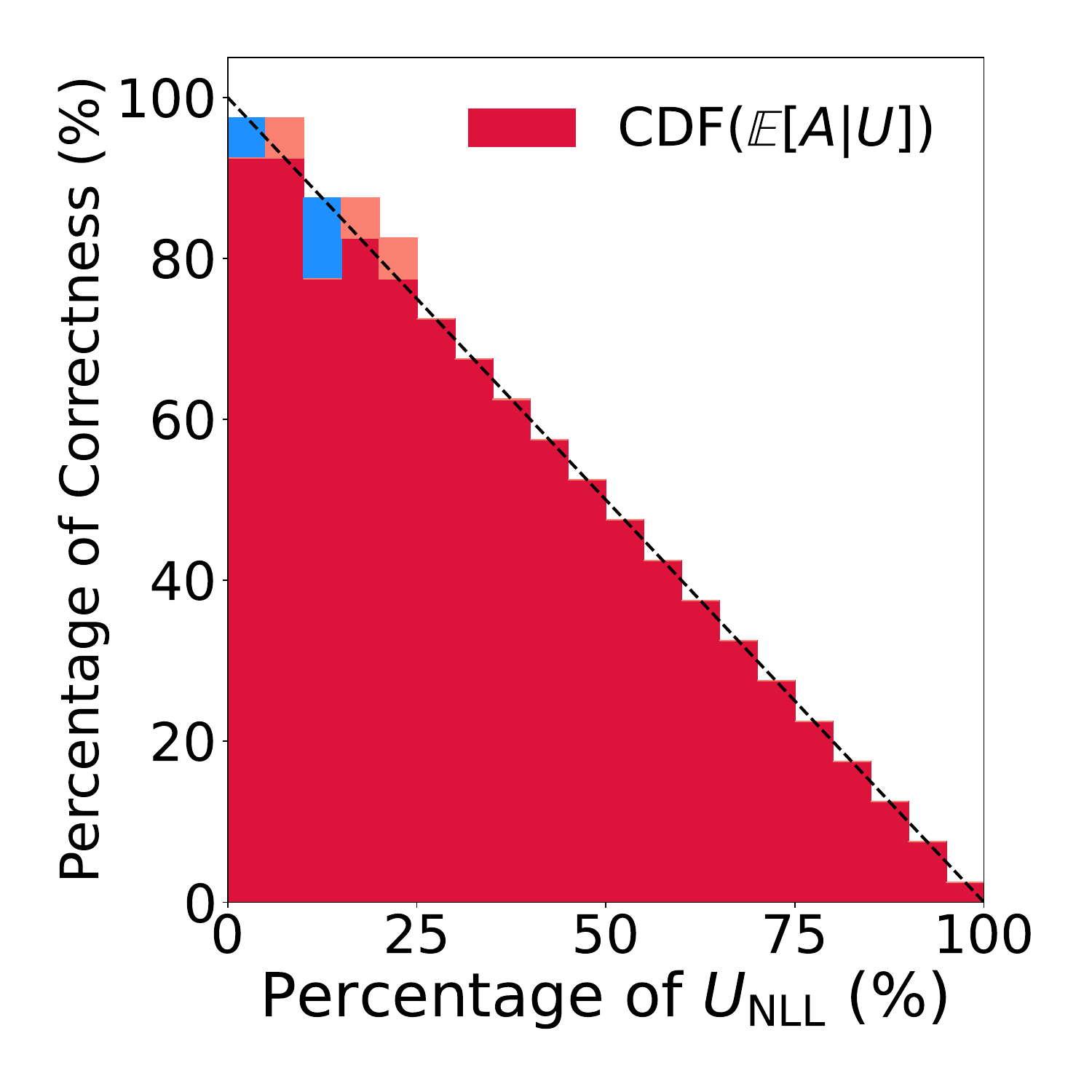}
    \hspace{-3mm}
    \includegraphics[width=0.53\linewidth]{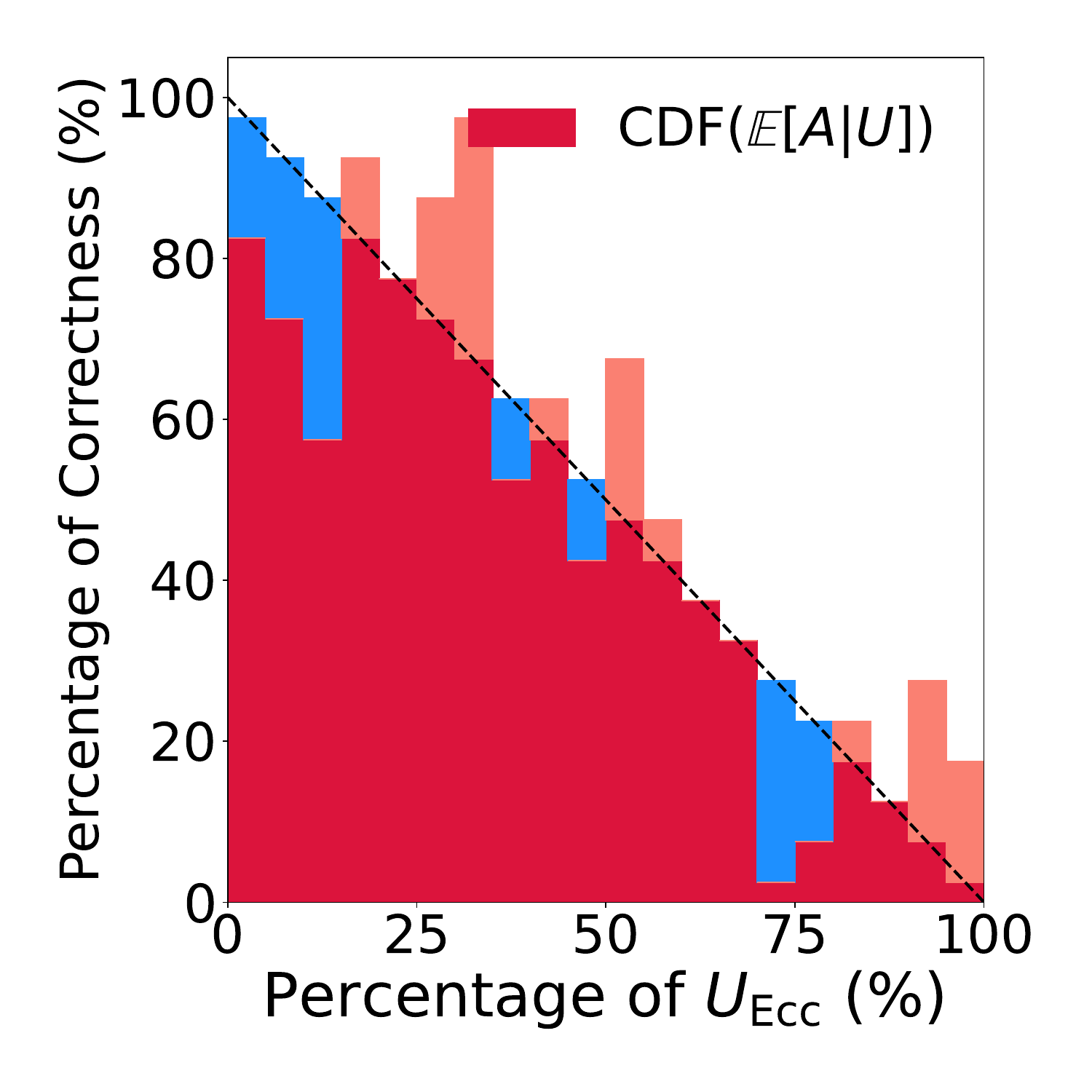}
    \hspace{-4mm}
    \vspace{-3mm}
    \caption{
    \emph{Indication diagrams} comparing two uncertainty measures, $U_{\rm NLL}$ (negative log-likelihood) and $U_{\rm Ecc}$ (eccentricity), for the GPT-3.5-turbo model on the TriviaQA benchmark. 
    The red bars indicate the average correctness of different outputs, as a function of the corresponding relative uncertainty levels. The \textcolor{blue}{blue} and \textcolor{shallow red}{shallow red} areas---deviating from the anti-diagonal line---indicate where the uncertainty measures are over-optimistic and pessimistic, respectively. Their sum is our \emph{rank-miscalibration} metric (\ie, \ref{eqn:rank-ece}), which here is lower for {\em $U_{\rm NLL}$ than $U_{\rm Ecc}$}.
    See Sec.~\ref{sec:rce+diagram} for details. 
    }
    \vspace{-5mm}
    \label{fig:chat_trivia}
\end{figure}

Uncertainty quantification is well-explored in supervised learning, specifically in 
classification~\cite[\eg,][etc]{lichtenstein1977calibration,gal2016dropout,lakshminarayanan2017simple}.
In classification,
a \emph{confidence measure}
is an estimate
of the probability that the predicted class $\widehat Y$ matches the true class label $Y$~\cite{lichtenstein1977calibration,lee2023t}.  
A confidence measure $C$
is considered \emph{calibrated} 
if it reflects the probability of correct prediction, \ie, 
$\PP(\widehat Y\hspace{-2pt} =\hspace{-2pt} Y\hspace{-2pt}\mid\hspace{-2pt}  C)\hspace{-2pt} =\hspace{-2pt} C$, for all values in $C$'s range. 
The Expected Calibration Error~\eqref{eqn:ece} measures the miscalibration of a confidence measure~\cite{frank2015regression,naeini2015obtaining}:
\begin{equation}\label{eqn:ece}
    \EE_{C}\hspace{-2pt} \left[\left|\PP(\widehat Y\hspace{-2pt} =\hspace{-2pt} Y\hspace{-2pt} \mid \hspace{-2pt} C)\hspace{-2pt} -\hspace{-2pt}C\right|\right].\tag{ECE}
\end{equation}

In classification, confidence measures are predominantly built on model logits~\citep{guo2017calibration,kull2019beyond}.
However, these methods are less suitable for NLG tasks. 
First, the label space is often too large to assess correctness via $\widehat Y= Y$, since LMs produce potentially long textual responses $\widehat Y$ for any given input. 
Second, for LMs, logits encode 
the likelihood of selecting the next token and do not necessarily capture linguistic sense~\cite{mielke2022reducing}. 
Third, even hand-crafted prompts intended to make LMs express confidence explicitly may not lead to reliable confidence values because elicitation is heavily tied to prompt formats~\cite{zhao2021calibrate,xiong2023llms}.

Recent works have studied \emph{uncertainty measures} as an alternative to confidence measures.
These capture the ``dispersion'' of  an LMs' potential outputs for a fixed input. 
\citet{kuhn2023semantic} introduce \emph{semantic entropy}, which incorporates linguistic invariances arising from the shared meaning of generated responses. \citet{lin2023generating} extend semantic entropy by leveraging the affinity matrices induced by entailment scores of generated outputs. 
Further,
\citet{chen2024inside} characterize differential entropy in the
embedding space with EigenScore, via the covariance of embeddings of potential responses. 

Uncertainty measures are more general and arguably more principled
than confidence measures for LMs, 
but they lack a universal assessment metric such as ECE. 
A key issue is that uncertainty measures are not necessarily commensurate. 
For instance, the semantic entropy  \citep{kuhn2023semantic} can take arbitrarily large
positive values, whereas the EigV measure of \citet{lin2023generating} depends on the number of responses generated.  
This makes it difficult to understand, evaluate, and compare uncertainty measures via a unified lens.

This paper develops a principled framework
to assess the quality of uncertainty and confidence measures for LMs.
We provide a novel and practical framework, termed \emph{Rank-Calibration}.
Specifically, our contributions are as follows.
\begin{itemize}[leftmargin=0.2in]
    \item We mathematically formalize the assessment of uncertainty/confidence measures for LMs in NLG tasks, going beyond binary correctness.

    \item We demonstrate empirically that existing assessment metrics (\eg, AUROC, ECE, etc) have several limitations, including a heavy dependence on the LM's performance, instability caused by ad hoc binarization of correctness scores, and incompatibility with diverse uncertainty ranges. 
    
    \item We address these limitations by starting from a basic principle: lower uncertainty/higher confidence should indicate higher-quality generation. We thus propose assessing uncertainty measures in terms of rank-calibration and introduce a suitable metric, the Rank-Calibration Error \eqref{eqn:rank-ece}.
    
    \item To make rank-calibration practical, we introduce the \ref{eqn:empirical-erce}---an estimate of \ref{eqn:rank-ece} based on a finite dataset. 
    Moreover, we introduce novel indication diagrams, previewed in Fig.~\ref{fig:chat_trivia}, that intuitively visualize the deviation of any uncertainty/confidence measure from the monotonicity required for rank-calibration.
    
    \item We experimentally demonstrate the broader applicability and granular interpretability of our proposed methods. Comprehensive ablation studies are conducted to examine its robustness.
\end{itemize}

\section{Correctness and Uncertainty for LMs}
\label{uqlm}
Let $\cV$ be the token vocabulary of an LM and $\cV^\star\hspace{-1pt}:=\hspace{-1pt}\cup_{\ell\geq 1}\cV^\ell$ the space of sequences of arbitrary length.
Given a query $\bx\in \cV^\star$, an LM $\cM$
can generate output $\widehat \by \hspace{-1pt}\triangleq \hspace{-1pt} (\widehat y_\ell)_{\ell\geq 1} \hspace{-2pt}\in \hspace{-2pt} \cV^\star$ 
by sequentially sampling from the distribution $\PP(\widehat \by \hspace{-2pt}\mid\hspace{-2pt} \bx)\hspace{-1pt}:=\hspace{-1pt} \prod_{\ell\geq 1} \PP(\widehat y_\ell \hspace{-2pt}\mid\hspace{-2pt} \bx, \widehat y_{<\ell})$. 
Here, $\widehat y_\ell\in\cV$ is the $\ell$-th generated token and $\PP\triangleq\PP^{\cM}$ is the generative distribution of $\cM$. 

We work with a deterministic {\em correctness function} $A\hspace{-2pt}:\hspace{-2pt}\cV^\star \times \cV^\star \hspace{-2pt}\to\hspace{-2pt} \RR$ mapping each pair $(\bx; \widehat \by)$ to a correctness
value $A(\bx; \widehat \by)$.
 In practice, correctness is often not a binary variable in NLG  tasks and can be assessed in at least two different ways. For the reader's convenience, the concepts and notations used in the paper are summarized in Table~\ref{tab:notations}.
\begin{itemize}
[leftmargin=0.2in]
    \item {\bf Reference matching.} Given certain reference answers $\{\by^{(m)}\}_{m=1}^M$ associated with $\bx$, a similarity score 
    between the output $\widehat \by$ and $\{\by^{(m)}\}_{m=1}^M$ can be interpreted as a correctness value. Similarity scores commonly utilized for this purpose include the {\em Rouge} score, {\em BLEU} score, and outputs of other discriminative LMs. 

    \item {\bf Human evaluation.} Correctness or quality may be evaluated by human experts, possibly integrating multiple opinions  (\eg, averaging). 
    This approach does not require reference answers and is as ``trustworthy'' as the humans involved.
\end{itemize}

\begin{table}[ht]
\centering
\begin{tabular}{ll}
\toprule
Notation &\hspace{-1em} Description \\
\midrule
$\cV$ &\hspace{-1em} Token vocabulary \\
$\cV^\star$ &\hspace{-1em} Space of token sequences\\
$\bx$ &\hspace{-1em} Input context, $\bx \in \cV^\star$ \\
$\widehat{\by}$ &\hspace{-1em} Gen. output $\widehat{\by} = (\widehat{y}_\ell)_{\ell \geq 1} \in \cV^\star$ \\
$\PP\triangleq\PP^{\cM}$ &\hspace{-1em} Generative dist. of LM $\cM$ \\
$A(\cdot\,; \,\cdot)$ &\hspace{-1em} A deterministic correctness function \\
$\{\by^{(m)}\}_{m=1}^M$ &\hspace{-1em} Reference answers for input $\bx$ \\
$U^{\cM}(\bx; \widehat{\by})$ &\hspace{-1em} Uncertainty measure  for LM $\cM$ \\
$C^{\cM}(\bx; \widehat{\by})$ &\hspace{-1em} Confidence measure for LM $\cM$ \\
$\reg(u)$ &\hspace{-1em} Regression fn. $\EE_{\bx,\widehat \by}[A\mid U\hspace{-2pt}=\hspace{-2pt}u]$ \\
\bottomrule
\end{tabular}
\caption{Summary of notations.}
\vspace{-4mm}
\label{tab:notations}
\end{table}

An \emph{uncertainty measure} 
is a (possibly random) function $U^{\cM}\hspace{-2pt}:\hspace{-2pt}\cV^\star \times \cV^\star \to \RR,(\bx;\widehat \by)\mapsto U^{\cM}(\bx;\widehat \by)$ associated with the LM that maps any  pair $(\bx;\widehat \by)$ to an
uncertainty value.\footnote{In special cases, the uncertainty measure
may only depend on the input $\bx$ and the LM $\cM$, 
not the output $\widehat \by$.} 
We will omit $\cM$ and write $U(\bx; \widehat \by)$, $\PP(\cdot \hspace{-2pt} \mid \hspace{-2pt} \bx)$ when the choice of the LM is clear. 
Some examples are reviewed below, while additional examples and details are in Appendix~\ref{app:measures}.
\begin{itemize}
[leftmargin=0.2in]
    \item {\bf NLL. } 
    In classification, the softmax of the last-layer logits determines a model's prediction~\cite {guo2017calibration}. In NLG tasks, one can  view the Negative Log-Likelihood (NLL),
    \begin{equation}
        U_{\rm NLL}(\bx, \widehat \by)\hspace{-2pt}:=\hspace{-2pt}-\ln(\PP(\widehat \by \mid \bx)),
    \end{equation}
    as an indicator of uncertainty 
    where $\widehat \by= (\widehat y_\ell)_{\ell \geq 1}$ is a generated response.  
    A natural extension accounting for the length of responses applies length normalization;
    this is also known as the \emph{Perplexity} measure~\citep{jelinek1977perplexity}.

\definecolor{generation}{RGB}{81,145,243}
\definecolor{correctness-justification}{RGB}{255, 217, 102}
\definecolor{uncertainty-quantification}{RGB}{169, 209, 142}
\definecolor{correctness-discretization}{RGB}{192, 0, 30}
\definecolor{evaluation}{RGB}{237, 125, 79}
\begin{figure*}[t]
    \centering
    \vspace{-5mm}
    \includegraphics[width=0.8\textwidth]{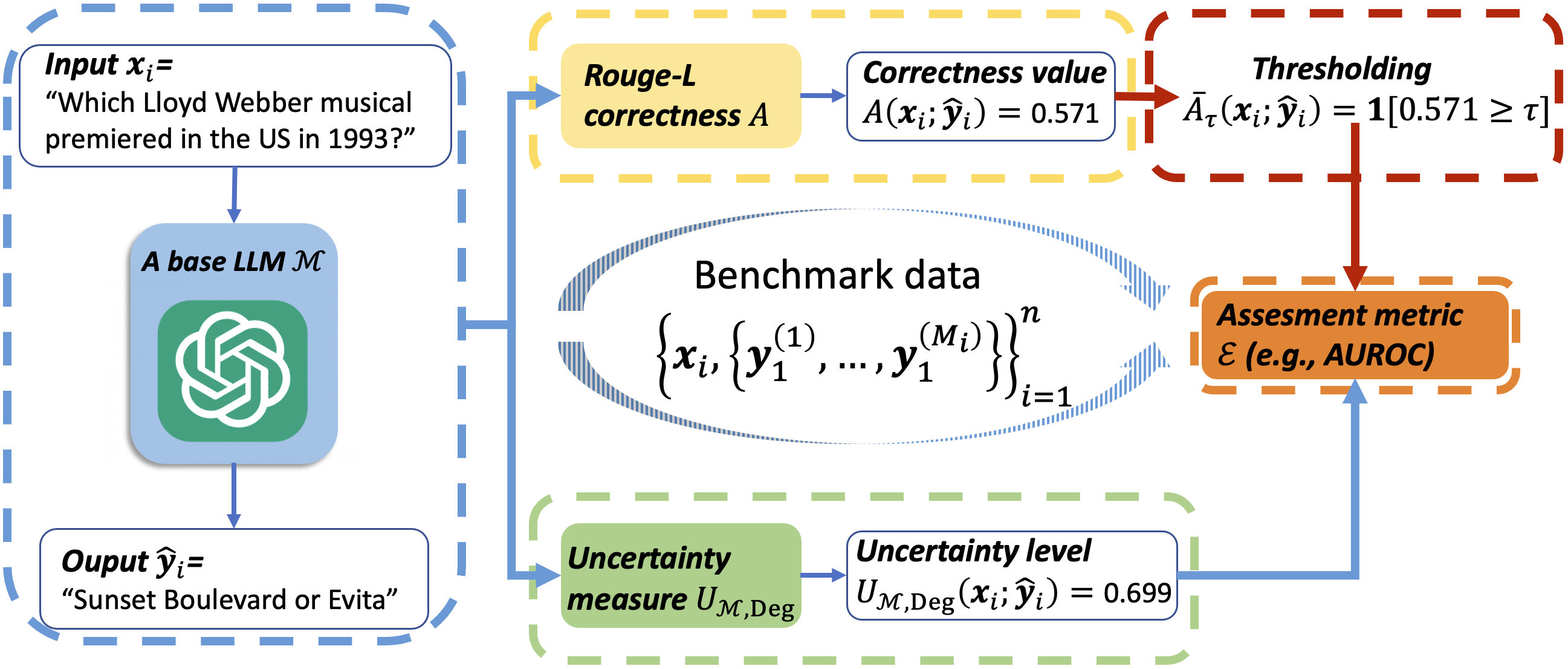}
    \caption{Common workflow for assessing the quality of an LM uncertainty/confidence measure. The key ingredients are: a base LM $\cM$ (\eg,  Llama-2-7b-chat), a correctness function $A$ (\eg, the Rouge-L score), a benchmark dataset $\{\bx_i,\{\by_i^{(m)}\}_{m=1}^{M_i}\}_{i=1}^n$ (\eg, TriviaQA), an assessment metric $\cE$ (\eg, AUROC), and the uncertainty measure $U$ (\eg, $U_{\rm Deg}$). 
   The workflow proceeds in five stages: \textcolor{generation}{generation}, \textcolor{correctness-justification}{correctness calculation}, \textcolor{correctness-discretization}{correctness discretization}, \textcolor{uncertainty-quantification}{uncertainty quantification}, and  \textcolor{evaluation}{evaluation}. Notably, the threshold $\tau$ in \textcolor{correctness-discretization}{correctness discretization} is usually chosen heuristically \citep[etc]{kuhn2023semantic,xiong2023llms,lin2023generating}, which can be problematic, as demonstrated in Sec.~\ref{sec:case-study}. Our proposed \ref{eqn:rank-ece}-based assessment {\em removes} this stage by using the correctness values directly.}
    \label{fig:assessment-pipeline}
\end{figure*}

\item {\bf Entropy. } 
The predictive entropy of the distribution $\PP(\cdot \mid \bx)$ is large when the same input may lead to diverse outputs, and it is defined as
\begin{equation}\label{eqn:entropy}
    U_{{\rm E}}(\bx)\hspace{-1mm}:=\hspace{-1mm}-\EE_{\widehat \by\sim \PP(\cdot \mid \bx)}[\ln( \PP(\widehat \by \mid \bx))].
\end{equation}
\citet{malinin2021uncertainty} propose a variant of this, $U_{{\rm E{\text -}LN}}(\bx)$, utilizing the length-normalized log-likelihood $\ln( \PP(\widehat \by \hspace{-2pt}\mid \hspace{-2pt} \bx))/{\rm len}(\widehat \by)$. 
\citet{kuhn2023semantic} argue that different responses with the same meaning should be viewed as equals in this context, regardless of token-level differences. They propose the semantic entropy, 
\begin{equation}
    U_{{\rm SE}}(\bx)\hspace{-1mm}:=\hspace{-1mm}-\EE_{\widehat \by\sim \PP(\cdot \mid \bx)}[\ln( \PP(c(\widehat \by) \mid \bx))],
\end{equation}
where $c(\widehat \by)$ is the semantic concept of  $\widehat \by$, provided by another language modeling method.

\item {\bf Affinity graph.} \citet{lin2023generating} calculate uncertainty using a weighted adjacency graph built upon semantic affinities.
Consider an \emph{affinity} model $e$, mapping pairs of responses $\widehat \by, \widehat \by ^\prime $  
to values in $[0,1]$.
Given $K$ independent samples $\{\widehat \by^{(k)}\}_{k=1}^K$ 
 from $\PP(\cdot \hspace{-2pt}\mid \hspace{-2pt}\bx)$, the model $e$ induces a symmetric adjacency matrix $W\hspace{-2pt}=\hspace{-2pt}[w_{i,j}]_{i,j= 1}^K$, with  $w_{i,j}\hspace{-2pt}=\hspace{-2pt}(e(\widehat \by^{(i)};\widehat \by ^{(j)})+e(\widehat \by ^{(j)};\widehat \by^{(i)}))/2$ for all $i,j$. 
Let $D=[\mathds{1}[j=i]\sum_{k=1}^K w_{k,j}]_{i,j=1}^K$ 
be the corresponding degree matrix
and  $\{\lambda_k\}_{k=1}^K$ be the eigenvalues 
of the \emph{Laplacian} $L\hspace{-2pt}=\hspace{-2pt}I\hspace{-2pt}-\hspace{-2pt}D^{-1/2}WD^{-1/2}$. 
Then, the uncertainty measures proposed in \citet{lin2023generating}
include
\begin{align}
    &U_{\rm EigV}(\bx):=\sum_{k= 1}^K\max\{0, 1-\lambda
    _k\},\\
    &U_{\rm Deg}(\bx):=1-{\rm trace}(D)/K^2,\\
    &U_{\rm Ecc}(\bx):=\|[\bv_1,\bv_2,\dots, \bv_K]\|_2,
\end{align}
where $\{\bv_k\}_{k= 1}^K$ are suitable vectors associated with $L$, see \citet{lin2023generating}.
Intuitively, $U_{\rm EigV}(\bx)$ approximately counts the connected components in the graph represented by $W$, while $U_{\rm Deg}(\bx)$ and $U_{\rm Ecc}(\bx)$ reflect the diversity of outputs. 
\end{itemize}

The diverse uncertainty measures reviewed above produce outputs with different ranges. 
For instance, $U_{\rm NLL}$, $U_{\rm SE}$, and $U_{\rm EigV}$ can yield any number in $[0,\infty)$, whereas $U_{\rm Deg}$ and $U_{\rm Ecc}$ are bounded in $[0,1]$; see Fig.~\ref{fig:auroc_range} [bottom] for a visual illustration. 
This mismatch in output ranges motivates the need for a novel unified assessment framework.

As we shall see, our assessment framework can handle not only any uncertainty measure but also the closely related concept of \emph{confidence measures} \citep{zhao2021calibrate,mielke2022reducing,xiong2023llms}.
A confidence measure can be cast as a (possibly random) function $C^{\cM}\hspace{-2pt}:\hspace{-2pt}\cV^\star\times \cV^\star\hspace{-2pt}\to\hspace{-2pt} [0,1]$, $(\bx;\widehat \by )\hspace{-2pt}\mapsto\hspace{-2pt} C^{\cM}(\bx;\widehat \by)$ with output taking values in $[0,1]$.
Intuitively, confidence and uncertainty measures serve similar purposes, although in a complementary way---high confidence should correlate with low uncertainty, and vice versa.

With this notation in place, we are now ready to state our goals and give a more detailed preview of our proposed framework.
Given a benchmark dataset $\hspace{-2pt}\{(\bx_i, \{\by_i^{(m)}\}_{m=1}^{M_i})\}_{i=1}^n\hspace{-2pt}$, where each $M_i\geq 0$ denotes the number of reference answers for $\bx_i$, 
we aim to quantify the performance of an uncertainty measure $U$  (or a confidence measure $C$) as follows. 
First, we obtain the paired values of uncertainty and correctness $\{(U(\bx_i, \widehat \by_i), A(\bx_i; \widehat \by_i))\}_{i=1}^n$ by independently sampling $\widehat \by _i \hspace{-2pt}\sim \hspace{-2pt}\PP(\cdot \hspace{-2pt}\mid\hspace{-2pt} \bx_i)$ for each $1\hspace{-2pt}\leq \hspace{-2pt}i\hspace{-2pt}\leq\hspace{-2pt} n$.
Then, we evaluate
 $\cE(\{(U(\bx_i, \widehat \by_i) , A(\bx_i; \widehat \by_i))\}_{i=1}^n)$ for each $1\hspace{-2pt}\leq \hspace{-2pt}i\hspace{-2pt}\leq\hspace{-2pt} n$, using a suitable metric $\cE$.\footnote{
A common practice is to map the correctness values to $\{0,1\}$ by thresholding at an ad hoc value before feeding them into the evaluation metric; see Sec.~\ref{sec:case-study} for a discussion of the limitations of this approach.} 
To account for the randomness in sampling $\widehat \by_i$, we may draw multiple independent responses $\{\widehat \by_i^{(k)}\}_{k=1}^{K} \overset{iid}{\sim} \PP(\cdot \hspace{-2pt}\mid\hspace{-2pt} \bx_i)$ and take the average as the final result $\sum_{k=1}^K\cE(\{(U(\bx_i, \widehat \by_i^{(k)}),  A(\bx_i, \widehat \by_i^{(k)}))\}_{i=1}^n)/K$.

The closest works have been discussed in Sec.~\ref{sec:introduction} and~\ref{uqlm}, and more related works are reviewed in Appendix~\ref{app:related-works}.

\section{Limitations of Existing Assessments}\label{sec:case-study}

This section illustrates some limitations of existing assessments for LM uncertainty measures via a case study applying the \textit{GPT-3.5-turbo}~\citep{ouyang2022training} model on the \textit{TriviaQA} benchmark~\cite{joshi-etal-2017-triviaqa}. 
We use the validation set of TriviaQA, which contains $11,322$ question-answer pairs (after deduplication). We use the same prompt template as that in \citet{lin2023generating}. The template is shown in Appendix~\ref{app:datasets}.

The uncertainty measures examined here include the negative log-likelihood $U_{\rm NLL}$, the semantic entropy $U_{\rm SE}$~\cite{kuhn2023semantic}, the affinity-graph-based measures $U_{\rm EigV}$, $U_{\rm Ecc}$, and $U_{\rm Deg}$~\citep{lin2023generating}, with the affinity determined by the NLI model~\cite{he2021deberta}, and
the {verbalized confidence} $C_{\rm Verb}$~\citep{xiong2023llms};
see the definitions in Appendix~\ref{app:measures}. 
These include both white box and grey box measures,\footnote{The grey-box oracle refers to the access to model logits, which is partly feasible for commercial LMs, while the black-box oracle only relies on generated outputs.} as well as a diversity of prompt strategies.
We use the \textit{Rouge-L} score as the correctness function $A$. 
We follow a common assessment pipeline \citep {kuhn2023semantic,lin2023generating,xiong2023llms}, as depicted in Fig.~\ref{fig:assessment-pipeline}.
The assessment metrics are detailed in Appendix~\ref{app:metrics}.

\begin{figure}[ht]
\centering
  \includegraphics[width=\linewidth]{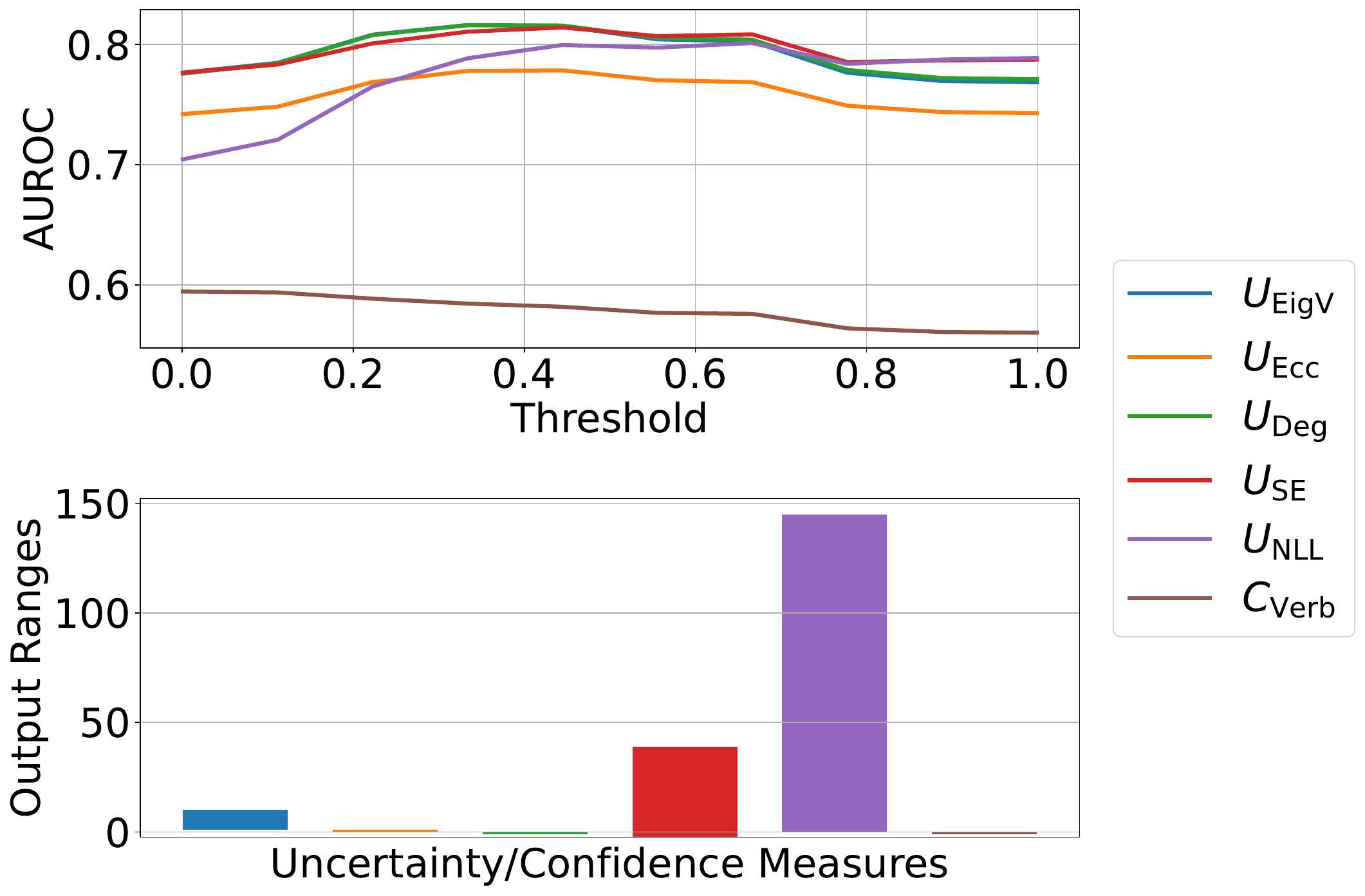}
\caption{Top: AUROCs of uncertainty/confidence measures with various thresholds. Bottom: Output ranges of uncertainty/confidence measures. Both results are for {\em GPT-3.5-turbo} on the {\em TriviaQA} benchmark.
}
\vspace{-4mm}
\label{fig:auroc_range}
\end{figure}

\paragraph{Ad hoc correctness thresholding.}
Most existing assessment metrics (\eg, AUROC, AUPRC, ECE, etc) are rooted in classification and require binary labels (\ie, $A\hspace{-2pt}\in \hspace{-2pt}\{\textit{True}\text{ or }\textit{False}\}$). Consequently, an ad hoc threshold $\tau \in \RR$ is often introduced to map continuous correctness values to binary labels, \ie, $\Bar{A}_\tau (\bx;\widehat \by)\hspace{-2pt}:=\hspace{-2pt}\one [A(\bx;\widehat \by)\hspace{-1pt}\geq \hspace{-1pt}\tau]$~\cite{lin2023generating, kuhn2023semantic}. 
Thus, the response is viewed as  \textit{correct} if the correctness value $A(\bx;\widehat \by)$ is at least $\tau$, and \textit{incorrect} otherwise. 

However, thresholding can lead to inconsistencies. 
Taking AUROC as an example, we plot the assessed results of uncertainty/confidence measures under varying thresholds in Fig.~\ref{fig:auroc_range} [top]. 
The relative AUROC results of distinct measures vary drastically with the choice of $\tau$.
For example, $U_{\rm NLL}$ appears inferior to other methods if $\tau < 0.2$, but it becomes the best measure if $\tau > 0.8$.
This is especially concerning given that there seems to be no principled way to set this threshold. 
The same limitation also affects other metrics (\eg, AUPRC, AUARC) and configurations; see Appendix~\ref{sec:auarcs}.

\paragraph{Diverse output ranges.}
The second limitation of existing assessments is rooted in the diverse output ranges of the uncertainty or confidence measures. As shown in Fig.~\ref{fig:auroc_range} [bottom],
the output ranges of different uncertainty measures vary significantly. For example, the values of $U_{\rm SE}$ can be higher than $100$ while the values of $U_{\rm Ecc}$ and $U_{\rm Deg}$ are small by definition.
This diversity of output ranges prevents the direct use of calibration-based metrics such as ECE,
which takes variables with inputs in $[0,1]$. 

\paragraph{Strong dependence on LM performance.} $\hspace{-3pt}$
While the quality of uncertainty/confidence measures should be disentangled from the generation performance of the LM, there is often a strong relation between the two concepts.
 We argue that 
 many existing metrics (\eg, AUROC, AUPRC, AUARC) can be misleading due to this entanglement.
Taking AUARC as an example, 
if the base LM is powerful
and all correctness values of its responses are high (\eg, within $[0.9, 1.0]$), 
then the evaluated AUARC will be high for any uncertainty/confidence measure, regardless of its quality.
This is undesirable because our goal is to provide an overall assessment of the uncertainty measure, which may in the future need to be applied to different LMs.
While the ECE metric provides a limited ``disentangling'' effect, in the sense that it can reflect that highly accurate models may be poorly calibrated (\ie, with high ECE values)~\citep{guo2017calibration}, it is not applicable to uncertainty measures in general.

\paragraph{Desiderata of evaluation.}\label{sec:desderata}
The aforementioned challenges suggest that the evaluation of LM uncertainty measures should take into account the following key desiderata: (1) avoidance of ad hoc correctness thresholding, (2) applicability to diverse output ranges of uncertainty measures, and (3) decoupling from the generative performance of the LM. Moreover, the evaluation framework should be practical.
We view these criteria as important, but {\em not necessarily exhaustive} for an ideal assessment. 
Future research may identify other requisites and further improve our framework accordingly.

\section{Rank-Calibration}
In this section, we introduce a novel assessment framework 
satisfying the criteria outlined in Sec.~\ref{sec:desderata}.

\subsection{Rank-Calibration \& RCE}
Define the regression function $\reg(\cdot)\hspace{-2pt}:\hspace{-2pt}\RR \hspace{-2pt}\to \hspace{-2pt}\RR, u\hspace{-2pt}\mapsto \hspace{-2pt}\EE_{\bx, \widehat \by}[A(\bx; \widehat \by )\mid U(\bx;\widehat \by)\hspace{-1pt}=\hspace{-1pt}u]$, representing the \emph{expected correctness level $A$ conditional on an uncertainty level} $U\hspace{-1pt}=\hspace{-1pt}u$. 
Here, $\bx$ 
is a random query sampled from the distribution associated with a specific benchmark dataset, 
while $\widehat \by \hspace{-1pt}\mid \hspace{-1pt} \bx \hspace{-1pt}\sim\hspace{-1pt} \PP(\cdot \hspace{-1pt}\mid\hspace{-1pt} \bx)$ is a random output sampled from the generative distribution of the LM.
We start from the observation that, ideally, 
a lower uncertainty level should correspond to higher generation accuracy. 
This is equivalent to saying that the regression function should ideally be \emph{monotone decreasing}.

Since $U$ is a random variable depending on $(\bx;\widehat \by)$,
$\reg(U)$ is also random. 
If $\reg(\cdot)$ is monotonically decreasing, then $U\leq u$ implies $\reg(U)\geq \reg(u)$. 
Thus, for any value $u$ in the range of $U$,
\begin{equation}\label{eqn:eqiv-dist}
\PP(U\leq u)=\PP(\reg(U)\geq \reg(u)).
\end{equation}
Equation~\eqref{eqn:eqiv-dist} 
suggests a direct relation between 
an uncertainty level $u$ 
and its corresponding expected correctness level $\reg(u)$. 
For example, 
for a value of $u$ in the 
in bottom $10\%$ of the distribution of $U$, 
the expected correctness level $\reg(u)\hspace{-2pt}=\hspace{-2pt}\EE[A\hspace{-1pt}\mid\hspace{-1pt} U\hspace{-2pt}=\hspace{-2pt}u]$ is in the top $10\%$ in the distribution of $\reg(U)\hspace{-2pt}=\hspace{-2pt}\EE[A\mid U]$. 
We call this desired property of uncertainty measures {\em Rank-Calibration}.
\begin{definition}[\sc Rank-Calibration]
    We say that an uncertainty measure $U$ is rank-calibrated if \eqref{eqn:eqiv-dist} holds for any $u$ in $U$'s range: on average, a lower uncertainty implies a higher generative quality.
\end{definition}
Rank-calibration is related to, yet distinct from, the usual notion of calibration in the classification context~\citep{lichtenstein1977calibration,guo2017calibration}. We defer the detailed discussion to Sec.~\ref{sec:comparison}.  We remark that the principle of rank calibration is also discussed in a concurrent work~\citep{zhang2024luq}. Unlike our work and \cite{zhang2024luq}, \cite{penha2021calibration} use the terminology ``rank'' to denote the relevance comparison of candidate responses in the binning of ECE calculation.

To quantify the distance of a given uncertainty measure from the ideal rank-calibration,
we propose the following Rank-Calibration Error \eqref{eqn:rank-ece}, inspired by ECE for calibration.
\begin{definition}[\sc Rank-Calibration Error]
The RCE of an uncertainty measure $U$ is defined as
\begin{equation}\label{eqn:rank-ece}
\EE_{U}\hspace{-2pt}\left[\left|\PP_{U'}(\reg(U^\prime)\hspace{-1pt}\geq \hspace{-1pt}\reg(U))\hspace{-1pt}-\hspace{-1pt}\PP_{U'}(U^\prime\hspace{-2pt}\leq\hspace{-1pt} U)\right|
\right], \tag{RCE}
\end{equation}
where $U^\prime$ is an independent copy of $U$.
\end{definition}

\paragraph{Extension to confidence measures.} 
While primarily motivated by uncertainty measures with incommensurate ranges, rank-calibration also applies to confidence measures. 
Ideally, 
{\em higher values of a confidence measure
should imply higher generation accuracy}. 
Thus, defining $\overline{\reg}(c):=\EE[A\mid C=c]$ 
for all $c$ in the range of $C$, 
we can adapt \ref{eqn:rank-ece} to 
\begin{equation}\label{eqn:rank-ece-c}
\EE_{C}\hspace{-2pt}\left[\left|\PP_{C^\prime}(\overline{\reg}(C^\prime)\hspace{-1pt}\geq \hspace{-1pt}\overline{\reg}(C))\hspace{-1pt}-\hspace{-1pt}\PP_{C^\prime}(C^\prime\hspace{-2pt}\geq\hspace{-1pt} C)\right|\right],
\end{equation}
where $C'$ is an independent copy of $C$.
This gauges deviations from 
the equivalence between $C\geq c$ and $\overline{\reg}(C)\geq \overline{\reg} (c)$.
Since rank-calibration provides a different perspective from calibration---see Sec.~\ref{sec:comparison}---\eqref{eqn:rank-ece-c} serves as a supplement to ECE in assessing confidence measures.

\subsection{Comparison with Classical Calibration}\label{sec:comparison}
For a binary correctness value function $A$ taking values in $\{0,1\}$,
rank-calibration relaxes classical calibration by absorbing all {\em strictly decreasing} transformations. 
\begin{theorem}\label{thm:equiv}
Suppose the correctness  function $A$ takes values in $\{0,1\}$.
    If an uncertainty measure $U$ is rank-calibrated, \ie, its  \ref{eqn:rank-ece} is zero,
    then there exists a unique strictly decreasing transformation 
    $g^\star\hspace{-2pt}:\RR\hspace{-1pt}\to\hspace{-1pt}[0,1]$ such that $C_{g^\star}:=g^\star(U)$ is calibrated, \ie, its ECE is zero. 
    If a confidence measure $C$ is calibrated, then for any strictly decreasing transformation $h\hspace{-2pt}:\RR\hspace{-1pt}\to\hspace{-1pt}\RR$, the induced uncertainty measure $U_h:= h(C)$ is rank-calibrated.
\end{theorem}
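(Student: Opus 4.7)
The plan is to prove the two implications separately. The common observation linking them is that for binary $A \in \{0,1\}$, calibration of $C = g(U)$ is the statement $\EE[A \mid g(U)] = g(U)$ almost surely, and whenever $g$ is strictly monotone we have $\sigma(g(U)) = \sigma(U)$, so $\EE[A \mid g(U)] = \EE[A \mid U] = \reg(U)$. Consequently, calibration of $g(U)$ is equivalent to the pointwise identity $g(U) = \reg(U)$ almost surely, and the entire theorem reduces to extracting such an identity from the two hypotheses.

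The reverse direction is the easier of the two, and I would dispatch it first. Assume $C$ is calibrated and $h:\RR \to \RR$ is strictly decreasing, and set $U_h := h(C)$. Since $h$ is a bijection onto its image, the regression function associated with $U_h$ satisfies $\EE[A \mid U_h = u] = \EE[A \mid C = h^{-1}(u)] = h^{-1}(u)$ by calibration of $C$, so $\reg(U_h) = h^{-1}(U_h) = C$ almost surely. Strict monotonicity of $h^{-1}$ then gives $\reg(U_h') \geq \reg(U_h)$ if and only if $C' \geq C$ if and only if $U_h' \leq U_h$, so the two events inside \ref{eqn:rank-ece} coincide almost surely and the RCE vanishes.

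For the forward direction, I would rewrite the assumption RCE$=0$ as the almost-sure identity $G(\reg(U)) = F_U(U)$, where $F_U$ is the CDF of $U$ and $G(v) := \PP_{U'}(\reg(U') \geq v)$ is the survival function of $\reg(U)$. Since $F_U$ is non-decreasing and $G$ is non-increasing, this identity forces $\reg$ to be non-increasing on the support of $U$ up to a $U$-null set. Strictness would follow by ruling out ties: if $u_1 < u_2$ both lie in the support and $\reg(u_1) = \reg(u_2)$, then matching values of $G \circ \reg$ and $F_U$ at the two points would force $\PP(u_1 < U \leq u_2) = 0$, contradicting both points lying in the support. I would then define $g^\star$ to agree with $\reg$ on the support and extend it to a strictly decreasing function $\RR \to [0,1]$ (using monotone interpolation and, if necessary, strictly decreasing tails on the complement); calibration of $g^\star(U)$ follows from $\EE[A \mid g^\star(U)] = \reg(U) = g^\star(U)$, while uniqueness is immediate since any admissible $g^\star$ must coincide with $\reg$ on the support of $U$.

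The main obstacle is the rigorous handling of the strict monotonicity of $\reg$ on the support of $U$, particularly when that distribution has atoms or a non-interval support. The identity $G \circ \reg = F_U$ is clean but measure-theoretically delicate, and I expect some care is needed to upgrade the almost-sure non-increasing statement into an everywhere-on-support strictly decreasing statement, and to produce an extension $g^\star : \RR \to [0,1]$ that is strictly decreasing on all of $\RR$ without altering its values on the support. The rest of the argument is essentially bookkeeping once this monotonicity is in hand.
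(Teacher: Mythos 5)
Your proposal is correct and takes essentially the same route as the paper: both directions hinge on the observation that $\EE[A\mid g(U)]=\reg(U)$ for strictly monotone $g$, the calibrating transformation in the forward direction is $g^\star=\reg$ with uniqueness following because any calibrated strictly monotone transform must equal $\reg(U)$ almost surely, and the reverse direction is argued identically via $\reg(U_h)=C$. The only difference is that you spell out the tie-ruling-out argument establishing strict monotonicity of $\reg$ on the support (and the extension of $g^\star$ off the support), details the paper's proof asserts without elaboration.
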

\begin{proof}
If $U$ is rank-calibrated, the regression function 
$u\mapsto\reg(u)\hspace{-1pt}=\hspace{-1pt}\EE[A\hspace{-2pt}\mid \hspace{-2pt}U\hspace{-2pt}=\hspace{-2pt}u]\in [0,1]$ is strictly 
decreasing over all values in $U$'s range with positive density (or mass).
Moreover, $\PP(A\hspace{-2pt}=\hspace{-2pt}1\mid \reg(U)\hspace{-2pt}=\hspace{-2pt}\reg(u))\hspace{-1pt}=\hspace{-1pt}\EE[A\hspace{-2pt}\mid\hspace{-2pt} U\hspace{-2pt}=\hspace{-2pt}u]\hspace{-1pt}=\hspace{-1pt} \reg(u)$. 
Therefore, $\reg(U)$ is a calibrated confidence measure,
and $\reg$ is strictly decreasing. The uniqueness follows as $\PP(A\hspace{-2pt}=1\mid g(U))=\EE[A\hspace{-2pt}\mid\hspace{-2pt} U]\hspace{-1pt}=\hspace{-1pt}\reg(U)$ for any strictly monotone function.

On the other hand, if $C$ is calibrated, then $C\hspace{-2pt}=\hspace{-2pt}\PP(A\hspace{-2pt}=\hspace{-2pt}1\hspace{-2pt}\mid \hspace{-2pt}C)\hspace{-2pt}=\hspace{-2pt}\EE[A\hspace{-2pt}\mid\hspace{-2pt} C]$ almost surely.
For any strictly decreasing $h$, we have $\EE[A\hspace{-2pt}\mid \hspace{-2pt}U_h\hspace{-1pt}]=\hspace{-1pt}\EE[A\hspace{-2pt}\mid\hspace{-2pt} C]\hspace{-1pt}=\hspace{-1pt}C$ almost surely because $h$ is a one-to-one map. Therefore, for any given $c$ and 
 uncertainty value $u_h\hspace{-1pt}=\hspace{-1pt}h(c)$, it holds almost surely that
\begin{align}
    &U_h=h(C) \leq u_h= h(c)\;\Longleftrightarrow\;C\geq c\\
    \;\Longleftrightarrow\; &\EE[A\mid C]\geq \EE[A\mid C=c]\\
    \;\Longleftrightarrow\; &\EE[A\mid U_h]\geq \EE[A\mid U_h=u_h],
\end{align}
which implies $U_h$ is rank-calibrated.
\end{proof}

Theorem~\ref{thm:equiv} implies
that, for a binary correctness function, 
one can construct a calibrated confidence measure from an uncertainty measure with monotone transformations if and only if the uncertainty measure is rank-calibrated.
However, \ref{eqn:rank-ece} and ECE
gauge different quantities:
ECE captures the absolute difference between the predicted and true probabilities, 
while \ref{eqn:rank-ece} 
reflects the deviation from a monotonic correspondence between uncertainty and the expected correctness.
These two notions are generally not directly comparable. 

For example, consider the special case where a continuous-valued confidence measure $C$ is completely uninformative and the regressed correctness $\overline \reg\hspace{-2pt} :c\mapsto \EE[A\hspace{-2pt} \mid\hspace{-2pt} C=c]$ is a constant for all confidence levels $c$. 
Then, the RCE defined in \eqref{eqn:rank-ece-c} reports a large value of $1/2$, reflecting its poor indicativeness. However, the ECE can be large or small depending on the averaged distance between $C$'s output and $\overline \reg$. More generally, we find no relation in the results of ECE and \ref{eqn:rank-ece} through the following result, 
proved in Appendix~\ref{app:prop1_prf}.

\begin{proposition}\label{prop:erce-vs-ece}
Let the correctness function $A\hspace{-2pt}\in\hspace{-2pt}\{0,1\}$ be binary. For any $\alpha,\beta\in (0,1/2]$, there is a confidence measure $C$ such that its \ref{eqn:rank-ece} is $\alpha$ while the ECE is $\beta$. 
\end{proposition}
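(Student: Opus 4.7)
The plan is to exhibit an explicit two-point construction for $C$ in which the regression function $\overline{\reg}$ is strictly \emph{anti}-monotone on the support, with one parameter (a ``gap'' governed by $\beta$) controlling how far $\overline{\reg}$ deviates from the identity (hence ECE), and another parameter (the mixing weight $p$) controlling how balanced the two atoms are (hence RCE). The key idea is that an anti-monotone setup lets the two metrics be tuned essentially independently, and a symmetric placement around $1/2$ keeps all the required probabilities in $[0,1]$ over the entire target range $(0,1/2]^2$.

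Concretely, I would take $C$ supported on $c_1 := 1/2 - \beta/2$ and $c_2 := 1/2 + \beta/2$, assign masses $p$ and $1-p$, and set $\overline{\reg}(c_1) := 1/2 + \beta/2$ and $\overline{\reg}(c_2) := 1/2 - \beta/2$. Because $\beta \leq 1/2$, these conditional correctness probabilities lie in $[1/4, 3/4]$, so a binary $A$ realizing them genuinely exists (say, by drawing it conditionally on $C$). From this setup, the ECE is a one-line check: each $|\overline{\reg}(c_i) - c_i|$ equals $\beta$, so $\EE_C[|\overline{\reg}(C) - C|] = \beta$ regardless of $p$.

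For the RCE, I would evaluate the two rank-type quantities at each atom. The inversion $c_1 < c_2$ versus $\overline{\reg}(c_1) > \overline{\reg}(c_2)$ swaps the comparisons: at $c_1$ one has $\PP_{C'}(C' \geq C) = 1$ and $\PP_{C'}(\overline{\reg}(C') \geq \overline{\reg}(C)) = p$, while at $c_2$ the two probabilities are $1-p$ and $1$, respectively. Summing the two absolute differences with weights $p$ and $1-p$ collapses to $2p(1-p)$. Solving $2p(1-p) = \alpha$ picks $p = (1 - \sqrt{1-2\alpha})/2 \in (0, 1/2]$, which is well defined for every $\alpha \in (0, 1/2]$.

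No step poses a real technical obstacle; the substantive step is identifying the anti-monotone two-atom construction and noting that symmetric placement around $1/2$ allows both knobs ($\beta$ and $p$) to sweep out $(0,1/2]$ while keeping every conditional probability valid. The only minor subtlety to keep in mind during write-up is that the RCE's defining expectation is taken with respect to $C$, and the inner probabilities are over an independent copy $C'$, so the absolute-value terms must be computed atom-by-atom and then averaged against the law of $C$ rather than of $C'$.
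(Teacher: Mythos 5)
Your construction is correct, and it takes a genuinely different route from the paper's proof. The paper makes the regression function \emph{constant} ($\overline{\reg}(C)\equiv 1/2+\beta$, i.e.\ a completely uninformative confidence measure): ECE is then pinned to $\beta$ by placing the atoms of $C$ symmetrically about $1/2$, while the RCE in \eqref{eqn:rank-ece-c} reduces to $\tfrac{1}{2}\bigl(1-\sum_k p_k^2\bigr)$ and is tuned to $\alpha$ by choosing the number of atoms $K\geq (1-2\alpha)^{-1}$ and their masses; the boundary value $\alpha=1/2$ requires a separate continuous (uniform) case. You instead use a two-atom \emph{anti-monotone} construction, $c_{1,2}=1/2\mp\beta/2$ with $\overline{\reg}(c_{1,2})=1/2\pm\beta/2$, so that ECE $=\beta$ identically in the mixing weight $p$, and the rank inversion gives RCE $=p(1-p)+(1-p)p=2p(1-p)$, solvable as $p=(1-\sqrt{1-2\alpha})/2\in(0,1/2]$ for every $\alpha\in(0,1/2]$; I verified the atom-by-atom probabilities ($1-p$ at $c_1$, $p$ at $c_2$) and the validity of the conditional Bernoulli law since $1/2\pm\beta/2\in[1/4,3/4]$. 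Your version is more economical: a single two-point family covers the whole range $(0,1/2]^2$ with a closed-form choice of $p$ and no case split at $\alpha=1/2$. What the paper's version buys in exchange is a tie-in with its surrounding discussion: the constant-regression construction exhibits the ``uninformative confidence'' phenomenon (RCE $=1/2$ in the continuous case) while showing its ECE can still be arbitrary, which is the rhetorical point being illustrated in Sec.~\ref{sec:comparison}; your example instead shows that a confidence measure whose ranking is actively \emph{inverted} to a tunable degree can still have any prescribed ECE, which is an equally legitimate (arguably sharper) separation of the two notions.
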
 

\vspace{-1mm}
\subsection{Empirical RCE \& Indication Diagram}\label{sec:rce+diagram}
Now, as in Sec.~\ref{uqlm},
consider a dataset $\{(u_i, a_i)\}_{i=1}^n$ of uncertainty and correctness values computed over a benchmark dataset where  each
$u_i\hspace{-1pt}=\hspace{-1pt}U(\bx;\widehat \by_i)$, $a_i\hspace{-1pt}=\hspace{-1pt}A(\bx_i ;\widehat \by_i)$, and $\widehat \by_i$ is a response generated by the LM.
The true value of \ref{eqn:rank-ece} is unknown, as it refers to an average over the distribution from which the data are drawn.

\paragraph{Empirical RCE.} 
The  \ref{eqn:rank-ece} 
involves the unknown
probabilities 
$\PP(U\leq u)$ and $\PP(\reg(U)\geq \reg(u))$, which generally need to be estimated. 
Estimating the latter is challenging as the regression function is also unknown and needs to be estimated. 

To address this, we adopt a piecewise constant regression or binning strategy, 
as in non-parametric statistics~\cite{tsybakov2009}.
First, we group the uncertainty values $\{u_i\}_{i=1}^n$ into $B$
equal-mass intervals, each containing $\lceil n/B\rceil$---or, when needed, $\lfloor n/B\rfloor$---elements. 
The boundaries 
of the $b$-th ($1\leq b\leq B$) bin 
are the $(b-1)/B$-th and $b/B$-th quantiles of $(u_i)_{i=1}^n$. 
Let $\cI_b\subseteq \{1,\dots,n\}$ be the set
of indices of the datapoints whose uncertainty values fall into the $b$-th bin. 
The expected correctness level over the $b$-th bin can be estimated as 
\vspace{-2mm}
\begin{equation*}
    {\rm crc}_b:=\frac{1}{|\cI_b|}\sum_{i\in \cI_b}a_i,
\end{equation*}
when $|\cI_b|>0$. From now on, we will interpret $0/0:=0$; and we extend to $|\cI_b|=0$ in this way.
Clearly, 
${\rm crc}_b$ is an unbiased estimator of $\EE[A\hspace{-1pt}\mid \hspace{-1pt}U\hspace{-2pt}\in\hspace{-2pt} \text{ the $i$-th bin}]$, which approximates $\reg(U)$ accurately given a narrow bin and abundant data. 
We similarly estimate the average uncertainty within the $b$-th bin as
\begin{equation*}
    {\rm uct}_b=\frac{1}{|\cI_b|}\sum_{i\in \cI_b}u_i.
\end{equation*}

 As ${\rm crc}_b$ and ${\rm uct}_b$ estimate the per-bin averages of $\reg(U)$ and $U$, 
for each $b$,
we  estimate $\PP(U\leq u_i)$ and $\PP(\reg(U)\geq \reg(u_i))$ for $i\in \cI_b$ as follows:
\vspace{-1.5mm}
\begin{equation*}
\vspace{-1.5mm}
\begin{aligned}    &\widehat\PP(\reg(U)\hspace{-1pt}\geq\hspace{-1pt} \reg(u_i))\hspace{-2pt}:=\hspace{-2pt}\frac{1}{B-1}\sum_{b^\prime \neq b}\one[{\rm crc}_{b^\prime}\hspace{-1pt}\geq\hspace{-1pt} {\rm crc}_{b}],\vspace{-1mm}\\
    &\widehat \PP(U\hspace{-1pt}\leq\hspace{-1pt} u_i)\hspace{-2pt}:=\hspace{-2pt}\frac{1}{B-1}\sum_{b^\prime \neq b}\one[{\rm uct}_{b^\prime }\hspace{-1pt}\leq\hspace{-1pt} {\rm uct}_{b}].
    \end{aligned}
    \vspace{-1.5mm}
\end{equation*}

A rank-calibrated measure has $\widehat \PP(U\hspace{-1pt}\leq\hspace{-1pt} u_i)\approx\widehat\PP(\reg(U)\hspace{-1pt}\geq\hspace{-1pt} \reg(u_i)) $ for all $1\leq i\leq n$.  We thus compute the empirical Rank-Calibration Error esti-\\
\noindent mator \eqref{eqn:empirical-erce} by taking an average of the per-bin rank differences of correctness and uncertainty values. 
More precisely,
\begin{equation}\label{eqn:empirical-erce}
    \frac{1}{n}\sum_{i=1}^n\left|\widehat\PP(\reg(U)\hspace{-1pt}\geq\hspace{-1pt} \reg(u_i))\hspace{-2pt}-\hspace{-2pt}\widehat\PP(U\hspace{-1pt}\leq\hspace{-1pt} u_i)\right|.\tag{Empirical RCE}
\end{equation}
The difference between
the estimated probabilities for a given bin represent the ranking gap
(\ie, \textcolor{blue}{blue} and \textcolor{shallow red}{shallow red} areas in Fig.~\ref{fig:chat_trivia}). 
We use the
\ref{eqn:empirical-erce} as the main metric to assess uncertainty and confidence measures in the paper.

\paragraph{Indication diagram.}
Similar to reliability diagrams representing miscalibration~\citep{lichtenstein1977calibration,niculescu2005predicting}, we can also visualize
rank-miscalibration in diagrams (\eg, Fig.~\ref{fig:chat_trivia}).
In particular, we plot the relative percentile (between $0\%$ and $100\%$) of
the expected correctness level (\ie, $\reg(U)$) as a function of the relative percentile of uncertainty (\ie, $U$). 
We term these plots \emph{indication diagrams}.
If a measure is rank-calibrated---\ie, if \eqref{eqn:eqiv-dist} holds---then
the indication diagram should lie on the anti-diagonal line ${\rm percent}(\reg(u))\hspace{-1pt}=\hspace{-1pt}1\hspace{-2pt}-\hspace{-2pt}{\rm percent}(u)$. 
Deviations from this line represent rank-miscalibration.

\paragraph{Advantages of rank-calibration.}
We summarize the advantages of the rank-calibration framework by revising the desiderata from Sec.~\ref{sec:desderata}.
First, the empirical RCE does not require any thresholding of the correctness values. 
Second, rank-calibration assesses the monotonicity of uncertainty values by leveraging relative ranks, 
which makes it independent of the output range. 
Third, similar to ECE, the \ref{eqn:rank-ece} is not directly tied to the generation performance of the LM. 
Finally, our assessment is practical for any uncertainty/confidence measures.

\begin{table*}[!t]
\vspace{-10mm}
    \centering
    \resizebox{1.8\columnwidth}{!}{%
\begin{tabular}{lllllllll} \toprule
 Dataset & Correctness & Temperature  & $U_{\rm Ecc}$ & $U_{\rm Deg}$ & $U_{\rm EigV}$ & $U_{\rm NLL}$ & $U_{\rm SE}$ & $C_{\rm Verb}$ \\ \midrule
 \multirow[c]{8}{*}{nq-open} & \multirow[c]{2}{*}{bert} & 0.6 & 0.199$_{\pm 0.040}$ & \bfseries 0.046$_{\pm 0.008}$ & 0.052$_{\pm 0.010}$ & 0.101$_{\pm 0.015}$ & 0.062$_{\pm 0.010}$ & nan \\ \cline{3-9}
  &  & 1.0 & 0.236$_{\pm 0.033}$ & \bfseries 0.035$_{\pm 0.008}$ & 0.038$_{\pm 0.007}$ & 0.097$_{\pm 0.017}$ & 0.055$_{\pm 0.012}$ & nan \\ \cline{2-9}
  & \multirow[c]{2}{*}{meteor} & 0.6 & 0.190$_{\pm 0.039}$ & \bfseries 0.062$_{\pm 0.008}$ & 0.067$_{\pm 0.010}$ & 0.176$_{\pm 0.018}$ & 0.072$_{\pm 0.009}$ & nan \\ \cline{3-9}
  &  & 1.0 & 0.224$_{\pm 0.034}$ & \bfseries 0.044$_{\pm 0.006}$ & 0.046$_{\pm 0.007}$ & 0.209$_{\pm 0.023}$ & 0.074$_{\pm 0.015}$ & nan \\ \cline{2-9}
  & \multirow[c]{2}{*}{rougeL} & 0.6 & 0.198$_{\pm 0.039}$ & \bfseries 0.053$_{\pm 0.011}$ & 0.057$_{\pm 0.010}$ & 0.167$_{\pm 0.013}$ & 0.060$_{\pm 0.012}$ & nan \\ \cline{3-9}
  &  & 1.0 & 0.227$_{\pm 0.035}$ & 0.035$_{\pm 0.007}$ & \bfseries 0.033$_{\pm 0.006}$ & 0.211$_{\pm 0.021}$ & 0.069$_{\pm 0.016}$ & nan \\ \cline{2-9}
  & \multirow[c]{2}{*}{rouge1} & 0.6 & 0.199$_{\pm 0.039}$ & \bfseries 0.054$_{\pm 0.010}$ & 0.057$_{\pm 0.010}$ & 0.167$_{\pm 0.014}$ & 0.061$_{\pm 0.013}$ & nan \\ \cline{3-9}
  &  & 1.0 & 0.227$_{\pm 0.035}$ & 0.034$_{\pm 0.007}$ & \bfseries 0.033$_{\pm 0.006}$ & 0.212$_{\pm 0.021}$ & 0.069$_{\pm 0.015}$ & nan \\ \cline{1-9}
 \multirow[c]{8}{*}{squad} & \multirow[c]{2}{*}{bert} & 0.6 & 0.208$_{\pm 0.033}$ & 0.065$_{\pm 0.014}$ & 0.075$_{\pm 0.017}$ & \bfseries 0.048$_{\pm 0.007}$ & 0.063$_{\pm 0.012}$ & nan \\ \cline{3-9}
  &  & 1.0 & 0.276$_{\pm 0.039}$ & 0.067$_{\pm 0.011}$ & 0.063$_{\pm 0.010}$ & \bfseries 0.038$_{\pm 0.006}$ & 0.098$_{\pm 0.012}$ & nan \\ \cline{2-9}
  & \multirow[c]{2}{*}{meteor} & 0.6 & 0.216$_{\pm 0.038}$ & 0.303$_{\pm 0.026}$ & 0.265$_{\pm 0.022}$ & \bfseries 0.063$_{\pm 0.013}$ & 0.182$_{\pm 0.029}$ & nan \\ \cline{3-9}
  &  & 1.0 & 0.300$_{\pm 0.046}$ & 0.292$_{\pm 0.035}$ & 0.250$_{\pm 0.027}$ & \bfseries 0.064$_{\pm 0.011}$ & 0.274$_{\pm 0.021}$ & nan \\ \cline{2-9}
  & \multirow[c]{2}{*}{rougeL} & 0.6 & 0.239$_{\pm 0.036}$ & 0.177$_{\pm 0.026}$ & 0.143$_{\pm 0.020}$ & \bfseries 0.052$_{\pm 0.011}$ & 0.127$_{\pm 0.020}$ & nan \\ \cline{3-9}
  &  & 1.0 & 0.304$_{\pm 0.036}$ & 0.179$_{\pm 0.033}$ & 0.137$_{\pm 0.024}$ & \bfseries 0.053$_{\pm 0.012}$ & 0.210$_{\pm 0.027}$ & nan \\ \cline{2-9}
  & \multirow[c]{2}{*}{rouge1} & 0.6 & 0.238$_{\pm 0.037}$ & 0.183$_{\pm 0.027}$ & 0.148$_{\pm 0.022}$ & \bfseries 0.053$_{\pm 0.010}$ & 0.129$_{\pm 0.021}$ & nan \\ \cline{3-9}
  &  & 1.0 & 0.303$_{\pm 0.035}$ & 0.185$_{\pm 0.033}$ & 0.143$_{\pm 0.025}$ & \bfseries 0.053$_{\pm 0.012}$ & 0.213$_{\pm 0.026}$ & nan \\ \cline{1-9}
 \multirow[c]{8}{*}{triviaqa} & \multirow[c]{2}{*}{bert} & 0.6 & 0.140$_{\pm 0.024}$ & 0.062$_{\pm 0.016}$ & 0.061$_{\pm 0.015}$ & \bfseries 0.020$_{\pm 0.004}$ & 0.027$_{\pm 0.007}$ & nan \\ \cline{3-9}
  &  & 1.0 & 0.213$_{\pm 0.030}$ & 0.025$_{\pm 0.006}$ & 0.034$_{\pm 0.006}$ & \bfseries 0.014$_{\pm 0.002}$ & 0.036$_{\pm 0.006}$ & nan \\ \cline{2-9}
  & \multirow[c]{2}{*}{meteor} & 0.6 & 0.145$_{\pm 0.027}$ & 0.067$_{\pm 0.017}$ & 0.064$_{\pm 0.015}$ & \bfseries 0.034$_{\pm 0.009}$ & 0.075$_{\pm 0.016}$ & nan \\ \cline{3-9}
  &  & 1.0 & 0.206$_{\pm 0.032}$ & \bfseries 0.035$_{\pm 0.007}$ & 0.046$_{\pm 0.005}$ & 0.049$_{\pm 0.008}$ & 0.084$_{\pm 0.007}$ & nan \\ \cline{2-9}
  & \multirow[c]{2}{*}{rougeL} & 0.6 & 0.141$_{\pm 0.021}$ & 0.062$_{\pm 0.014}$ & 0.061$_{\pm 0.014}$ & \bfseries 0.024$_{\pm 0.005}$ & 0.034$_{\pm 0.005}$ & nan \\ \cline{3-9}
  &  & 1.0 & 0.204$_{\pm 0.035}$ & 0.027$_{\pm 0.006}$ & 0.040$_{\pm 0.004}$ & \bfseries 0.022$_{\pm 0.002}$ & 0.051$_{\pm 0.007}$ & nan \\ \cline{2-9}
  & \multirow[c]{2}{*}{rouge1} & 0.6 & 0.141$_{\pm 0.021}$ & 0.062$_{\pm 0.014}$ & 0.062$_{\pm 0.013}$ & \bfseries 0.024$_{\pm 0.005}$ & 0.034$_{\pm 0.006}$ & nan \\ \cline{3-9}
  &  & 1.0 & 0.203$_{\pm 0.035}$ & 0.027$_{\pm 0.006}$ & 0.040$_{\pm 0.004}$ & \bfseries 0.022$_{\pm 0.002}$ & 0.051$_{\pm 0.007}$ & nan \\ \bottomrule
\end{tabular}
}
    \caption{RCE results for Llama-2-chat with various experimental configurations.}
    \label{tab:llama-2-chat}
    \vspace{-4mm}
\end{table*}

\vspace{-1mm}
\section{Experiments}
\vspace{-1mm}
We provide more comprehensive experiments and justify the advantages of our assessment.

\subsection{Experiment Setup} 
We consider both open-source and commercial LMs, including 
\textit{Llama-2-7b}, \textit{Llama-2-7b-chat~\citep{touvron2023llama2}} (an instruction fine-tuned version of \textit{Llama-2-7b}), and \textit{GPT-3.5-turbo}~\citep{ouyang2022training}. See Appendix~\ref{app:model} for more details.

We conduct assessments on the validation sets of four datasets: 
TriviaQA~\citep{joshi-etal-2017-triviaqa}, Natural Questions~\citep{kwiatkowski-etal-2019-natural}, SQuAD-1~\citep{rajpurkar2016squad}, and Meadow~\citep{wang-etal-2020-cord}. 
For assessment over the open-ended and challenging 
Meadow, we only use the more advanced model GPT-3.5-turbo. 
To account for randomness in the evaluation, 
we repeat experiments bootstrapping each 
dataset 20 times. 
See more details of datasets in Appendix~\ref{app:datasets}. 

We use multiple correctness functions, including the \textit{Rouge-L} score,
\textit{BERT similarity}, and \textit{ChatGPT  evaluation}, all widely applied before \citep{kuhn2023semantic,xiong2023llms}. 
ChatGPT correctness is only used for GPT-3.5-turbo with temperature 1.0. See Appendix~\ref{app:correctness} for more details.

The uncertainty/confidence measures to be assessed are the same as in Sec.~\ref{sec:case-study}, (\ie, $U_{\rm NLL}$, $U_{\rm SE}$, $U_{\rm Ecc}$, $U_{\rm Deg}$, $U_{\rm EigV}$, and $C_{\rm Verb}$). 
We first illustrate that our proposed assessment has broad applicability and granular interpretability.
Furthermore, we qualitatively show that uncertainty measures with lower RCE values reliably indicate correctness. Finally, we study robustness by empirically checking the impact of temperature and correctness functions on RCE~\citep{demvsar2006statistical}. 
 More results for different configurations are given in Table~\ref{tab:all}.

\subsection{Broader Applicability}
Previous assessments have some limitations 
in open-ended tasks. First, as shown in Fig.~\ref{fig:eval_meadow} [top], the correctness distribution in open-ended tasks (\eg, the  Meadow dataset) is less concentrated around zero and one compared to the TriviaQA correctness distribution. Consequently, if correctness were binarized with thresholding, the assessed results would be highly impacted by the threshold choice, as illustrated in Fig.~\ref{fig:eval_meadow} [bottom]. As such, using continuous-valued correctness scores is common in open-ended tasks~\citep{cohan2018discourseaware, Uppalapati2023ACS}. 
Since \ref{eqn:rank-ece} does not require thresholding, our rank-calibration assessment does not suffer from the above issue.

\begin{figure}[ht]
\centering
\hspace{-4mm}
  \includegraphics[width=0.53\linewidth]{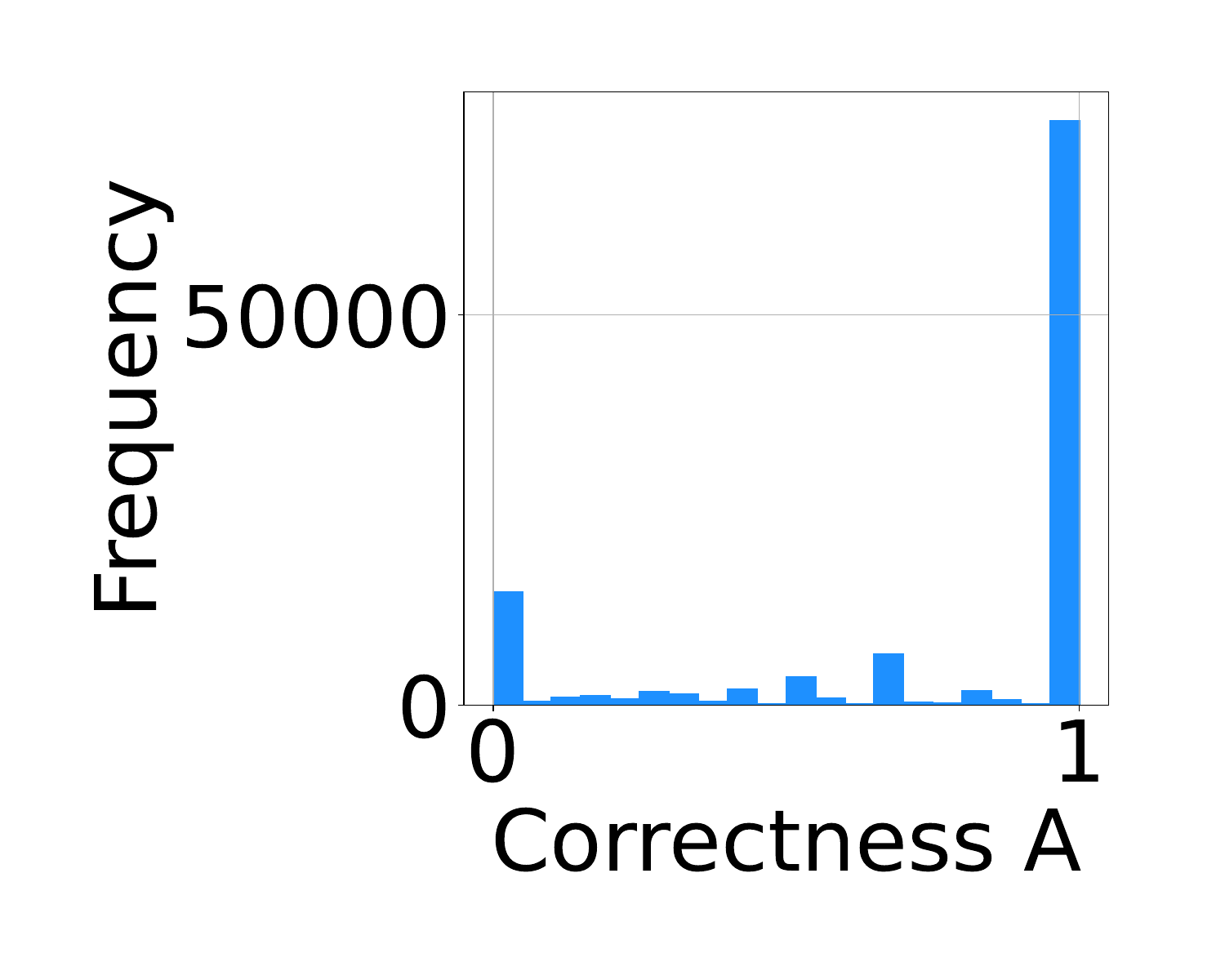}
  \hspace{-3mm}
  \includegraphics[width=0.52\linewidth]{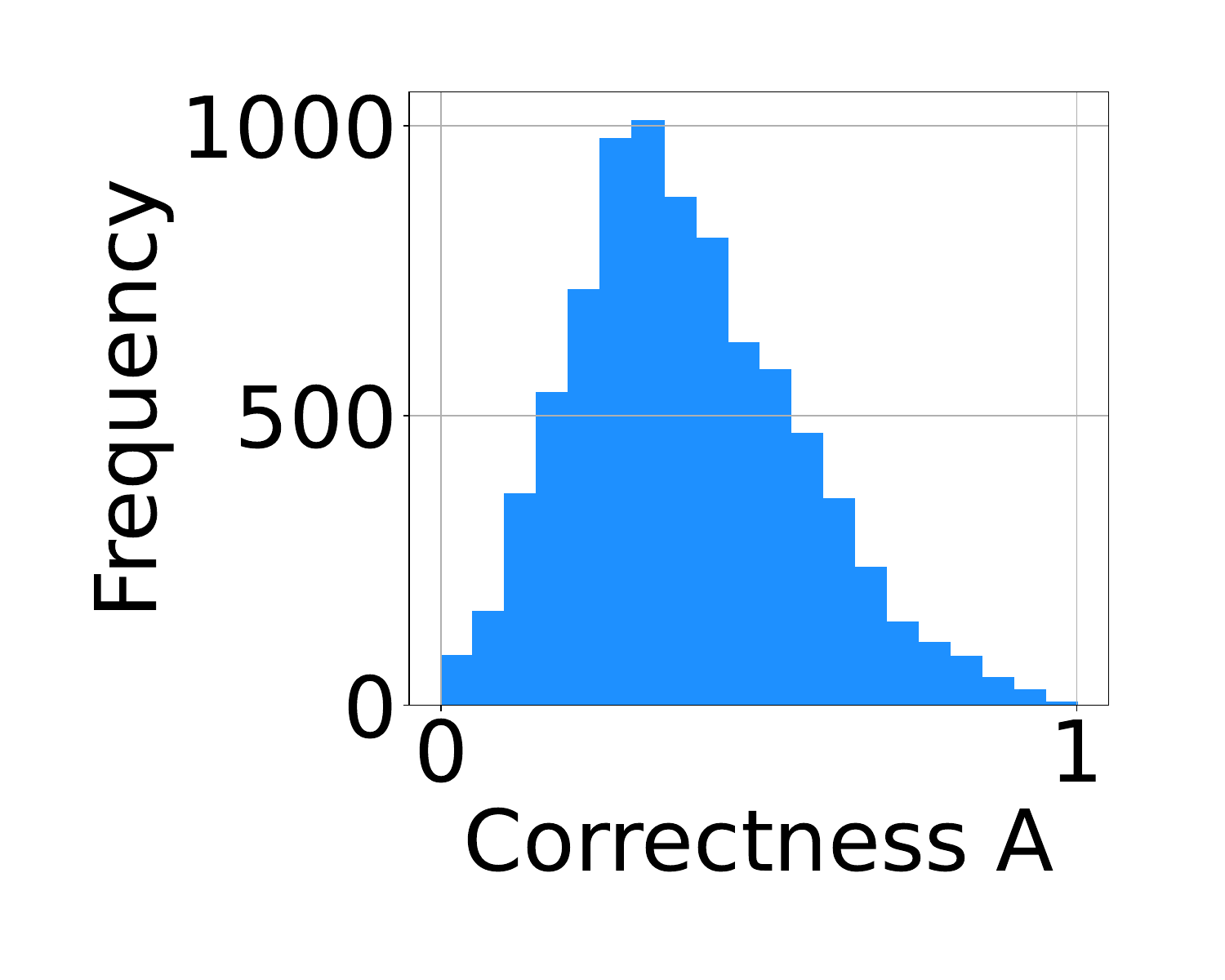}
  \vspace{-5mm}
  \hspace{-4mm}
  \\
  \includegraphics[width=0.8\linewidth]{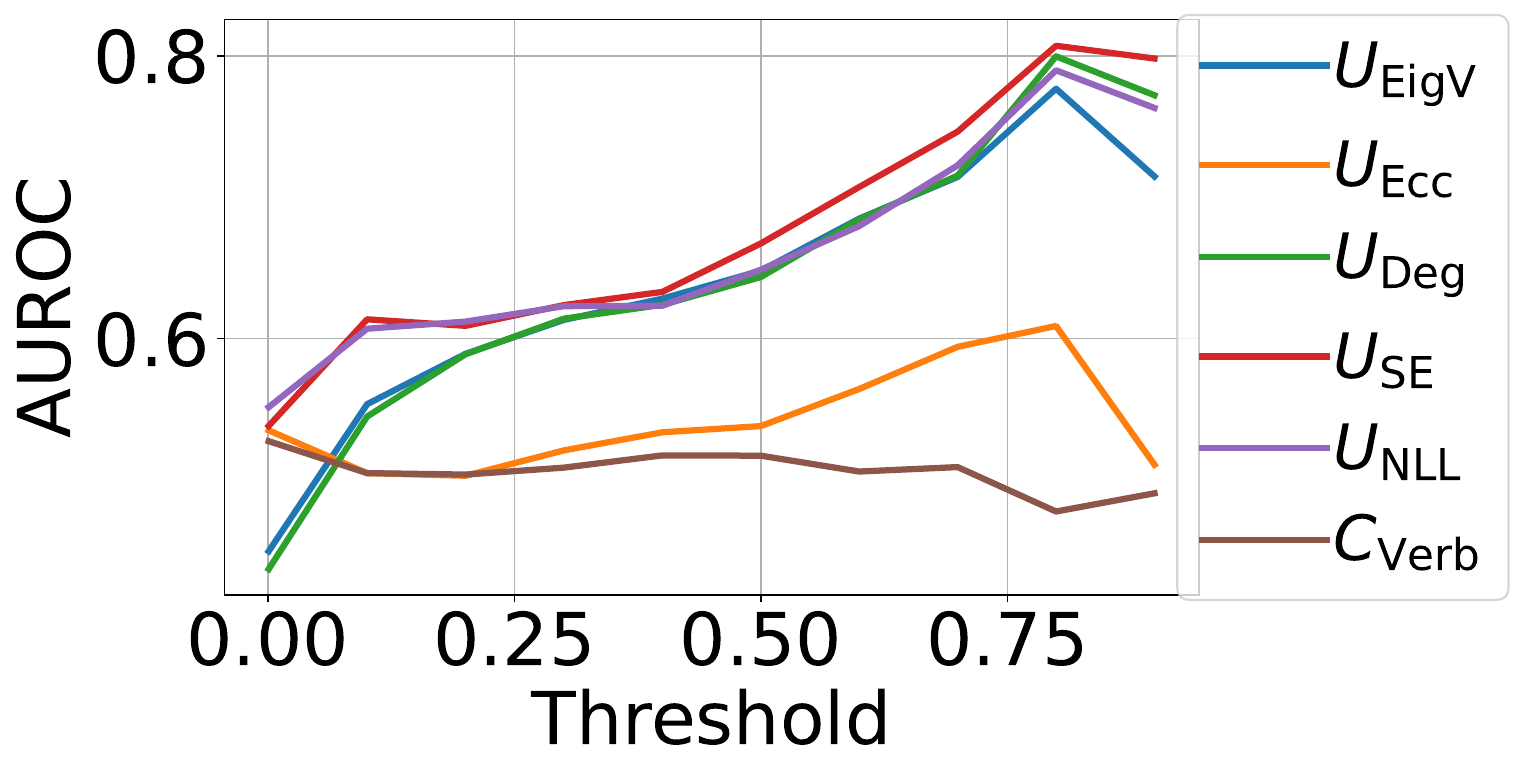}
\caption{Top: Rouge-L correctness distributions of GPT-3.5-turbo on the {\em TriviaQA (left)} and {\em Meadow (right)} benchmarks. Bottom: AUROCs of assessed measures
for GPT-3.5-turbo on {\em Meadow}, with Rouge-L correctness and various thresholds. 
}
\vspace{-5mm}
\label{fig:eval_meadow}
\end{figure}

\subsection{Granular Interpretability} 
Beyond the rank-calibration error, 
the indication diagrams can be instrumental in understanding the performance of uncertainty measures. 
We show the indication diagrams of $U_{\rm NLL}$ and $U_{\rm SE}$  for GPT-3.5-turbo  on TriviaQA in Fig.~\ref{fig:chat_trivia}. 
More indication diagrams can be found in the Appendix.

First, indication diagrams consistently reflect the effect of rank-miscalibration. The indication diagram of $U_{\rm NLL}$ (Fig.~\ref{fig:chat_trivia} [left]) has more overlap between the red and blue bars, compared to that of $U_{\rm Ecc}$ (Fig.~\ref{fig:chat_trivia} [right]), reflecting a lower RCE level (0.038 with $U_{\rm NLL}$ v.s. 0.151 with $U_{\rm Ecc}$). 
The high overlap suggests that the relative ranks of uncertainty values are more aligned with those of correctness levels, leading to better rank-calibration.

Second, indication diagrams can shed light onto which uncertainty levels may be problematic. 
For example, in Fig.~\ref{fig:chat_trivia} [right], 
we observe that
for an uncertainty in the top 75th percentile, 
$U_{\rm Ecc}$ tends to be overpessimistic: 
$U_{\rm Ecc}$ assigns high uncertainty values to high-quality generations.

\subsection{Qualitative Illustration}


To illustrate the effectiveness of the RCE as an evaluation metric for uncertainty measures, we present two TriviaQA instances and contrast $U_{\rm NLL}$
(having \ref{eqn:rank-ece} 0.037) with $U_{\rm SE}$ (having \ref{eqn:rank-ece} 0.051) for GPT-3.5. 
Here, $\bx$ is the question input, $\by$ is the answer in the dataset, 
$\widehat \by$ is the LM response, and $\PP(U \hspace{-1pt}\leq \hspace{-1pt} u)$ signifies the relative magnitudes of LM's uncertainty level according to $U_{\rm NLL}$ and $U_{\rm SE}$.

\noindent \tcbset{colback=blue!5!white,colframe=blue!75!black,fonttitle=\bfseries,width=0.45\textwidth,nobeforeafter}
\begin{tcolorbox}[box align=center]
{\fontfamily{qcr}\selectfont 
    \noindent $\bx$: On September 28th, NASA announced that what had been detected on Mars? \\
    $\by$: flowing water \\
    $\widehat\by$: Possible signs of life \\
    $\PP(U_{\rm SE} \leq u)$: 0.813\\
    $\PP(U_{\rm NLL} \leq u)$: 0.930
}
\end{tcolorbox}

\noindent
\tcbset{colback=red!5!white,colframe=red!75!black,fonttitle=\bfseries,width = 0.45\textwidth}
\begin{tcolorbox}[box align=center]
{\fontfamily{qcr}\selectfont 
    \noindent $\bx$: ``Feel Like Making Love'' and ``The First Time Ever I Saw Your Face'' were hit singles for which female artist?  \\
    $\by$: roberta flack\\
    $\widehat \by$: Roberta Flack \\
    $\PP(U_{\rm SE} \leq u)$: 0.864\\
    $\PP(U_{\rm NLL} \leq u)$: 0.046
}
\end{tcolorbox}

\vspace{1mm}
In the first instance, the generation is factually incorrect and
$U_{\rm NLL}$ assigns a high uncertainty value to the response, \ie $\PP(U_{\rm NLL} \hspace{-1pt}\leq \hspace{-1pt}u)\approx 1$. 
In the second scenario, where the generation is correct, $U_{\rm NLL}$ succeeds in providing a lower uncertainty level, \ie $\PP(U_{\rm NLL} \hspace{-1pt}\leq\hspace{-1pt} u)\hspace{-1pt}\approx\hspace{-1pt} 0$. 
Yet, $U_{\rm SE}$ 
assigns \emph{a lower uncertainty to a poorer generation and a higher uncertainty to a better generation}! 
These instances showcase 
that $U_{\rm NLL}$ is more reliable than $U_{\rm SE}$ here, which is consistent with the RCE-assessed results. 
Additional qualitative results are given in Table~\ref{tab:all_qual}.



\subsection{Post-hoc Recalibration}
Recalibrating 
uncertainty/confidence measures with poor rank-calibration can be of interest; for ECE, this is sometimes known as Mincer-Zamowitz regression \cite{mincer1969evaluation}. 
As discussed in Sec.~\ref{sec:comparison}, an ECE-calibrated 
 measure is also \ref{eqn:rank-ece}-calibrated. However, \ref{eqn:rank-ece} is invariant to monotone transformations, 
 which means that approaches like
 Platt scaling~\citep{platt1999probabilistic} and isotonic regression~\citep{zadrozny2002transforming} will not improve rank-calibration.
 Therefore, we suggest using histogram binning (or, piecewise constant regression), which includes non-monotone transforms~\citep{zadrozny2001obtaining}. 
 Table~\ref{tab:all-SE-GPT-calibrated} and Fig.~\ref{fig:gpt_1.0_cal_meteor} and~\ref{fig:gpt_1.5_cal_triviaqa} list the RCE results of $U_{\rm SE}$ for GPT-3.5-turbo before and after calibration. 
We observe the calibrated measure is significantly better rank-calibrated, showing the effectiveness of this strategy. 
 See the more experimental details and results in Appendix~\ref{app:recalibrastion}.

\subsection{Robustness Analysis}
We conduct ablation studies to analyze the robustness of our assessment to key hyperparameters, including temperatures, correctness scores, and sample sizes. 
We further propose a method to make robust comparisons between uncertainty measures via the \textit{Critical Difference (CD) Diagram} ~\citep{demvsar2006statistical}. 
Detailed information and results are in Appendix~\ref{sec:robustness}.

\begin{table}
    \centering
    \resizebox{0.9\columnwidth}{!}{%
    \small
\begin{tabular}{lllll} \toprule Dataset & Correctness & Temperature  & $U_{\rm SE}$ & $U_{\rm SE, cal}$  \\ \midrule 
\multirow[c]{4}{*}{meadow} & bert & 1.0 &  0.177$_{\pm 0.027}$ & \bfseries 0.083$_{\pm 0.016}$ \\  \cline{2-5} 
& meteor & 1.0 & 0.132$_{\pm 0.018}$ & \bfseries 0.066$_{\pm 0.015}$ \\ \cline{2-5}
& rougeL & 1.0 & 0.113$_{\pm 0.022}$ & \bfseries 0.063$_{\pm 0.014}$ \\ \cline{2-5}
& rouge1 & 1.0 & 0.113$_{\pm 0.018}$ & \bfseries 0.061$_{\pm 0.012}$ \\ \cline{1-5}
\multirow[c]{4}{*}{nq-open} & bert & 1.0 & 0.050$_{\pm 0.007}$ & \bfseries 0.026$_{\pm 0.007}$ \\ \cline{2-5}
& meteor & 1.0 & 0.060$_{\pm 0.009}$ & \bfseries 0.033$_{\pm 0.011}$ \\ \cline{2-5}
& rougeL & 1.0 & 0.052$_{\pm 0.008}$ & \bfseries 0.030$_{\pm 0.010}$ \\ \cline{2-5}
& rouge1 & 1.0 & 0.051$_{\pm 0.008}$ & \bfseries 0.029$_{\pm 0.010}$ \\ \cline{1-5}
\multirow[c]{4}{*}{squad} & bert & 1.0 &   0.113$_{\pm 0.013}$ & \bfseries 0.050$_{\pm 0.013}$ \\ \cline{2-5}
& meteor & 1.0 & 0.086$_{\pm 0.014}$ & \bfseries 0.046$_{\pm 0.010}$ \\ \cline{2-5}
& rougeL & 1.0 & 0.100$_{\pm 0.011}$ & \bfseries 0.037$_{\pm 0.008}$ \\ \cline{2-5}
& rouge1 & 1.0 & 0.103$_{\pm 0.011}$ & \bfseries 0.039$_{\pm 0.007}$ \\ \cline{1-5}
\multirow[c]{12}{*}{triviaqa} & \multirow[c]{3}{*}{bert} & 0.5 & 0.052$_{\pm 0.009}$ & \bfseries 0.030$_{\pm 0.010}$ \\ \cline{3-5}
&  & 1.0 & 0.052$_{\pm 0.012}$ & \bfseries 0.027$_{\pm 0.008}$ \\ \cline{3-5}
&  & 1.5 & 0.081$_{\pm 0.009}$ & \bfseries 0.029$_{\pm 0.007}$ \\ \cline{2-5}
& \multirow[c]{3}{*}{meteor} & 0.5 & 0.234$_{\pm 0.019}$ & \bfseries 0.058$_{\pm 0.015}$ \\ \cline{3-5}
&  & 1.0 &  0.209$_{\pm 0.012}$ & \bfseries 0.047$_{\pm 0.014}$ \\ \cline{3-5}
&  & 1.5 & 0.176$_{\pm 0.015}$ & \bfseries \bfseries 0.036$_{\pm 0.012}$ \\ \cline{2-5}
& \multirow[c]{3}{*}{rougeL} & 0.5 & 0.050$_{\pm 0.008}$ & \bfseries 0.028$_{\pm 0.007}$ \\ \cline{3-5}
&  & 1.0 & 0.059$_{\pm 0.009}$ & \bfseries 0.026$_{\pm 0.007}$ \\ \cline{3-5}
&  & 1.5 & 0.104$_{\pm 0.007}$ & \bfseries 0.028$_{\pm 0.006}$ \\ \cline{2-5}
& \multirow[c]{3}{*}{rouge1} & 0.5 & 0.050$_{\pm 0.008}$ & \bfseries 0.028$_{\pm 0.006}$ \\ \cline{3-5}
&  & 1.0 & 0.060$_{\pm 0.009}$ & \bfseries 0.027$_{\pm 0.006}$ \\ \cline{3-5}
&  & 1.5 & 0.105$_{\pm 0.008}$ & \bfseries 0.028$_{\pm 0.008}$ \\  \bottomrule
\end{tabular}
}
    \caption{RCE results of $U_{\rm SE}$ and $U_{\rm SE, cal}$ after rank-calibration for GPT-3.5-turbo with various experimental configurations.}
    \label{tab:all-SE-GPT-calibrated}
    \vspace{-4mm}
\end{table}

\section{Conclusion}
This paper investigates the limitations of common assessments for LM uncertainty/confidence measures. 
We develop an alternate
framework, termed rank-calibration, to assess their quality. Our approach does not require
binarizing correctness at ad hoc thresholds
and is compatible with uncertainty measures taking values in any output range. 
We experimentally
show the broad applicability and the granular interpretability of our method, and provide a comprehensive robustness analysis. Future directions include developing uncertainty measures with guaranteed rank-calibration and enhancing generative pipelines of LMs (\eg, the retrieval-augmented generation) with rank-calibrated measures.


\section*{Limitation \& Broader Impact}
The empirical RCE  estimate has not been subjected to a thorough statistical analysis. 
The performance of assessed uncertainty and confidence measures (\eg, the vanilla verbalized confidence $C_{\rm Verb}$) have not been optimized, since the paper focuses on a new assessment approach rather than benchmarking. Human correctness evaluation is not performed, due to our limited budget.

This work is designed to unveil the issues in the existing approaches for evaluating LM uncertainty/confidence measures, and to introduce an alternate, principled assessment to the LM community. We believe there are no ethical concerns associated with our research.

\bibliography{cal_references}

\newpage
\appendix

\onecolumn
\section{Additional Related Work}\label{app:related-works}
\paragraph{Uncertainty measures in supervised learning.}
The quantification of uncertainties in model outputs in supervised learning has a long history~\citep[\eg,][etc]{lichtenstein1977calibration}. 
Overparametrized models such as neural networks pose unique challenges to estimate uncertainty and improve model calibration\citep{guo2017calibration,papadopoulos2001confidence,riquelme2018deep}. 
Various approaches have been introduced to mimic Bayesian inference~\citep{gal2016dropout}, to utilize simple deep ensembles~\citep{lakshminarayanan2017simple,jain2020maximizing}, and to identify training samples that are out-of-distribution~\citep{liang2018enhancing,papernot2018deep}.
Nonetheless, 
it is not clear how to adapt these strategies to language modeling, where the output can be text with complex structure.

\paragraph{Uncertainty measures in language modeling.}
To gauge the uncertainty level associated with the outputs of LMs, \citet{kuhn2023semantic} introduces the concept of semantic entropy, which integrates linguistic consistencies stemming from shared meanings. In a similar vein, \citet{kadavath2022language,lin2022teaching,xiong2023llms} encourage LMs to analyze their own responses and come up with a ``probability'' that a response is correct. 
In related work,~\citet{manakul2023selfcheckgpt} 
uses sampling to identify instances of fabricated information. Recently, \citet{tian-etal-2023-just} explore methods for deriving confidence measures for reinforcement-learning-trained LMs. \citet{lin2023generating} draw a distinction between estimating uncertainty and confidence for LMs. Similarly, \citet{chen2023quantifying} introduce a method for detecting bad and speculative responses
from a pre-trained LM with a confidence score. \citet{tanneru2023quantifying} propose two novel measures to quantify the uncertainty of LM-generated explanations.
Although considerable research focuses on developing uncertainty and confidence measures for LMs, the evaluation of their effectiveness is less studied. 

\paragraph{Assessments of uncertainty measures.}
Early 
assessment of confidence measures in classification scenarios leveraged proper scoring rules~\citep{savage1971elicitation,degroot1983comparison,gneiting2007strictly}, such as the Brier score~\citep{brier1950verification} and the KL divergence~\citep{winkler1996scoring}.
Other assessments include plotting calibration curves, also known as reliability diagrams (estimated
probabilities against predicted ones)~\citep{frank2015regression}. More recently, the ECE metric---or mean absolute calibration error---has gained popularity in machine learning~\citep{frank2015regression,naeini2015obtaining}, along with many variants~\citep{kumar2019verified,nixon2019measuring,gupta2021calibration,lee2023t,si2022re}. 

In the realm of uncertainty quantification for LMs, the assessment based on ECE remains viable. 
However, it necessitates the introduction of ad hoc threshold to derive binary labels. Moreover, 
the applicability of ECE is limited,
as it does not directly apply 
to LM uncertainty measures that fall outside the 
interval $[0,1]$. 
Our work introduces an assessment centered around rank-calibration, a critical property that ideal uncertainty measures should satisfy. 
This assessment is applicable to both confidence and uncertainty measures and eliminates the need for thresholding the correctness values.

\section{Common Uncertainty/Confidence Measures for  LMs}\label{app:measures}
In this section, we introduce common measures of uncertainty  and confidence in detail.
\begin{itemize}
[leftmargin=0.2in]
    \item {\bf NLL \& Perplexity. } Let $\widehat \by= (\widehat y_\ell)_{\ell \geq 1}$ be the generated response.  
    Then the Negative Log-Likelihood (NLL) is 
    \begin{equation}
        U_{\rm NLL}(\bx, \widehat \by):=- \ln(\PP(\widehat \by\mid \bx))=-\sum_{\ell\geq 1}\ln(\PP(\widehat y_\ell \mid \bx, \widehat y_{<\ell})).
    \end{equation}
    A natural extension accounts for the variable length of responses by applying length normalization. Suppose that the number of tokens of the response $\widehat \by$ is ${\rm len}(\widehat \by)$, the length-normalized NLL is defined as 
    \begin{equation}
        U_{\rm NLL{\text -}LN}(\bx, \widehat \by):=-\frac{1}{{\rm len}(\widehat \by)}\sum_{\ell=1}^{{\rm len}(\widehat \by)}\ln(\PP(\widehat y_\ell \mid \bx, \widehat y_{<\ell})).
    \end{equation}
    Roughly speaking, this can be viewed as the average nats per token in the generated text; if using $\log_2$ instead of $\ln$, it would be the average bits per token.
    The exponential of the length-normalized NLL
    is known as the Perplexity:
    $U_{{\rm Perp}}(\bx;\widehat \by):=\exp(U_{\rm NLL{\text -}LN}(\bx, \widehat \by))$~\citep{jelinek1977perplexity}.
The perplexity can also be viewed as the inverse of the geometric mean of the token-wise probabilities.

\item {\bf Entropy. } Entropy is a well-known type of uncertainty measure. 
The predictive entropy of the distribution $\PP(\cdot \mid \bx)$ is defined as
\begin{equation}
    U_{{\rm E}}(\bx):=-\EE_{\widehat \by\sim \PP(\cdot \mid \bx)}[\ln( \PP(\widehat \by \mid \bx))].
\end{equation}
Entropy gauges the information one has about the potential output given the input, and has high values
when outputs are diverse. 
\citet{malinin2021uncertainty} propose a variant $U_{{\rm E{\text -}LN}}(\bx)$ using the length-normalized log-likelihood $\ln( \PP(\widehat \by \mid \bx))/{\rm Length}(\widehat \by)$. 
\citet{kuhn2023semantic} argues that responses with an identical meaning should be viewed as equal; even if they differ at the token level. They thus propose the semantic entropy 
\begin{equation}
    U_{{\rm SE}}(\bx):=-\EE_{\widehat \by\sim \PP(\cdot \mid \bx)}[\ln( \PP(c(\widehat \by) \mid \bx))],
\end{equation}
where $c(\widehat \by)$ is a semantic concept of output $\widehat \by$, as determined by another machine learning method. We can similarly define the length-normalized semantic entropy as
\begin{equation}
    U_{{\rm SE{\text -}LN}}(\bx):=\-\EE_{\widehat \by\sim \PP(\cdot \mid \bx)}[\ln( \PP(c(\widehat \by) \mid \bx))/{\rm len}(\widehat \by)].
\end{equation}

\item {\bf Affinity graph.} Recently, \citet{lin2023generating} use a weighted adjacency graph built upon semantic affinities between 
outputs to reflect uncertainty.
Given an 
\emph{entailment-contradiction} affinity model $e$
that maps pairs 
$\widehat \by, \widehat \by ^\prime $ 
of responses 
to values in $[0,1]$, 
$e$ induces a symmetric adjacency matrix $W\hspace{-2pt}=\hspace{-2pt}[w_{i,j}]_{i,j= 1}^K$ 
with responses $\{\widehat \by^{(k)}\}_{k=1}^K$ sampled 
 from $\PP(\cdot \hspace{-2pt}\mid \hspace{-2pt}\bx)$,
 where for all $i,j$,
 $w_{i,j}\hspace{-2pt}=\hspace{-2pt}(e(\widehat \by^{(i)};\widehat \by ^{(j)})+e(\widehat \by ^{(j)};\widehat \by^{(i)}))/2$. 
Let $D=[\mathds{1}[j=i]\sum_{k=1}^K w_{k,j}]_{i,j=1}^K$ 
be the matrix of degrees
and  $\{\lambda_k\}_{k=1}^K$ be the eigenvalues 
of the \emph{Laplacian} $L\hspace{-2pt}=\hspace{-2pt}I\hspace{-2pt}-\hspace{-2pt}D^{-1/2}WD^{-1/2}$. 
Measures proposed in \citet{lin2023generating}
include
\begin{align}
    &U_{\rm EigV}(\bx):=\sum_{k= 1}^K\max\{0, 1-\lambda
    _k\},&\\
    &U_{\rm Deg}(\bx):=1-{\rm trace}(D)/K^2,&C_{\rm Deg}(\bx; \widehat \by^{(i)}):=D_{i,i}/K,\\
    &U_{\rm Ecc}(\bx):=\|[\bv_1,\bv_2,\dots, \bv_K]\|_2.
\end{align}
where $\{\bv_k\}_{k= 1}^K$ are certain centralized vectors associated with the spectral decomposition of $L$.
Here, $U_{\rm EigV}(\bx)$ is approximates the number
of connected components in the graph represented by $W$, while $U_{\rm Deg}(\bx)$ and $U_{\rm Ecc}(\bx)$ reflect the diversity of outputs. 

\item {\bf Verbalized confidence.} Verbalized confidence generally refers to the textual confidence output by an LM. 
For example, if an LM is highly uncertain about its answer, it may inform the user by saying, \eg, ``I am only 20\% confident in this answer.'' 
This is often implemented by feeding handcrafted prompts to advanced LMs such as GPT-4~\citep{gpt-4}.
Many prompting strategies have been used in the literature to enhance this procedure~\citep{zhao2021calibrate,kadavath2022language,lin2022teaching,xiong2023llms}. Since optimizing the prompting strategy is not our focus and we do not want confidence elicitation to interfere with the generation of responses, we adopt a simple post-hoc strategy here by feeding a query-response pair to an LM and asking it how confident it believes the response correctly addresses the query. This post-hoc strategy is similar to the one used by \citet{kadavath2022language}. We use the following specific prompt format:

{\fontfamily{qcr}\selectfont
    \noindent Read the question and answer below. \\
    \{question\} \{generation\}\\
    Provide a numeric confidence that indicates your certainty about this answer. \\
    For instance, if your confidence level is 80\%, it means you are 80\% certain that this answer is correct and there is a 20\% chance that it is incorrect. \\
    Use the following format to provide your confidence: Confidence: [Your confidence, a numerical number in the range of 0-100]\%."
}
\end{itemize}

\section{Common Evaluation Metrics}\label{app:metrics}
In this section, we review evaluation metrics that have been commonly used to assess LM uncertainty/confidence measures. These metrics usually require binary correctness values.
\begin{itemize}
[leftmargin=0.2in]
    \item {\bf AUROC. } AUROC refers to the area under the Receiver-Operating Curve (ROC). The ROC plots the true positive rate (a.k.a. recall) against the false positive rate (a.k.a. $1-$ specificity) at various thresholds of uncertainty levels. The true positive rate is on the $y$-axis, and the false positive rate is on the $x$-axis. An AUROC value of $1$ may represent a perfect uncertainty measure; a value of $0.5$ suggests no discriminative ability (equivalent to random uncertainty levels). The AUROC can be more useful for evaluation in imbalanced scenarios where correct responses are much more (or less) frequent than incorrect responses.
    \item {\bf AUPRC. } AUPRC refers to the area under the Precision-Recall Curve (PRC), which plots the positive predictive value (a.k.a. precision) against the true positive rate (a.k.a. recall) at various threshold settings. Precision is on the $y$-axis, and recall is on the $x$-axis.  Similar to AUROC, it is valuable in imbalanced dataset scenarios but focuses more on the performance of the positive (minority) class (\ie, correct responses). Variants of AURPC include AUPRC-Positive and AUPRC-Negative,
    which focus on gauging the ability of uncertainty measures to identify correct responses and
incorrect responses, respectively.
    \item {\bf AUARC. } AUARC refers to the area under the Accuracy-Rejection Curve (ARC) that plots the accuracy of generation against a rejection rate (the proportion of generated responses for which the model abstains from making a prediction). The curve shows how the accuracy of generation improves as it is allowed to reject uncertain responses.
A higher AUARC value means that an LM can generate more correct responses as it increasingly avoids uncertain (based on the level of specific uncertainty measures) cases. This metric is useful for evaluating uncertainty measures in scenarios where LMs can defer responses for which they are not confident.
    \item {\bf ECE.} ECE stands for the expected calibration error, a metric used to evaluate the calibration of confidence measures, particularly in classification tasks. Calibration refers to how well the confidence levels align with the actual proportion of correct generation. For an ideally calibrated confidence measure, if the confidence level is 70\%, then approximately 70\% of generated responses should be correct.
ECE quantifies the difference between the confidence levels and the realized correct proportion.  A lower ECE indicates better calibration, meaning the confidence measure is more reflective of the actual correct proportion.
A confidence measure with an ECE close to zero is considered well-calibrated.
\end{itemize}

\section{Proof of Proposition~\ref{prop:erce-vs-ece}}
\label{app:prop1_prf}
{\bf Case 1. $\alpha=1/2$.} 
Consider the continuous case $C\sim {\rm Unif}[1/2-\beta,1/2+\beta]$ and $\reg(C)\equiv 1/2+\beta$ almost surely (\ie, $A\sim {\rm Bernoulli}(1/2+\beta)$). Then $\PP_{C^\prime}(\overline{\reg}(C^\prime)\geq \overline{\reg}(C))\equiv 1$ for almost surely. Since $C$ is continuous-valued,  $\PP_{C^\prime}$ follows the uniform distribution over $[0,1]$. We thus have 
\begin{equation}
    {\rm RCE}=\int_0^1|1-p|{\rm d}p=\frac{1}{2}.
\end{equation}
On the other hand, 
\begin{equation}
    {\rm ECE}=\int_{1/2-\beta}^{1/2+\beta}\frac{|1/2+\beta-c|}{2\beta}{\rm d}c=\beta.
\end{equation}

\noindent{\bf Case 2. $\alpha\in(0,1/2)$.} Consider the case $\reg(C)\equiv 1/2+\beta$ almost surely. We construct the marginal distribution of $C$ as follows. Let $\PP(C=c_k)=p_k$ for $1\leq k\leq K$ with $K\geq (1-2\alpha)^{-1}$. Here $p_1=\cdots=p_{K-1}=p$ while $p_K=1-(K-1)p$ where $p$ is the non-negative root of $(K-1)p^2+(1-(K-1)p)^2=1-2\alpha$. Since $K\geq (1-2\alpha)^{-1}$, such $p\in(0,(K-1)^{-1}]$ exists. 
Moreover, we  let $\{c_k\}_{k=1}^K$ satisfy $0\leq c_1<\dots< c_{K-1}\leq 1/2+\beta$, $c_k+c_{K-k}\equiv 1$ with $c_k\neq 1/2$ for all $1\leq k< K$, $c_K=1/2$.
Then, by definition, we can calculate
\begin{align}
    {\rm RCE}=&\sum_{k=1}^Kp_k\left(1-\sum_{\ell\geq k}p_\ell\right)=\sum_{1\leq \ell <k\leq K}p_kp_\ell\\
    =&\frac{\left(\sum_{k=1}^Kp_k\right)^2-\sum_{k=1}^Kp_k^2}{2}=\frac{1-\sum_{k=1}^Kp_k^2}{2}=\alpha.
\end{align}
On the other hand, we have 
\begin{equation}
    {\rm ECE}=\sum_{k=1}^K\left|\frac{1}{2}+\beta-c_k\right|p_k=\beta+\frac{1}{2}-\sum_{k=1}^Kc_kp_k=\beta.
\end{equation}
This finishes the proof.


\section{Additional Experiment Details}

\subsection{Model Setup}
\label{app:model}
Following \citet{lin2023generating}, we set the temperature to 0.6 for the two Llama-2 models and 1.0 for the GPT model. We quantize the two Llama-2 models to 16 bits.  To ablate the influence of temperature, we also use generated responses of Llama-2-7b-chat with temperature 1.0. 

\subsection{Datasets}
\label{app:datasets}

\paragraph{Dataset Descriptions.}
TriviaQA is a challenging reading comprehension dataset, containing question-answer pairs whose answers can be found on Wikipedia and the web. 
Similar to previous works, we use TriviaQA as an open-domain QA benchmark. 
Natural Question is a question-answering dataset containing questions issued to the Google search engine. We use Natural Questions as an open-domain QA benchmark. SQuAD-1 is a reading comprehension dataset containing questions posed by crowdworkers based on Wikipedia articles. We include SQuAD-1 as a reading comprehension benchmark, where the annotated contexts are provided in the prompt.  
Meadow is created by research groups working on COVID-19 problems. 
We use this dataset for open-ended generation, where the LM is expected to provide a title for a paper given the abstract of the paper. The correctness is justified by comparing the generated title to the true title. 

\paragraph{Dataset Setup.}
TriviaQA contains 11,322 data points, Natural Questions contains 3,600 data points, SQuAD-1 contains 10,570 data points, and Meadow contains 1,000 data points. The prompt templates used are similar to those in \citet{kuhn2023semantic,lin2023generating}, and are as follows:

\noindent \textbf{TriviaQA:} following from \citet{lin2023generating}, we use the exact same prompt used in \citet{touvron2023llama}:

{\fontfamily{qcr}\selectfont
\noindent Answer these questions: \\
 In Scotland, a bothy/bothie is a? \\
A: House \\
 \{question\} \\
A:
}

\noindent \textbf{Natural Question:}
Similar to \citet{lin2023generating}, we use an in-context learning prompt with five demonstrations: 

{\fontfamily{qcr}\selectfont
    \noindent  where are the fa cup semi finals played. [SEP] A: the new Wembley Stadium.[SEP] \\
     who was alf married to in home and away [SEP] A: Ailsa Stewart.[SEP] \\
     what is the name of the first book in the twilight series [SEP] A: Twilight.[SEP] \\
     when is tornado season in the united states [SEP] A: March through June.[SEP] \\
     where did the idea of a messiah come from [SEP] A: Judaism.[SEP] \\ 
     {question} [SEP] A:
}

\noindent \textbf{SQuAD-1:} Each data point in SQuAD-1 is a (question, context, reference) triplet, where the context is annotated to provide useful information to answer the question. We prompt SQuAD-1 using zero-shot prompting:

{\fontfamily{qcr}\selectfont
    \noindent Answer the following question based on the context. \\
     \{question\} \\
    Context: \{context\} \\
    A: 
}

\noindent \textbf{Meadow:} Each data point in Meadow is a (abstract, title) pair. We prompt Meadow using one-shot prompting:

{\fontfamily{qcr}\selectfont
    \noindent Abstract: Coronavirus disease 2019 (COVID-19) threatens vulnerable patient populations, resulting in immense pressures at the local, regional, national, and international levels to contain the virus. Laboratory-based studies demonstrate that masks may offer benefits in reducing the spread of droplet-based illnesses, but few data are available to assess mask effects via executive order on a population basis. We assess the effects of a county-wide mask order on per-population mortality, intensive care unit (ICU) utilization, and ventilator utilization in Bexar County, Texas. METHODS: We used publicly reported county-level data to perform a mixed-methods before-and-after analysis along with other sources of public data for analyses of covariance. We used a least-squares regression analysis to adjust for confounders. A Texas state-level mask order was issued on July 3, 2020, followed by a Bexar County–level order on July 15, 2020. We defined the control period as June 2 to July 2 and the postmask order period as July 8, 2020–August 12, 2020, with a 5-day gap to account for the median incubation period for cases; longer periods of 7 and 10 days were used for hospitalization and ICU admission/death, respectively. Data are reported on a per-100,000 population basis using respective US Census Bureau–reported populations. RESULTS: From June 2, 2020 through August 12, 2020, there were 40,771 reported cases of COVID-19 within Bexar County, with 470 total deaths. The average number of new cases per day within the county was 565.4 (95\% confidence interval [CI] 394.6–736.2). The average number of positive hospitalized patients was 754.1 (95\% CI 657.2–851.0), in the ICU was 273.1 (95\% CI 238.2–308.0), and on a ventilator was 170.5 (95\% CI 146.4–194.6). The average deaths per day was 6.5 (95\% CI 4.4–8.6). All of the measured outcomes were higher on average in the postmask period as were covariables included in the adjusted model. When adjusting for traffic activity, total statewide caseload, public health complaints, and mean temperature, the daily caseload, hospital bed occupancy, ICU bed occupancy, ventilator occupancy, and daily mortality remained higher in the postmask period. CONCLUSIONS: There was no reduction in per-population daily mortality, hospital bed, ICU bed, or ventilator occupancy of COVID-19-positive patients attributable to the implementation of a mask-wearing mandate. [SEP]
    Title: Analysis of the Effects of COVID-19 Mask Mandates on Hospital Resource Consumption and Mortality at the County Level [SEP] \\
    Abstract: \{abstract\} [SEP] \\
    Title:
}

\subsection{Correctness Functions}
\label{app:correctness}

\paragraph{Rouge score.}  Recall-Oriented Understudy for Gist Evaluation  (Rouge) score
has originally been designed to evaluate machine translation or text summarization tasks. 
The Rouge score counts the overlapping n-grams between generated reference texts. 
Widely used n-grams include unigrams (Rouge-1), bigrams (Rouge-2), and the longest common subsequence (Rouge-L). Specifically, it is computed through
$$
\text{ROUGE} = \frac{|(\text{n-gram} \in \text{Generation}) \cap (\text{n-gram}) \in \text{Reference}|}{|\text{Reference}|}.
$$

\paragraph{METEOR score.} 
The Metric for Evaluation of Translation with Explicit Ordering (METEOR) score has also  been originally designed to evaluate machine translation and text summarization. Different from the Rouge score, the METEOR score considers the accuracy and fluency of the generation, as well as word order. The calculation of the METEOR score can be found in \citet{banerjee-lavie-2005-meteor}.

\paragraph{BERT-similarity.}
The BERT-similarity is based on sentence-bert~\citep{reimers2019sentencebert}. Specifically, in the first step, reference and generation texts are encoded as 768-dimensional feature vectors, respectively. Then, the correctness values are computed by calculating the cosine similarity between reference and generation vectors. In our implementation, we use sentence-Bert with \textit{bert-nli-mean-tokens} pre-trained weights as the encoding model. 

\paragraph{ChatGPT evaluation.}
ChatGPT evaluation 
is calculated by prompting GPT-3.5-turbo 
with the question, reference, and generation; and asking it to evaluate the correctness of the generation. The template used in calculating ChatGPT correctness follows that in \citet{lin2023generating}:

{\fontfamily{qcr}\selectfont
    \noindent Rate the level of consistency between the answer to the question and the reference answer, from 0 to 100.\\
    Question: In Scotland a bothy/bothie is a?\\
    Reference: House\\
    Answer: House\\
    Rating: 100.\\
    Question: Where in England was Dame Judi Dench born?\\
    Reference: York\\
    Answer: London\\
    Rating: 0.\\
    Question: \{question\}\\
    Reference: \{reference\}\\
    Answer: \{generated\}\\
    Rating:
}

\subsection{Inconsistency due to Correctness Thresholding}
\label{sec:auarcs}
We provide more evidence to show the inconsistency of AUARC and AUPRC metrics caused by the ad hoc correctness thresholding. The plots are in Fig~\ref{fig:auarcs}, ~\ref{fig:rouge_meadow}, \ref{fig:bert_triviaqa}, \ref{fig:rouge_llama_triviaqa}, and \ref{fig:rouge_llama_10}.
\begin{figure}
    \centering
    \includegraphics[width=0.4\textwidth]{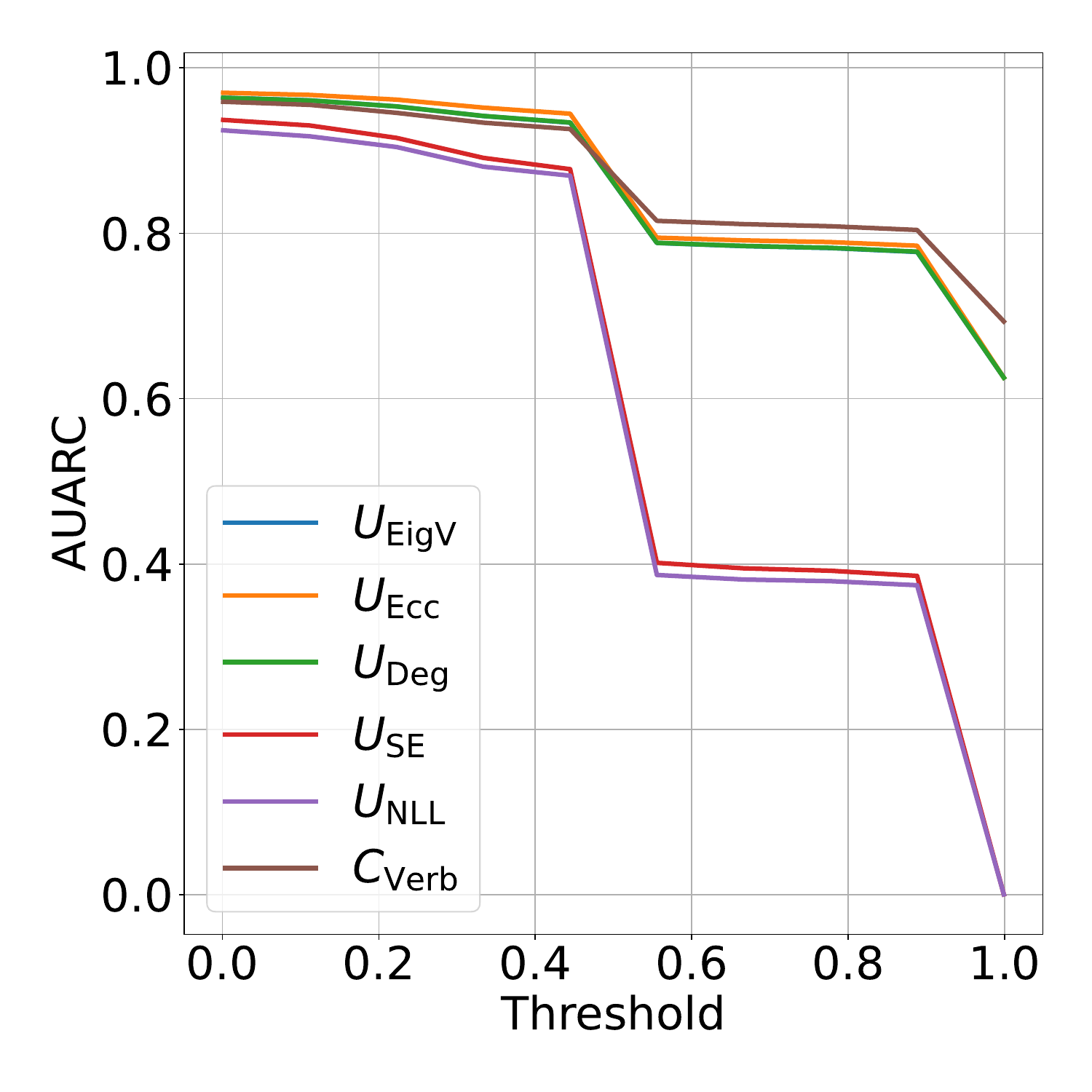}
    \includegraphics[width=0.4\textwidth]{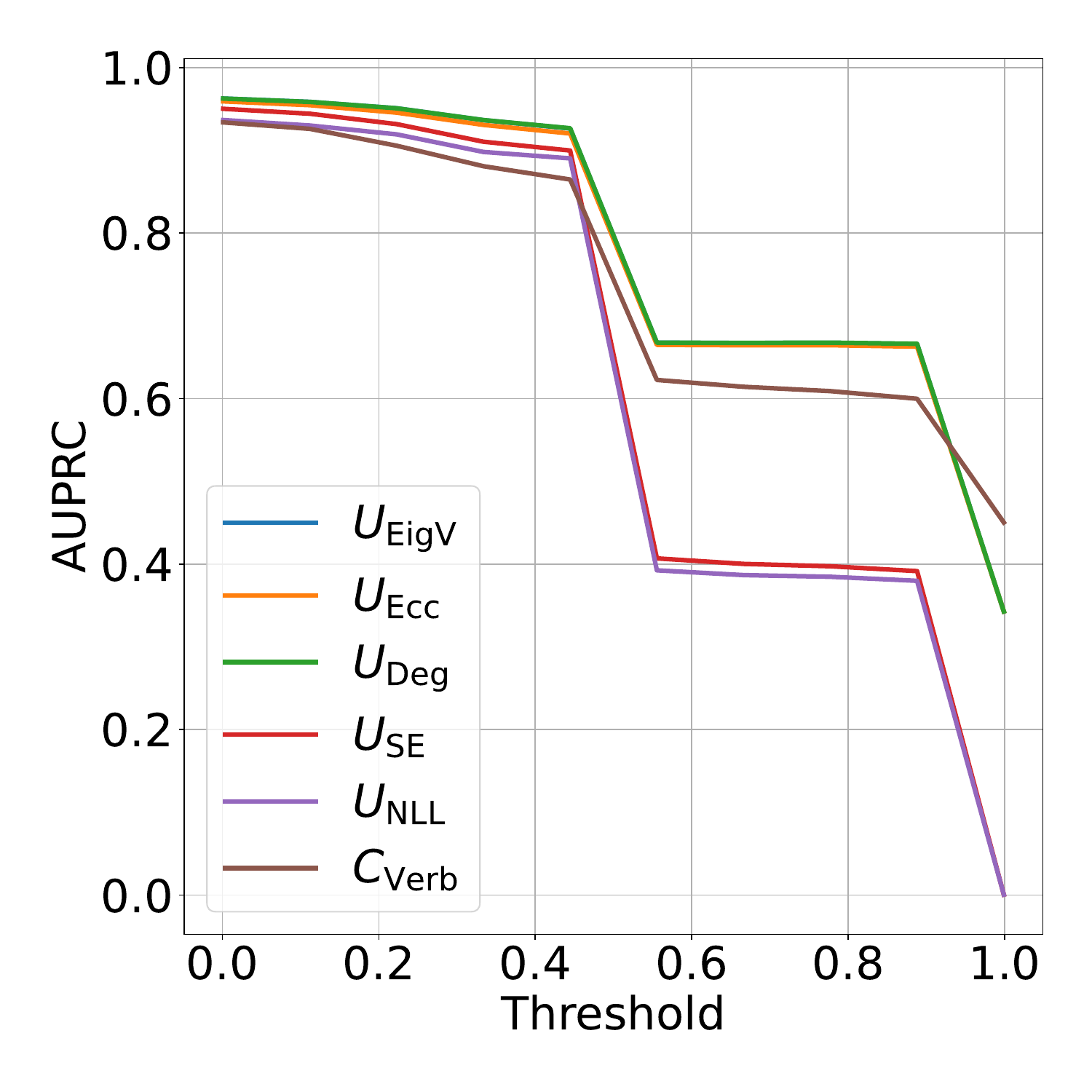}
    \caption{The assessed results for {\em AUARC (left)} and {\em AUPRC (right)} of uncertainty/confidence measures for GPT-3.5-turbo on the TriviaQA benchmark using the METEOR correctness score with varying thresholds. }
    \label{fig:auarcs}
\end{figure}

\begin{figure}
\centering
\begin{subfigure}{.24\textwidth}
  \centering
  \includegraphics[width=\linewidth]{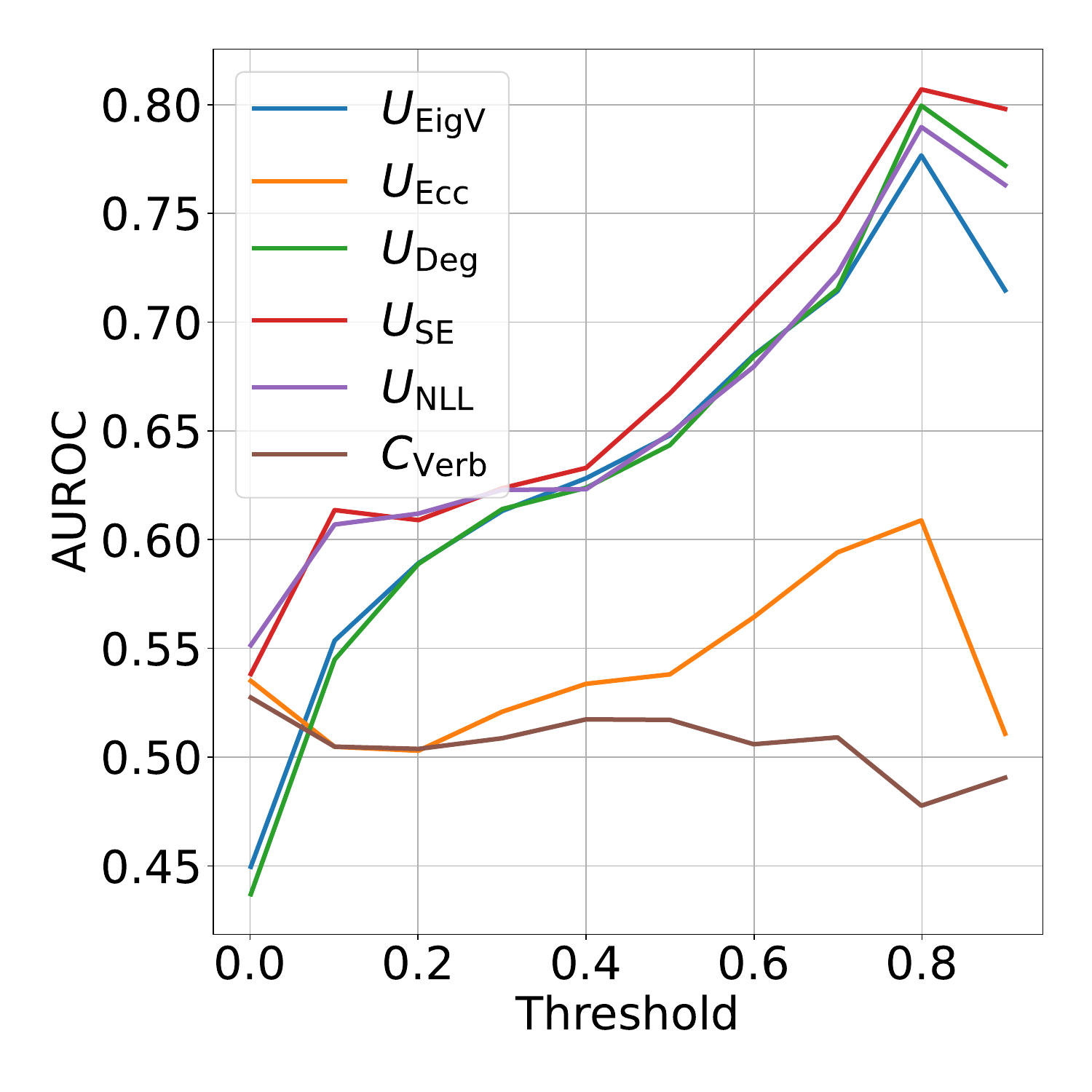}
  \caption{AUROC}
  \label{fig:auroc_meadow}
\end{subfigure}%
\begin{subfigure}{.24\textwidth}
  \centering
  \includegraphics[width=\linewidth]{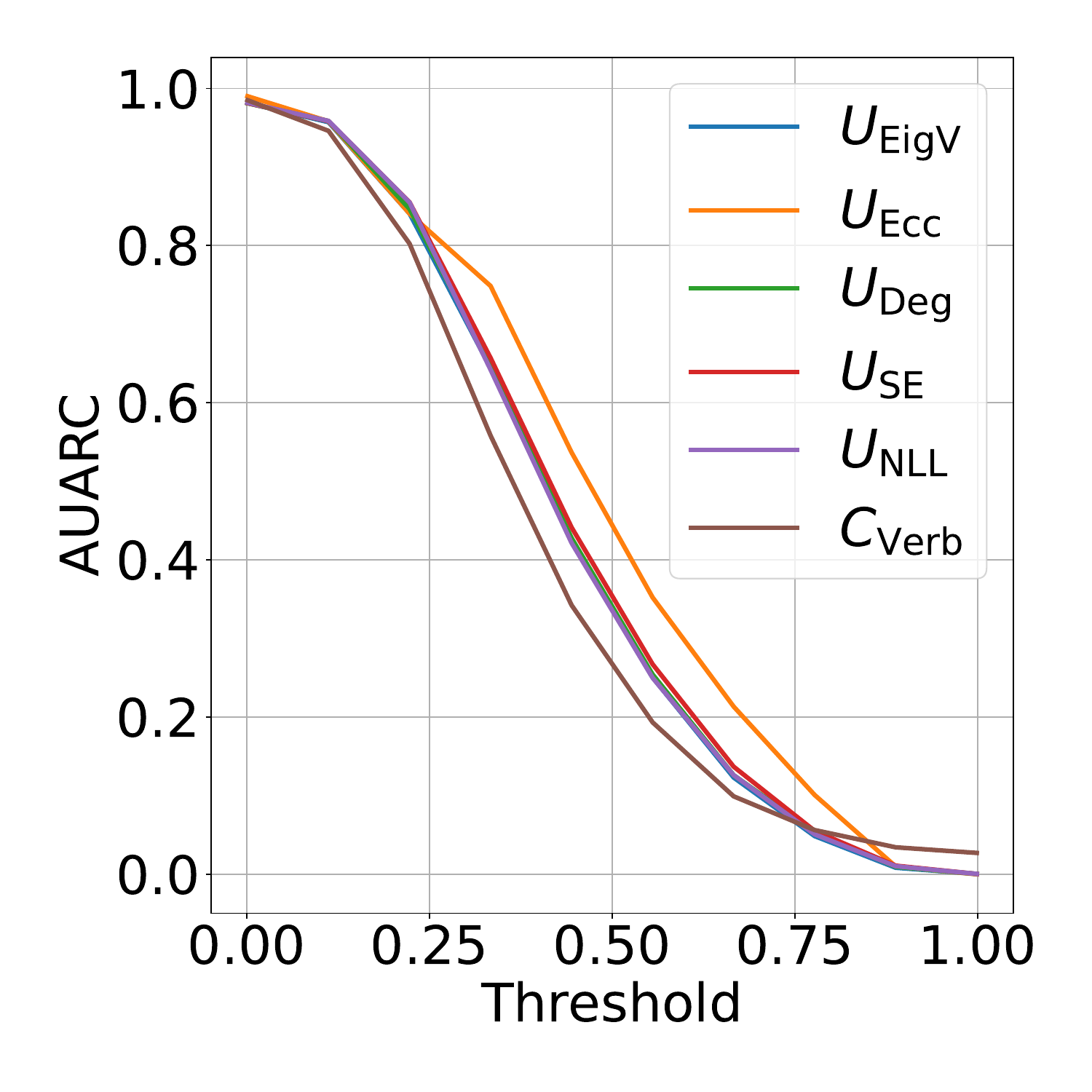}
  \caption{AUARC}
  \label{fig:auarc_meadow}
\end{subfigure}
\begin{subfigure}{.24\textwidth}
  \centering
  \includegraphics[width=\linewidth]{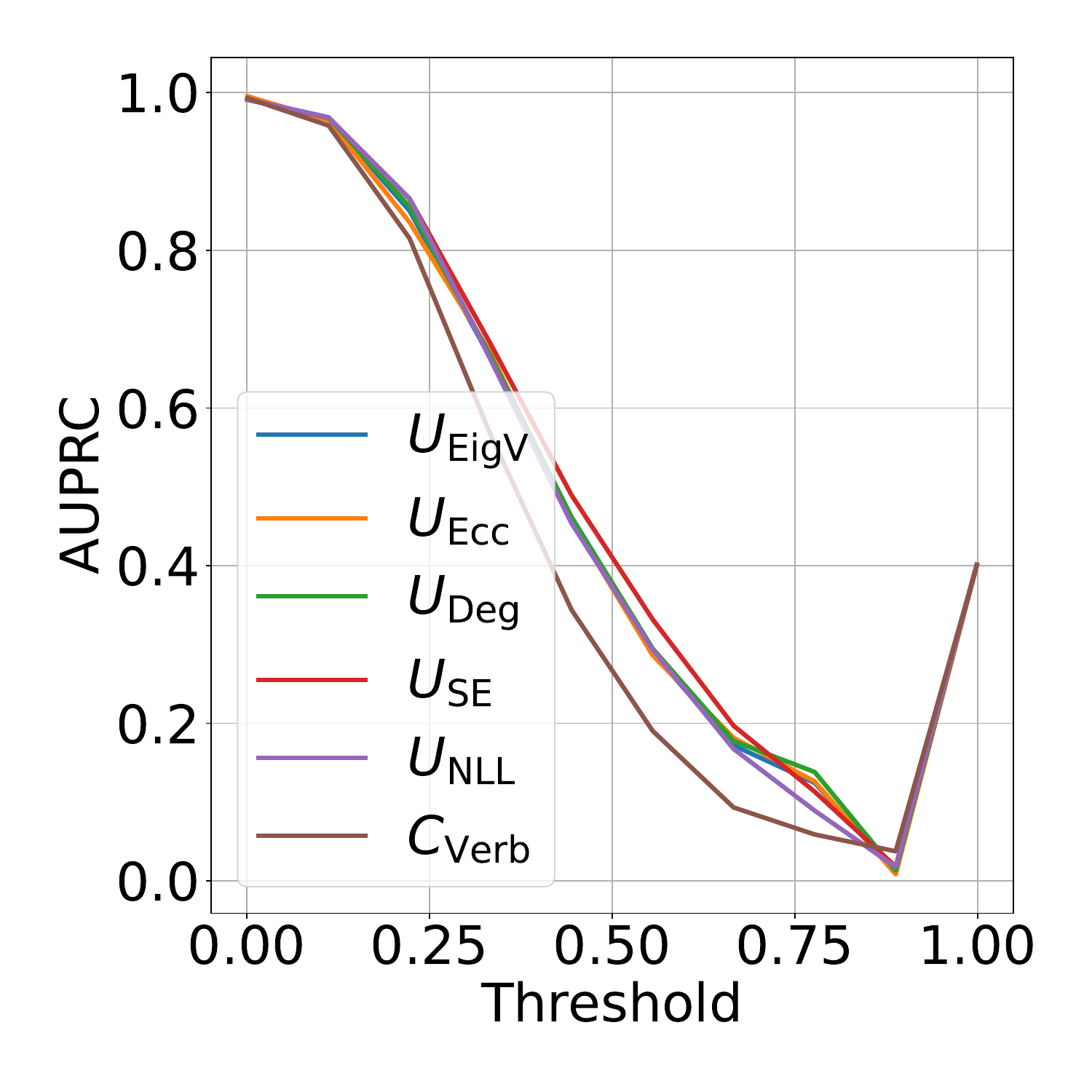}
  \caption{AUPRC}
  \label{fig:auprc_meadow}
\end{subfigure}%
\begin{subfigure}{.24\textwidth}
  \centering
  \includegraphics[width=\linewidth]{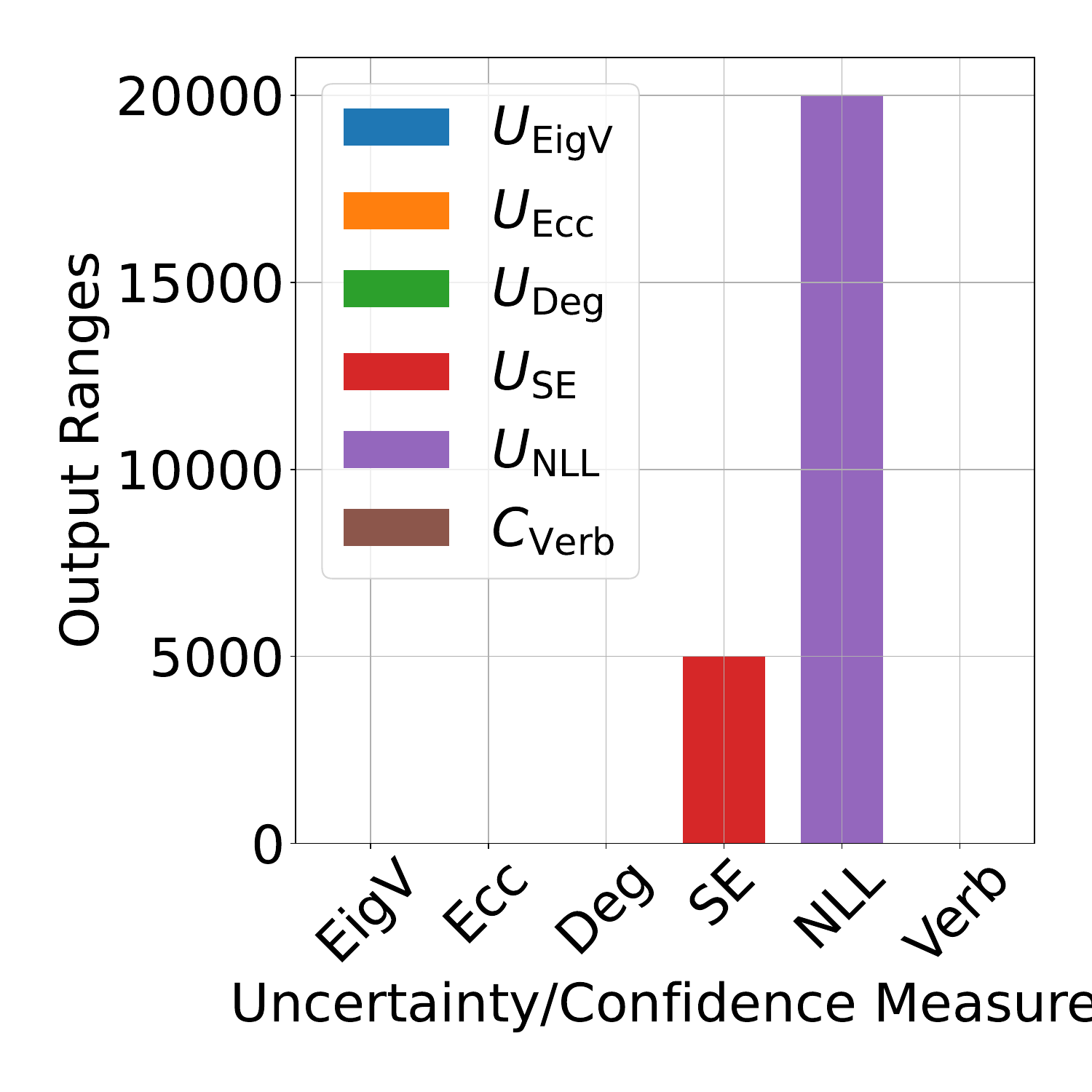}
  \caption{Output ranges 
  }
  \label{fig:unc_meadow}
\end{subfigure}
\caption{Results for Meadow using GPT-3.5-turbo and the Rouge score.}
\label{fig:rouge_meadow}
\end{figure}

\begin{figure}
\centering
\begin{subfigure}{.24\textwidth}
  \centering
  \includegraphics[width=\linewidth]{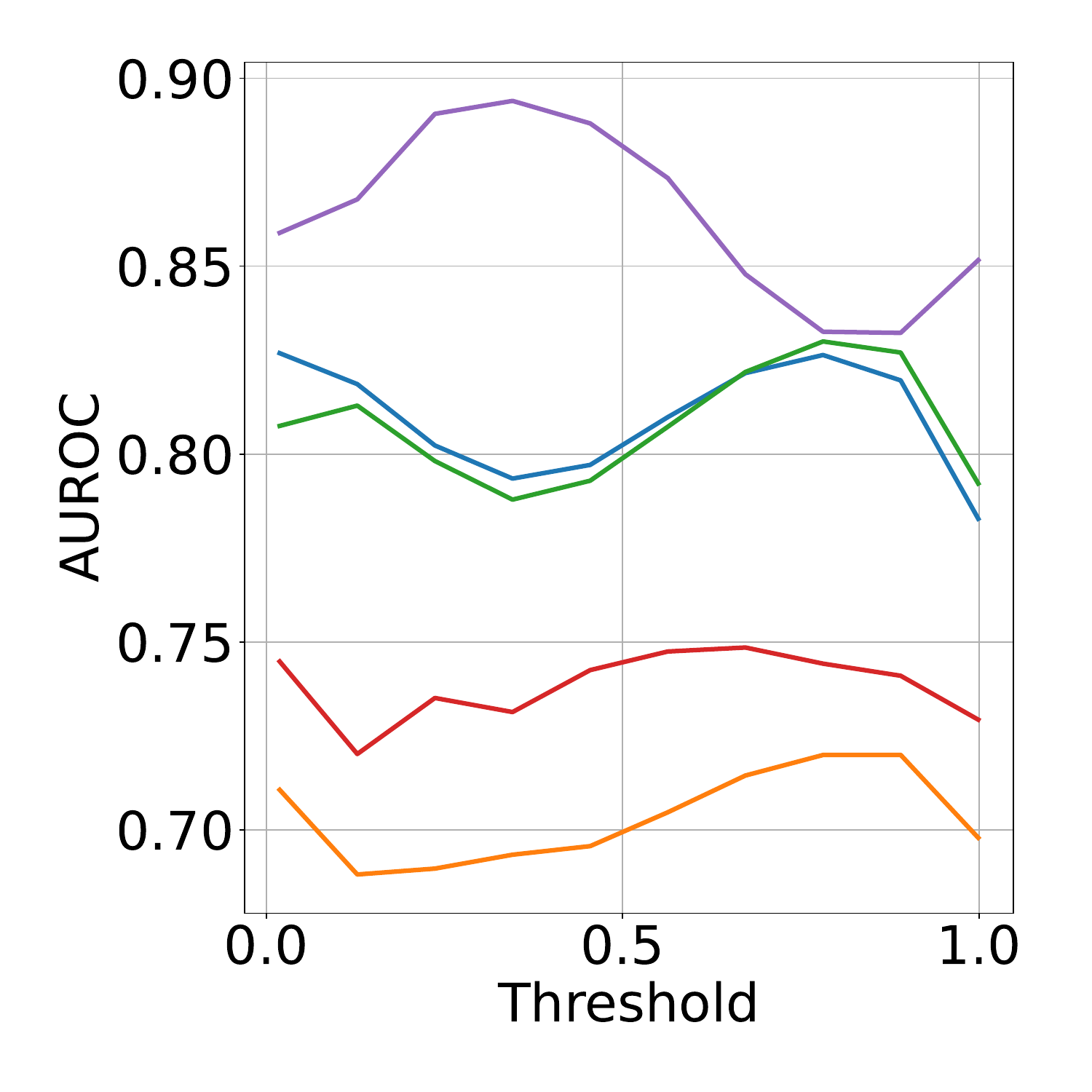}
  \caption{AUROC}
  \label{fig:auroc_bert}
\end{subfigure}%
\begin{subfigure}{.24\textwidth}
  \centering
  \includegraphics[width=\linewidth]{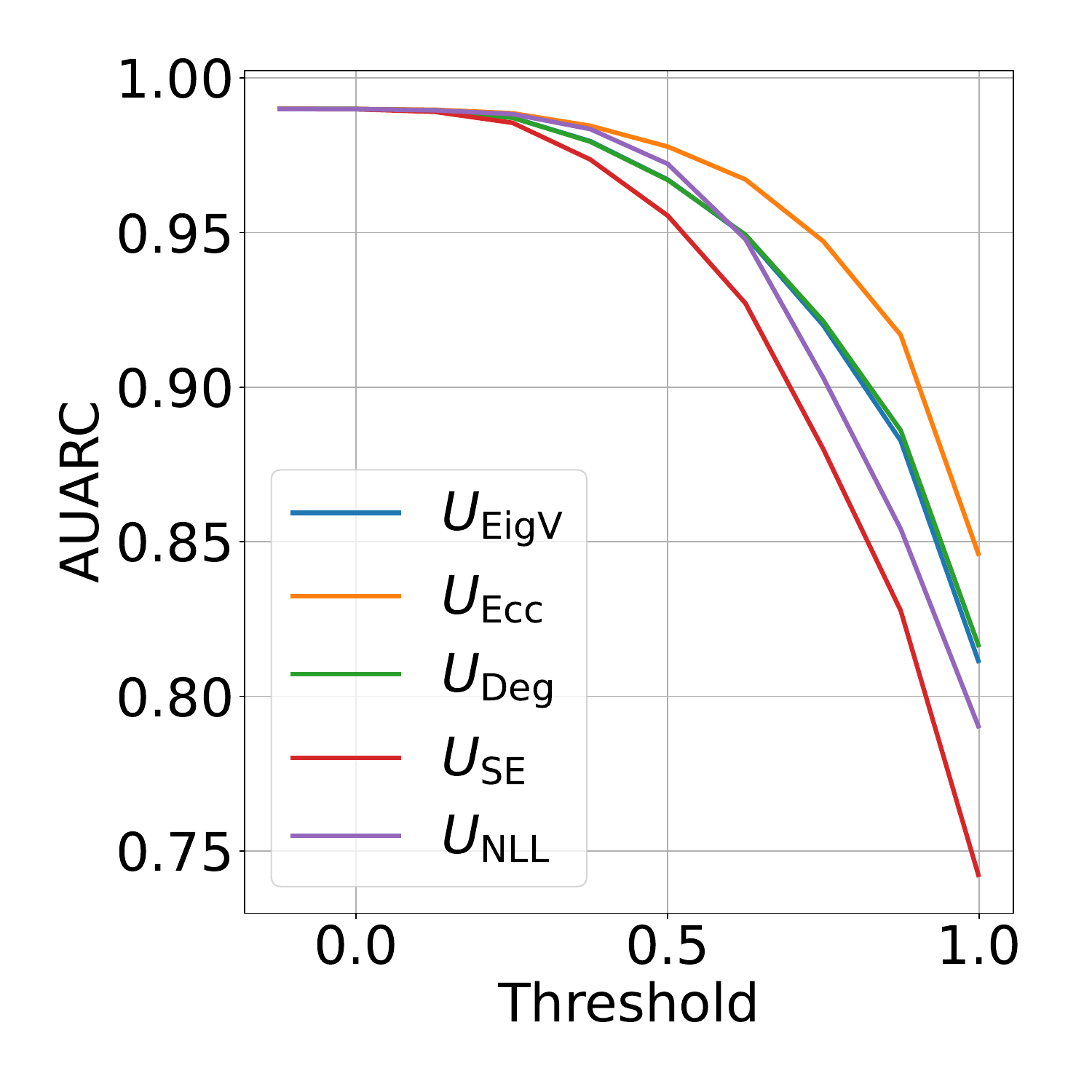}
  \caption{AUARC}
  \label{fig:auarc_bert}
\end{subfigure}
\begin{subfigure}{.24\textwidth}
  \centering
  \includegraphics[width=\linewidth]{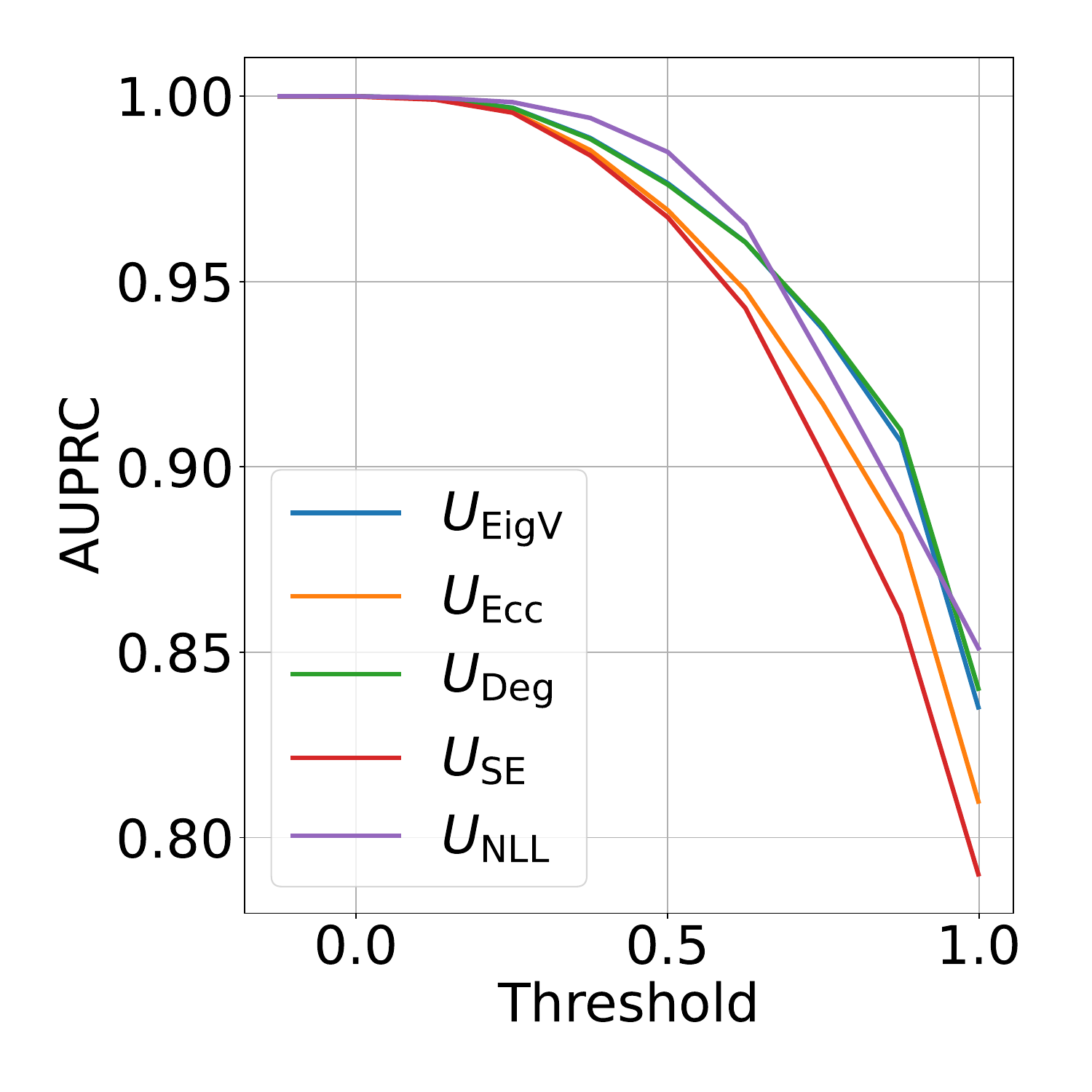}
  \caption{AUPRC}
  \label{fig:auprc_bert}
\end{subfigure}%
\begin{subfigure}{.24\textwidth}
  \centering
  \includegraphics[width=\linewidth]{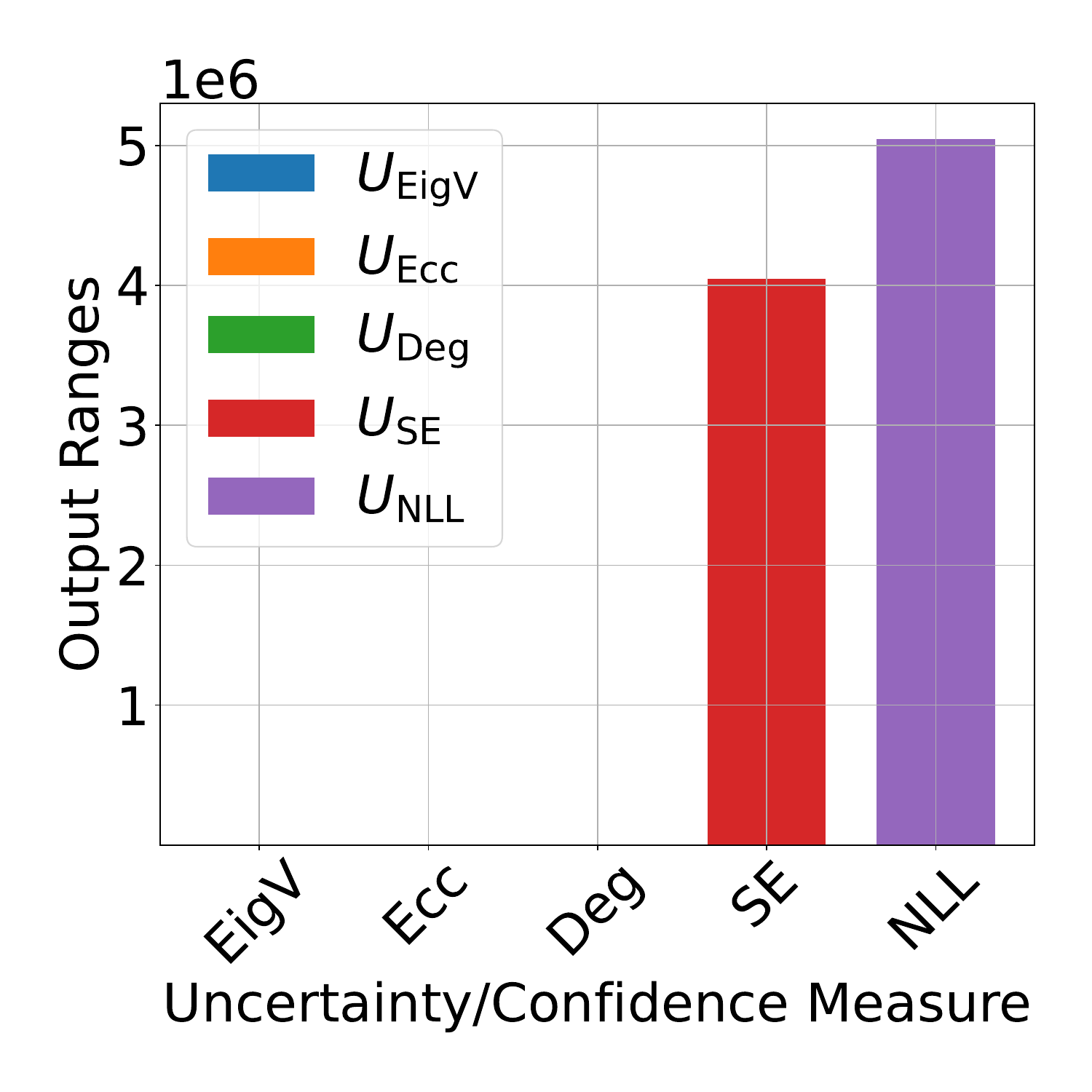}
  \caption{Output ranges}
  \label{fig:unc_bert}
\end{subfigure}
\caption{Results for TriviaQA using GPT-3.5-turbo with temperature 1.5 and the bert-similarity metric.}
\label{fig:bert_triviaqa}
\end{figure}

\begin{figure}
\centering
\begin{subfigure}{.24\textwidth}
  \centering
  \includegraphics[width=\linewidth]{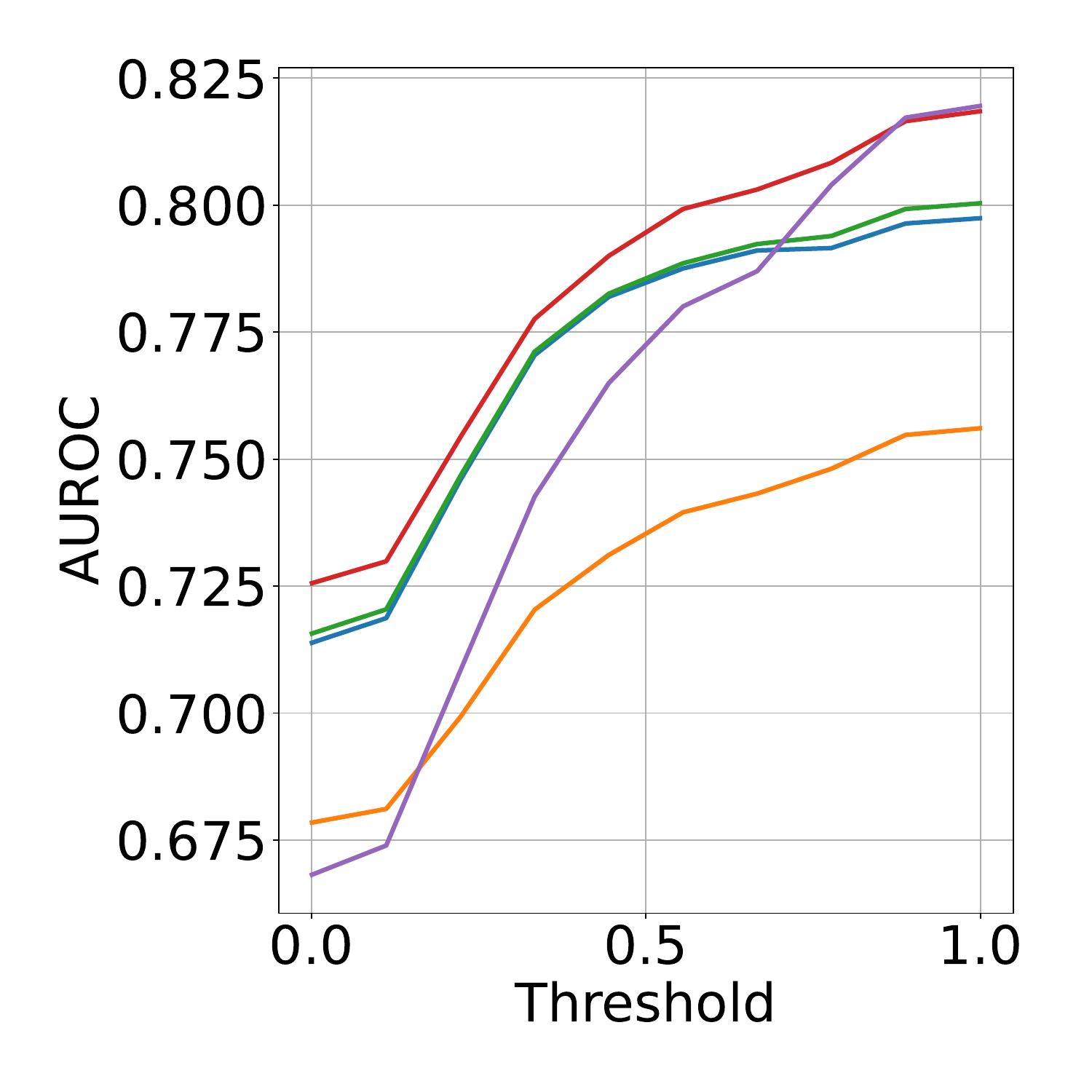}
  \caption{AUROC}
\end{subfigure}%
\begin{subfigure}{.24\textwidth}
  \centering
  \includegraphics[width=\linewidth]{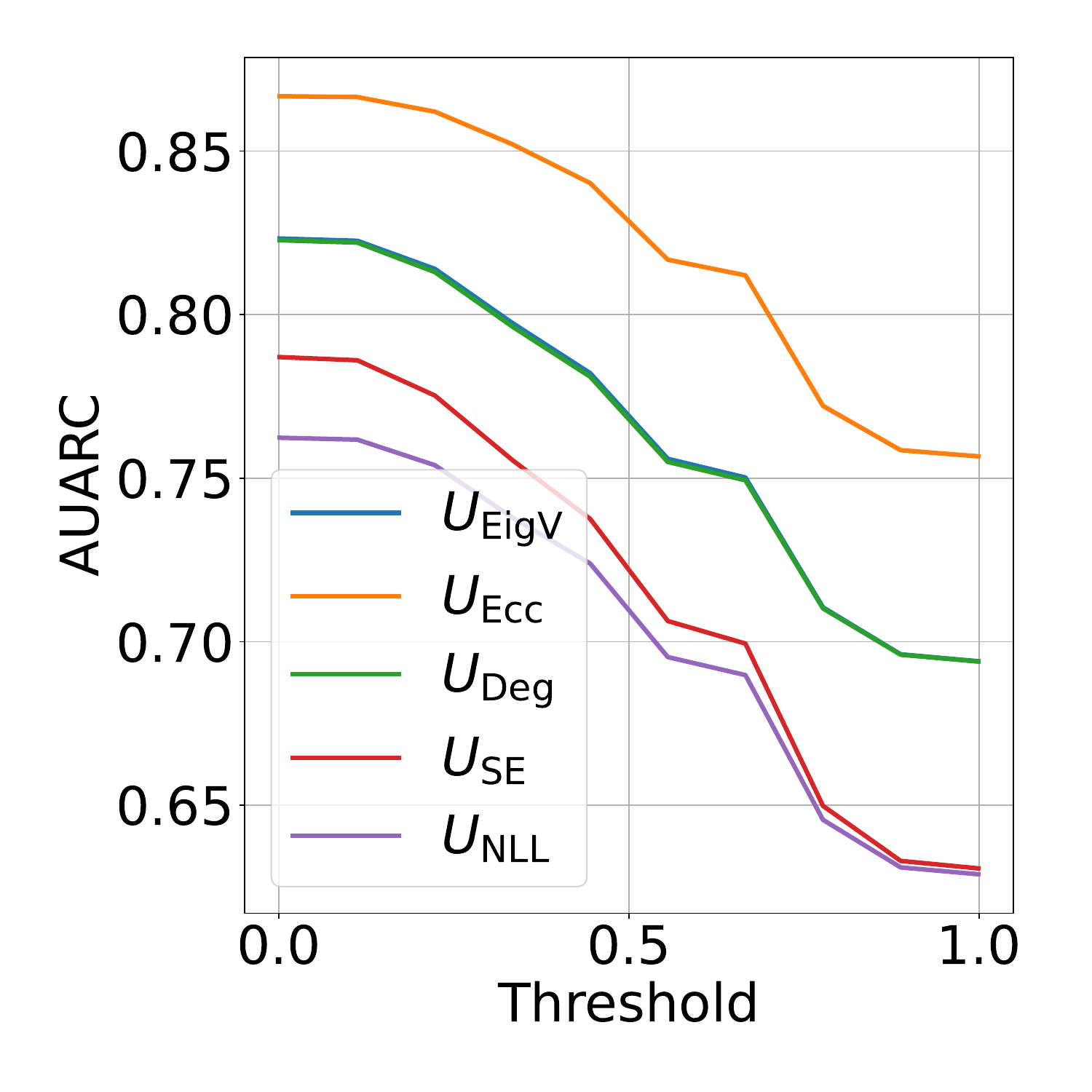}
  \caption{AUARC}
\end{subfigure}
\begin{subfigure}{.24\textwidth}
  \centering
  \includegraphics[width=\linewidth]{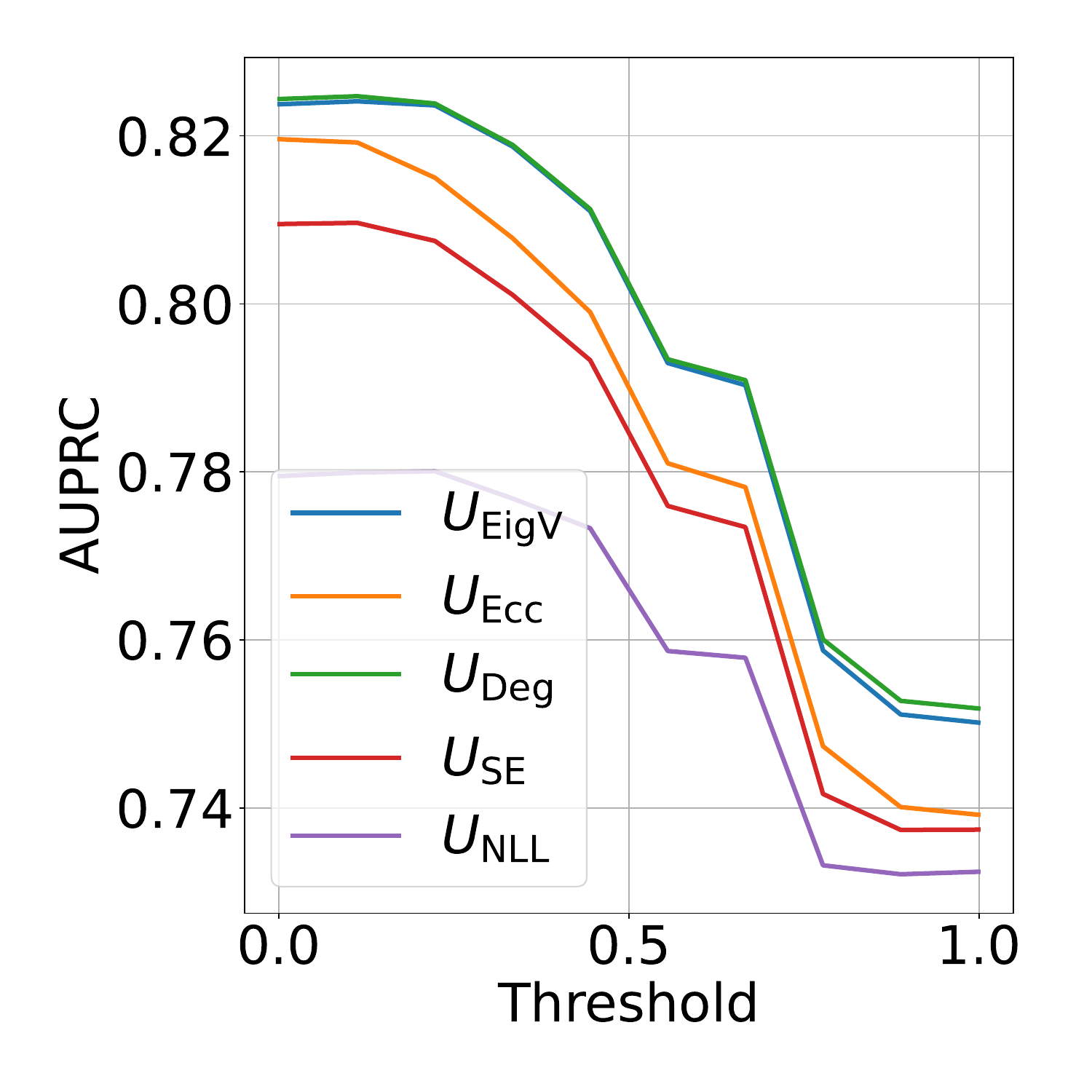}
  \caption{AUPRC}
\end{subfigure}%
\begin{subfigure}{.24\textwidth}
  \centering
  \includegraphics[width=\linewidth]{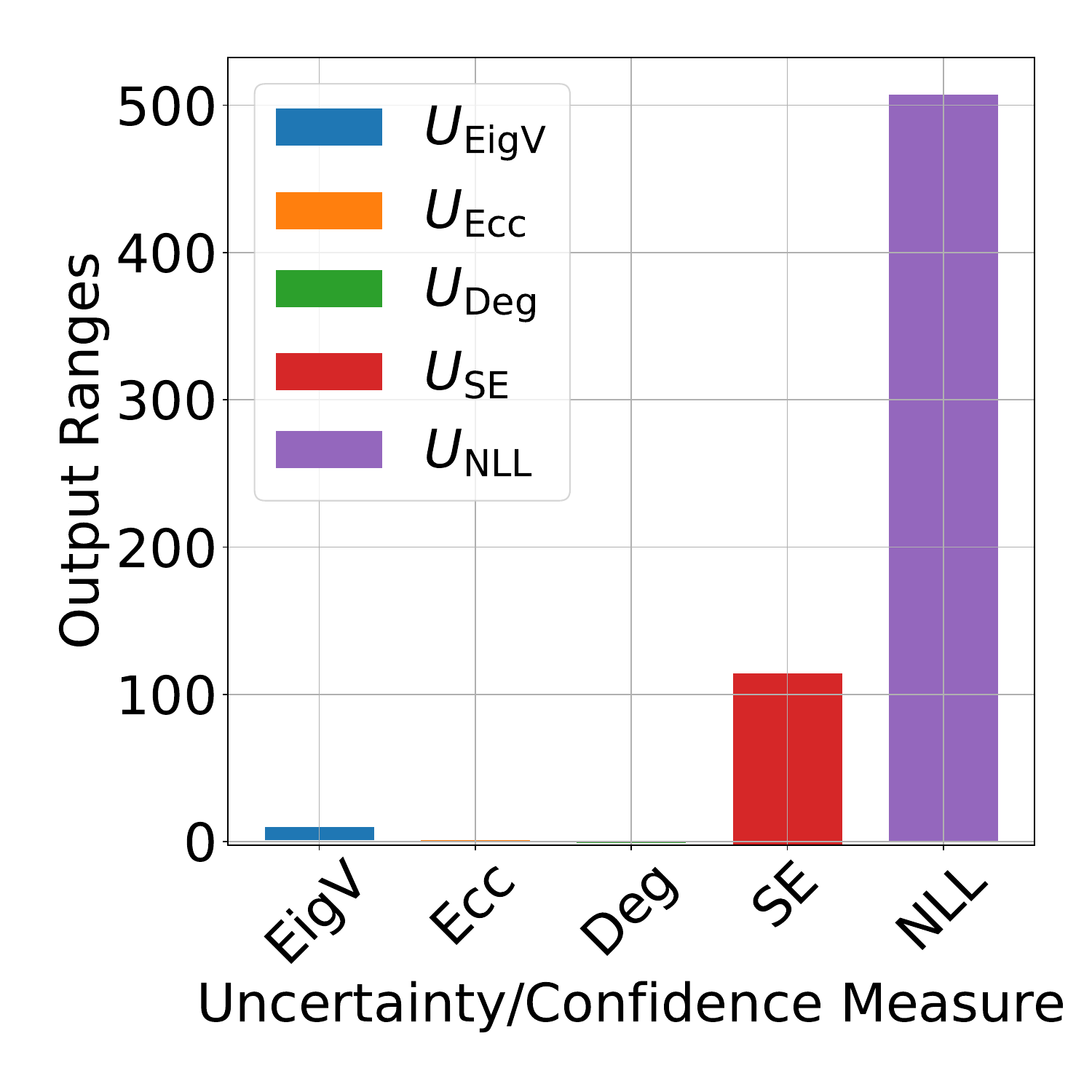}
  \caption{Output ranges}
\end{subfigure}
\caption{Results for TriviaQA using Llama-2-7b-chat and the Rouge score.}
\label{fig:rouge_llama_triviaqa}
\end{figure}

\begin{figure}[ht]
\centering
\begin{subfigure}{.24\textwidth}
  \centering
  \includegraphics[width=\linewidth]{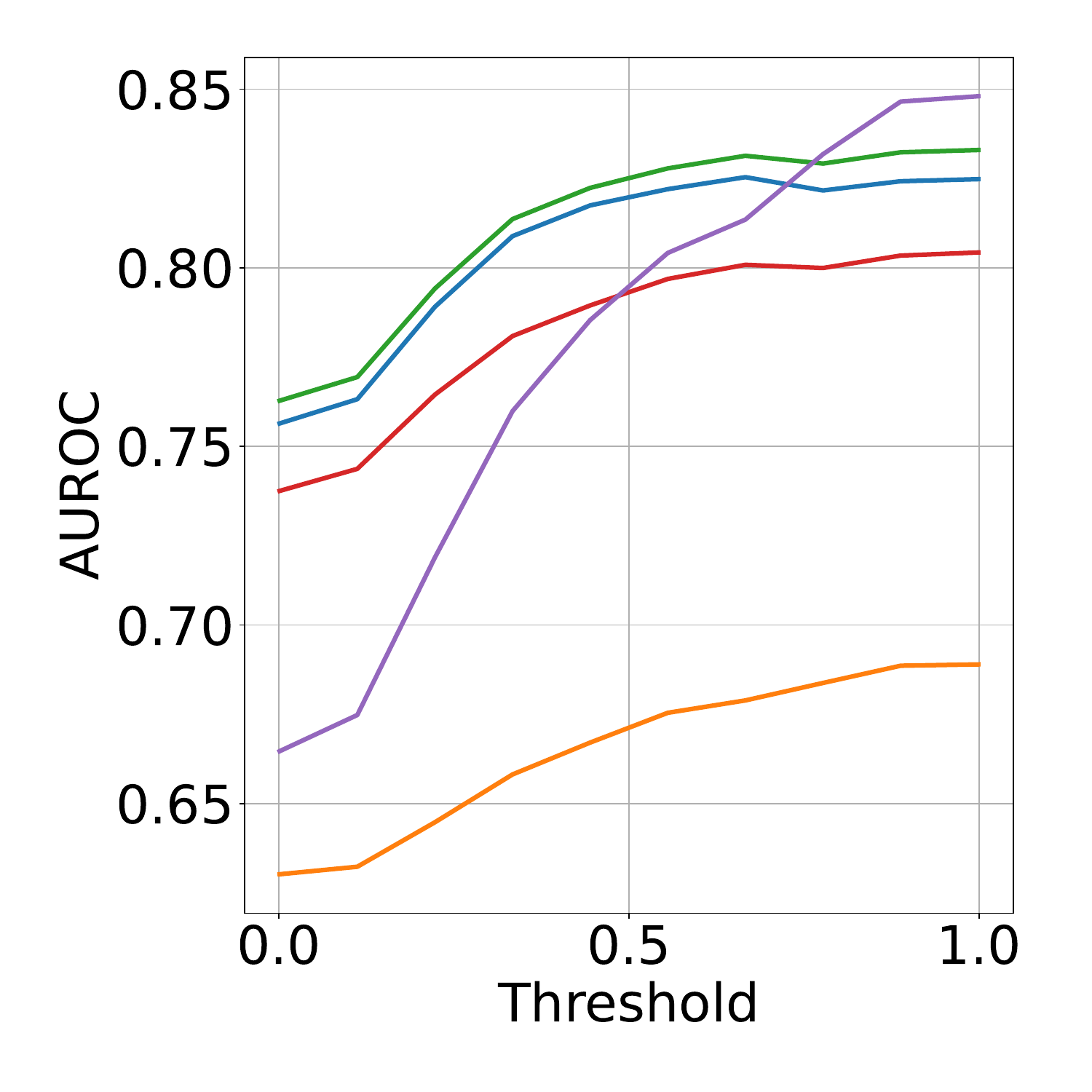}
  \caption{AUROC}
\end{subfigure}%
\begin{subfigure}{.24\textwidth}
  \centering
  \includegraphics[width=\linewidth]{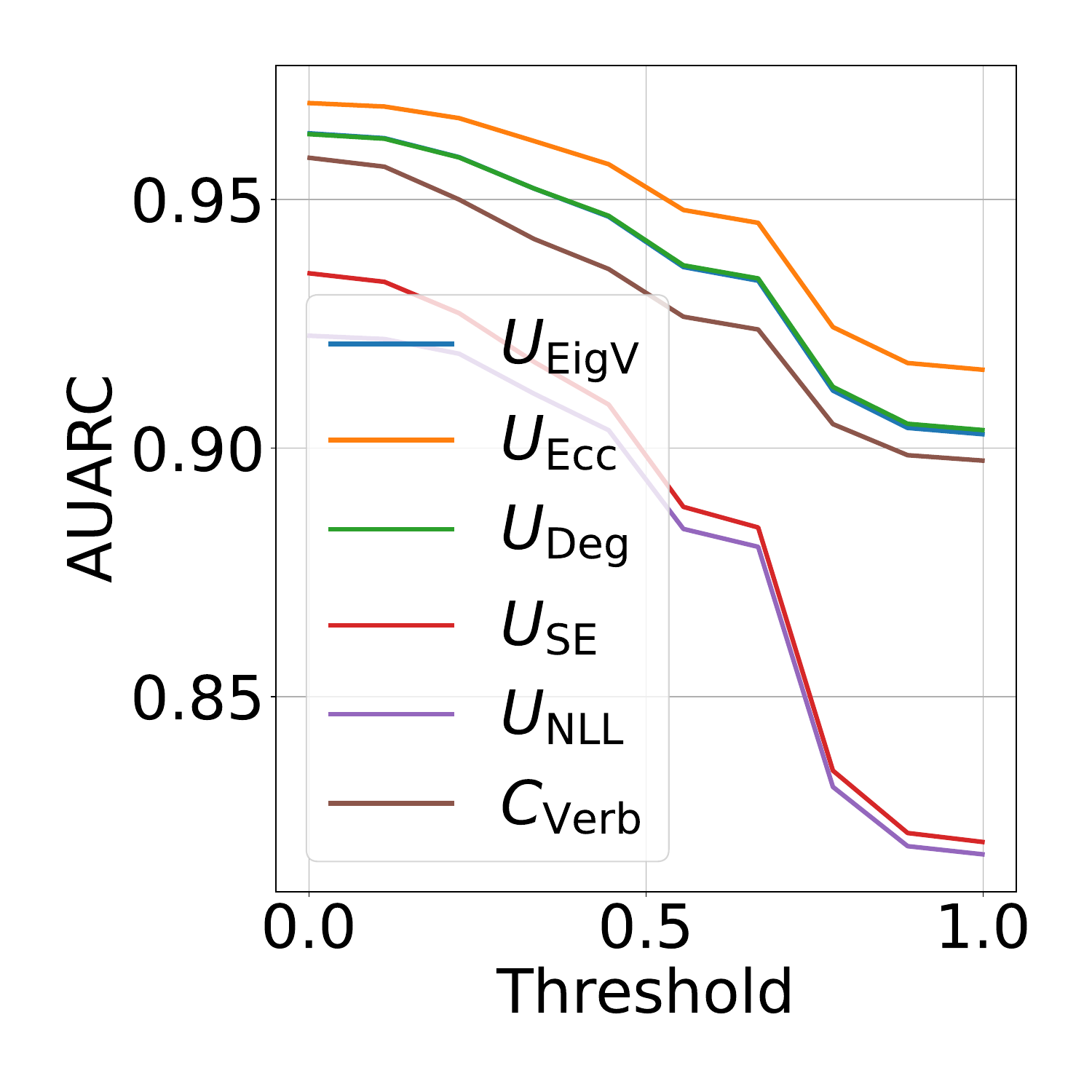}
  \caption{AUARC}
\end{subfigure}
\begin{subfigure}{.24\textwidth}
  \centering
  \includegraphics[width=\linewidth]{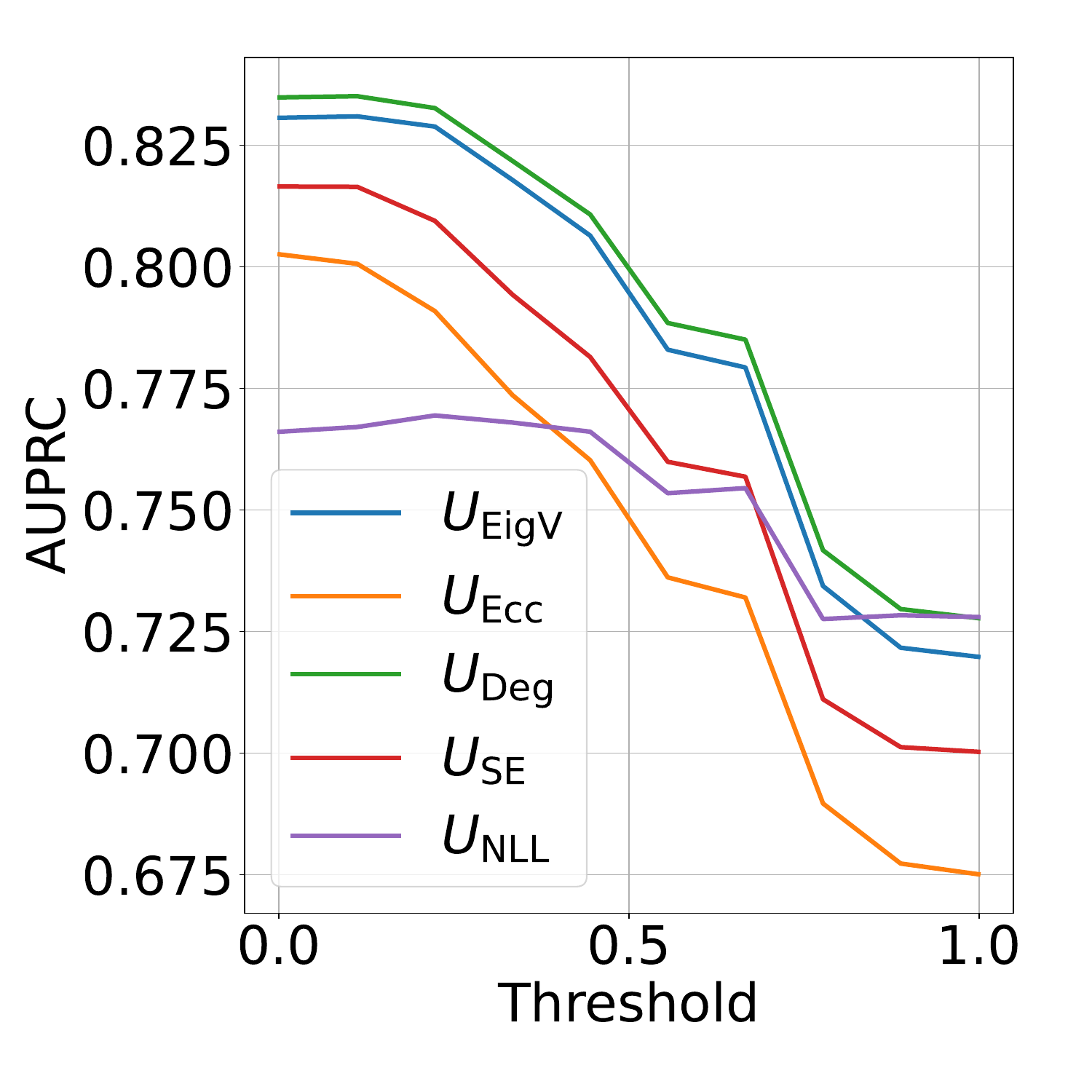}
  \caption{AUPRC}
\end{subfigure}%
\begin{subfigure}{.24\textwidth}
  \centering
  \includegraphics[width=\linewidth]{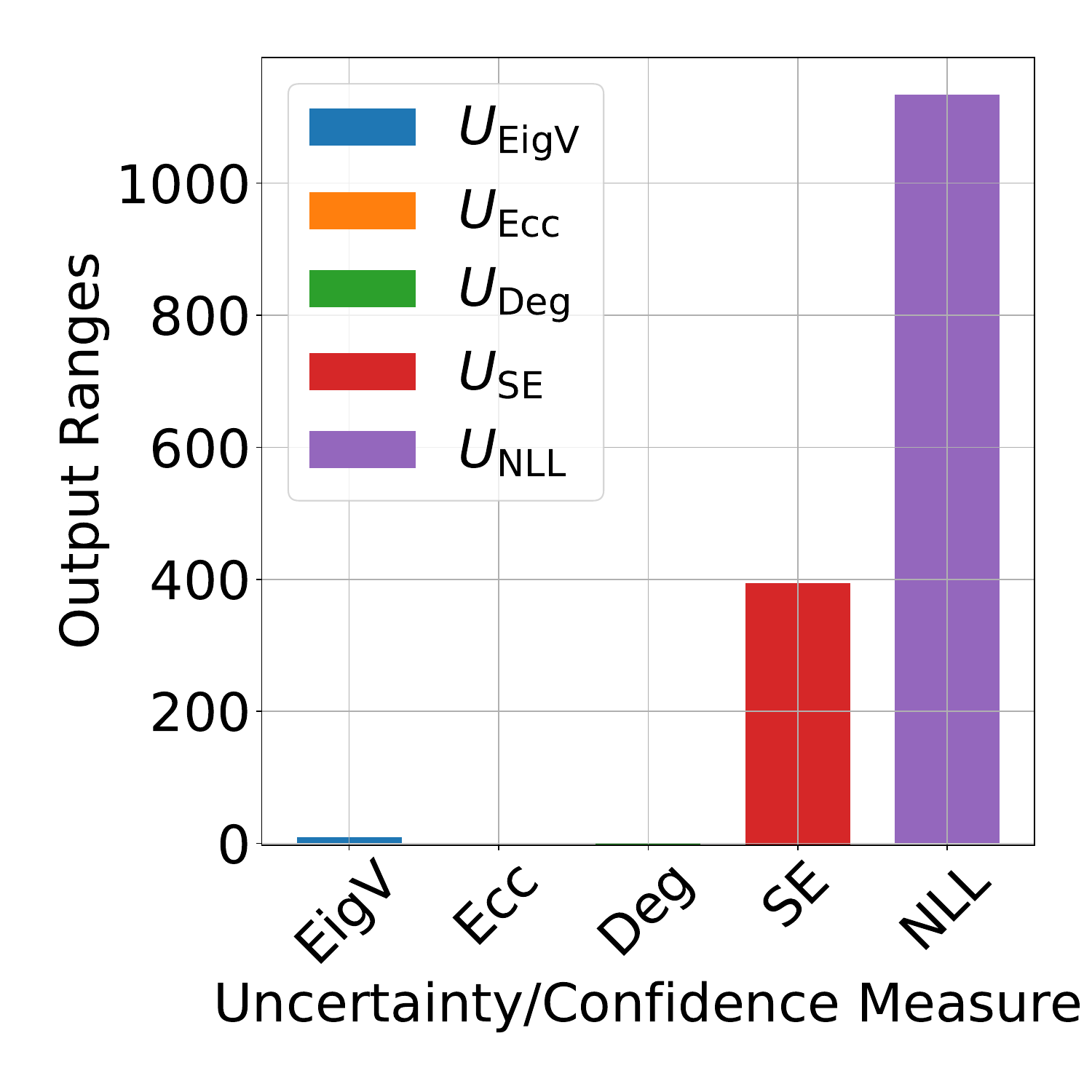}
  \caption{Output ranges}
\end{subfigure}
\caption{Results for  TriviaQA using Llama-2-7b-chat using temperature 1.0 and the Rouge score.}
\label{fig:rouge_llama_10}
\end{figure}

\section{Additional Experimental Results}

\begin{table}
    \centering
    \resizebox{\columnwidth}{!}{%
\begin{tabular}{llllllllll} \toprule
Model & Dataset & Correctness & Temperature  & $U_{\rm Ecc}$ & $U_{\rm Deg}$ & $U_{\rm EigV}$ & $U_{\rm NLL}$ & $U_{\rm SE}$ & $C_{\rm Verb}$ \\ \midrule
\multirow[c]{12}{*}{Llama-2} & \multirow[c]{4}{*}{nq-open} & bert & 0.6 & 0.302$_{\pm 0.044}$ & \bfseries 0.044$_{\pm 0.011}$ & 0.046$_{\pm 0.007}$ & 0.121$_{\pm 0.016}$ & 0.122$_{\pm 0.025}$ & nan \\ \cline{3-10}
 &  & meteor & 0.6 & 0.293$_{\pm 0.027}$ & \bfseries 0.072$_{\pm 0.010}$ & 0.077$_{\pm 0.015}$ & 0.167$_{\pm 0.021}$ & 0.137$_{\pm 0.024}$ & nan \\ \cline{3-10}
 &  & rougeL & 0.6 & 0.297$_{\pm 0.039}$ & 0.058$_{\pm 0.010}$ & \bfseries 0.051$_{\pm 0.010}$ & 0.147$_{\pm 0.021}$ & 0.124$_{\pm 0.019}$ & nan \\ \cline{3-10}
 &  & rouge1 & 0.6 & 0.297$_{\pm 0.038}$ & 0.057$_{\pm 0.011}$ & \bfseries 0.051$_{\pm 0.010}$ & 0.148$_{\pm 0.021}$ & 0.124$_{\pm 0.020}$ & nan \\ \cline{2-10}
 & \multirow[c]{4}{*}{squad} & bert & 0.6 & 0.308$_{\pm 0.041}$ & 0.071$_{\pm 0.013}$ & \bfseries 0.064$_{\pm 0.013}$ & 0.072$_{\pm 0.008}$ & 0.181$_{\pm 0.027}$ & nan \\ \cline{3-10}
 &  & meteor & 0.6 & 0.299$_{\pm 0.049}$ & 0.252$_{\pm 0.027}$ & \bfseries 0.247$_{\pm 0.029}$ & 0.419$_{\pm 0.018}$ & 0.407$_{\pm 0.024}$ & nan \\ \cline{3-10}
 &  & rougeL & 0.6 & 0.359$_{\pm 0.045}$ & \bfseries 0.139$_{\pm 0.033}$ & 0.150$_{\pm 0.027}$ & 0.187$_{\pm 0.028}$ & 0.332$_{\pm 0.036}$ & nan \\ \cline{3-10}
 &  & rouge1 & 0.6 & 0.360$_{\pm 0.044}$ & \bfseries 0.141$_{\pm 0.034}$ & 0.150$_{\pm 0.027}$ & 0.195$_{\pm 0.032}$ & 0.337$_{\pm 0.035}$ & nan \\ \cline{2-10}
 & \multirow[c]{4}{*}{triviaqa} & bert & 0.6 & 0.312$_{\pm 0.052}$ & \bfseries 0.020$_{\pm 0.005}$ & 0.028$_{\pm 0.007}$ & 0.244$_{\pm 0.012}$ & 0.061$_{\pm 0.008}$ & nan \\ \cline{3-10}
 &  & meteor & 0.6 & 0.305$_{\pm 0.048}$ & \bfseries 0.041$_{\pm 0.007}$ & 0.049$_{\pm 0.010}$ & 0.271$_{\pm 0.020}$ & 0.052$_{\pm 0.007}$ & nan \\ \cline{3-10}
 &  & rougeL & 0.6 & 0.305$_{\pm 0.050}$ & \bfseries 0.026$_{\pm 0.005}$ & 0.033$_{\pm 0.006}$ & 0.206$_{\pm 0.020}$ & 0.051$_{\pm 0.007}$ & nan \\ \cline{3-10}
 &  & rouge1 & 0.6 & 0.307$_{\pm 0.049}$ & \bfseries 0.026$_{\pm 0.005}$ & 0.034$_{\pm 0.006}$ & 0.209$_{\pm 0.019}$ & 0.052$_{\pm 0.007}$ & nan \\ \cline{1-10}
\multirow[c]{24}{*}{Llama-2-chat} & \multirow[c]{8}{*}{nq-open} & \multirow[c]{2}{*}{bert} & 0.6 & 0.199$_{\pm 0.040}$ & \bfseries 0.046$_{\pm 0.008}$ & 0.052$_{\pm 0.010}$ & 0.101$_{\pm 0.015}$ & 0.062$_{\pm 0.010}$ & nan \\ \cline{4-10}
 &  &  & 1.0 & 0.236$_{\pm 0.033}$ & \bfseries 0.035$_{\pm 0.008}$ & 0.038$_{\pm 0.007}$ & 0.097$_{\pm 0.017}$ & 0.055$_{\pm 0.012}$ & nan \\ \cline{3-10}
 &  & \multirow[c]{2}{*}{meteor} & 0.6 & 0.190$_{\pm 0.039}$ & \bfseries 0.062$_{\pm 0.008}$ & 0.067$_{\pm 0.010}$ & 0.176$_{\pm 0.018}$ & 0.072$_{\pm 0.009}$ & nan \\ \cline{4-10}
 &  &  & 1.0 & 0.224$_{\pm 0.034}$ & \bfseries 0.044$_{\pm 0.006}$ & 0.046$_{\pm 0.007}$ & 0.209$_{\pm 0.023}$ & 0.074$_{\pm 0.015}$ & nan \\ \cline{3-10}
 &  & \multirow[c]{2}{*}{rougeL} & 0.6 & 0.198$_{\pm 0.039}$ & \bfseries 0.053$_{\pm 0.011}$ & 0.057$_{\pm 0.010}$ & 0.167$_{\pm 0.013}$ & 0.060$_{\pm 0.012}$ & nan \\ \cline{4-10}
 &  &  & 1.0 & 0.227$_{\pm 0.035}$ & 0.035$_{\pm 0.007}$ & \bfseries 0.033$_{\pm 0.006}$ & 0.211$_{\pm 0.021}$ & 0.069$_{\pm 0.016}$ & nan \\ \cline{3-10}
 &  & \multirow[c]{2}{*}{rouge1} & 0.6 & 0.199$_{\pm 0.039}$ & \bfseries 0.054$_{\pm 0.010}$ & 0.057$_{\pm 0.010}$ & 0.167$_{\pm 0.014}$ & 0.061$_{\pm 0.013}$ & nan \\ \cline{4-10}
 &  &  & 1.0 & 0.227$_{\pm 0.035}$ & 0.034$_{\pm 0.007}$ & \bfseries 0.033$_{\pm 0.006}$ & 0.212$_{\pm 0.021}$ & 0.069$_{\pm 0.015}$ & nan \\ \cline{2-10}
 & \multirow[c]{8}{*}{squad} & \multirow[c]{2}{*}{bert} & 0.6 & 0.208$_{\pm 0.033}$ & 0.065$_{\pm 0.014}$ & 0.075$_{\pm 0.017}$ & \bfseries 0.048$_{\pm 0.007}$ & 0.063$_{\pm 0.012}$ & nan \\ \cline{4-10}
 &  &  & 1.0 & 0.276$_{\pm 0.039}$ & 0.067$_{\pm 0.011}$ & 0.063$_{\pm 0.010}$ & \bfseries 0.038$_{\pm 0.006}$ & 0.098$_{\pm 0.012}$ & nan \\ \cline{3-10}
 &  & \multirow[c]{2}{*}{meteor} & 0.6 & 0.216$_{\pm 0.038}$ & 0.303$_{\pm 0.026}$ & 0.265$_{\pm 0.022}$ & \bfseries 0.063$_{\pm 0.013}$ & 0.182$_{\pm 0.029}$ & nan \\ \cline{4-10}
 &  &  & 1.0 & 0.300$_{\pm 0.046}$ & 0.292$_{\pm 0.035}$ & 0.250$_{\pm 0.027}$ & \bfseries 0.064$_{\pm 0.011}$ & 0.274$_{\pm 0.021}$ & nan \\ \cline{3-10}
 &  & \multirow[c]{2}{*}{rougeL} & 0.6 & 0.239$_{\pm 0.036}$ & 0.177$_{\pm 0.026}$ & 0.143$_{\pm 0.020}$ & \bfseries 0.052$_{\pm 0.011}$ & 0.127$_{\pm 0.020}$ & nan \\ \cline{4-10}
 &  &  & 1.0 & 0.304$_{\pm 0.036}$ & 0.179$_{\pm 0.033}$ & 0.137$_{\pm 0.024}$ & \bfseries 0.053$_{\pm 0.012}$ & 0.210$_{\pm 0.027}$ & nan \\ \cline{3-10}
 &  & \multirow[c]{2}{*}{rouge1} & 0.6 & 0.238$_{\pm 0.037}$ & 0.183$_{\pm 0.027}$ & 0.148$_{\pm 0.022}$ & \bfseries 0.053$_{\pm 0.010}$ & 0.129$_{\pm 0.021}$ & nan \\ \cline{4-10}
 &  &  & 1.0 & 0.303$_{\pm 0.035}$ & 0.185$_{\pm 0.033}$ & 0.143$_{\pm 0.025}$ & \bfseries 0.053$_{\pm 0.012}$ & 0.213$_{\pm 0.026}$ & nan \\ \cline{2-10}
 & \multirow[c]{8}{*}{triviaqa} & \multirow[c]{2}{*}{bert} & 0.6 & 0.140$_{\pm 0.024}$ & 0.062$_{\pm 0.016}$ & 0.061$_{\pm 0.015}$ & \bfseries 0.020$_{\pm 0.004}$ & 0.027$_{\pm 0.007}$ & nan \\ \cline{4-10}
 &  &  & 1.0 & 0.213$_{\pm 0.030}$ & 0.025$_{\pm 0.006}$ & 0.034$_{\pm 0.006}$ & \bfseries 0.014$_{\pm 0.002}$ & 0.036$_{\pm 0.006}$ & nan \\ \cline{3-10}
 &  & \multirow[c]{2}{*}{meteor} & 0.6 & 0.145$_{\pm 0.027}$ & 0.067$_{\pm 0.017}$ & 0.064$_{\pm 0.015}$ & \bfseries 0.034$_{\pm 0.009}$ & 0.075$_{\pm 0.016}$ & nan \\ \cline{4-10}
 &  &  & 1.0 & 0.206$_{\pm 0.032}$ & \bfseries 0.035$_{\pm 0.007}$ & 0.046$_{\pm 0.005}$ & 0.049$_{\pm 0.008}$ & 0.084$_{\pm 0.007}$ & nan \\ \cline{3-10}
 &  & \multirow[c]{2}{*}{rougeL} & 0.6 & 0.141$_{\pm 0.021}$ & 0.062$_{\pm 0.014}$ & 0.061$_{\pm 0.014}$ & \bfseries 0.024$_{\pm 0.005}$ & 0.034$_{\pm 0.005}$ & nan \\ \cline{4-10}
 &  &  & 1.0 & 0.204$_{\pm 0.035}$ & 0.027$_{\pm 0.006}$ & 0.040$_{\pm 0.004}$ & \bfseries 0.022$_{\pm 0.002}$ & 0.051$_{\pm 0.007}$ & nan \\ \cline{3-10}
 &  & \multirow[c]{2}{*}{rouge1} & 0.6 & 0.141$_{\pm 0.021}$ & 0.062$_{\pm 0.014}$ & 0.062$_{\pm 0.013}$ & \bfseries 0.024$_{\pm 0.005}$ & 0.034$_{\pm 0.006}$ & nan \\ \cline{4-10}
 &  &  & 1.0 & 0.203$_{\pm 0.035}$ & 0.027$_{\pm 0.006}$ & 0.040$_{\pm 0.004}$ & \bfseries 0.022$_{\pm 0.002}$ & 0.051$_{\pm 0.007}$ & nan \\ \cline{1-10}
\multirow[c]{24}{*}{GPT-3.5} & \multirow[c]{4}{*}{meadow} & bert & 1.0 & 0.284$_{\pm 0.035}$ & 0.178$_{\pm 0.030}$ & 0.174$_{\pm 0.025}$ & \bfseries 0.112$_{\pm 0.022}$ & 0.177$_{\pm 0.027}$ & 0.288$_{\pm 0.033}$ \\ \cline{3-10}
 &  & meteor & 1.0 & 0.292$_{\pm 0.045}$ & 0.134$_{\pm 0.027}$ & 0.137$_{\pm 0.026}$ & \bfseries 0.074$_{\pm 0.012}$ & 0.132$_{\pm 0.018}$ & 0.263$_{\pm 0.050}$ \\ \cline{3-10}
 &  & rougeL & 1.0 & 0.278$_{\pm 0.045}$ & 0.130$_{\pm 0.022}$ & 0.131$_{\pm 0.025}$ & \bfseries 0.056$_{\pm 0.010}$ & 0.113$_{\pm 0.022}$ & 0.289$_{\pm 0.046}$ \\ \cline{3-10}
 &  & rouge1 & 1.0 & 0.290$_{\pm 0.047}$ & 0.126$_{\pm 0.018}$ & 0.135$_{\pm 0.020}$ & \bfseries 0.059$_{\pm 0.013}$ & 0.113$_{\pm 0.018}$ & 0.299$_{\pm 0.047}$ \\ \cline{2-10}
 & \multirow[c]{4}{*}{nq-open} & bert & 1.0 & 0.151$_{\pm 0.025}$ & 0.050$_{\pm 0.012}$ & 0.065$_{\pm 0.014}$ & \bfseries 0.039$_{\pm 0.008}$ & 0.050$_{\pm 0.007}$ & 0.487$_{\pm 0.005}$ \\ \cline{3-10}
 &  & meteor & 1.0 & 0.154$_{\pm 0.027}$ & 0.050$_{\pm 0.011}$ & 0.063$_{\pm 0.011}$ & \bfseries 0.046$_{\pm 0.011}$ & 0.060$_{\pm 0.009}$ & 0.452$_{\pm 0.018}$ \\ \cline{3-10}
 &  & rougeL & 1.0 & 0.151$_{\pm 0.022}$ & 0.048$_{\pm 0.011}$ & 0.062$_{\pm 0.012}$ & \bfseries 0.034$_{\pm 0.009}$ & 0.052$_{\pm 0.008}$ & 0.487$_{\pm 0.006}$ \\ \cline{3-10}
 &  & rouge1 & 1.0 & 0.153$_{\pm 0.023}$ & 0.048$_{\pm 0.011}$ & 0.063$_{\pm 0.012}$ & \bfseries 0.034$_{\pm 0.009}$ & 0.051$_{\pm 0.008}$ & 0.487$_{\pm 0.006}$ \\ \cline{2-10}
 & \multirow[c]{4}{*}{squad} & bert & 1.0 & 0.204$_{\pm 0.025}$ & 0.237$_{\pm 0.024}$ & 0.240$_{\pm 0.019}$ & \bfseries 0.065$_{\pm 0.012}$ & 0.113$_{\pm 0.013}$ & 0.181$_{\pm 0.029}$ \\ \cline{3-10}
 &  & meteor & 1.0 & 0.181$_{\pm 0.012}$ & 0.151$_{\pm 0.016}$ & 0.193$_{\pm 0.020}$ & \bfseries 0.054$_{\pm 0.017}$ & 0.086$_{\pm 0.014}$ & 0.182$_{\pm 0.032}$ \\ \cline{3-10}
 &  & rougeL & 1.0 & 0.222$_{\pm 0.025}$ & 0.270$_{\pm 0.023}$ & 0.269$_{\pm 0.016}$ & \bfseries 0.037$_{\pm 0.010}$ & 0.100$_{\pm 0.011}$ & 0.168$_{\pm 0.035}$ \\ \cline{3-10}
 &  & rouge1 & 1.0 & 0.226$_{\pm 0.024}$ & 0.276$_{\pm 0.023}$ & 0.270$_{\pm 0.017}$ & \bfseries 0.039$_{\pm 0.010}$ & 0.103$_{\pm 0.011}$ & 0.168$_{\pm 0.035}$ \\ \cline{2-10}
 & \multirow[c]{12}{*}{triviaqa} & \multirow[c]{3}{*}{bert} & 0.5 & 0.215$_{\pm 0.042}$ & 0.212$_{\pm 0.040}$ & 0.212$_{\pm 0.041}$ & \bfseries 0.043$_{\pm 0.006}$ & 0.052$_{\pm 0.009}$ & nan \\ \cline{4-10}
 &  &  & 1.0 & 0.152$_{\pm 0.025}$ & 0.129$_{\pm 0.020}$ & 0.133$_{\pm 0.020}$ & \bfseries 0.039$_{\pm 0.007}$ & 0.052$_{\pm 0.012}$ & 0.182$_{\pm 0.025}$ \\ \cline{4-10}
 &  &  & 1.5 & 0.142$_{\pm 0.018}$ & 0.053$_{\pm 0.011}$ & 0.074$_{\pm 0.012}$ & \bfseries 0.031$_{\pm 0.007}$ & 0.081$_{\pm 0.009}$ & nan \\ \cline{3-10}
 &  & \multirow[c]{3}{*}{meteor} & 0.5 & 0.215$_{\pm 0.049}$ & 0.211$_{\pm 0.045}$ & 0.208$_{\pm 0.047}$ & \bfseries 0.179$_{\pm 0.021}$ & 0.234$_{\pm 0.019}$ & nan \\ \cline{4-10}
 &  &  & 1.0 & 0.156$_{\pm 0.026}$ & 0.131$_{\pm 0.024}$ & \bfseries 0.131$_{\pm 0.022}$ & 0.146$_{\pm 0.011}$ & 0.209$_{\pm 0.012}$ & 0.194$_{\pm 0.036}$ \\ \cline{4-10}
 &  &  & 1.5 & 0.137$_{\pm 0.024}$ & \bfseries 0.059$_{\pm 0.011}$ & 0.077$_{\pm 0.012}$ & 0.119$_{\pm 0.010}$ & 0.176$_{\pm 0.015}$ & nan \\ \cline{3-10}
 &  & \multirow[c]{3}{*}{rougeL} & 0.5 & 0.214$_{\pm 0.046}$ & 0.210$_{\pm 0.042}$ & 0.207$_{\pm 0.041}$ & \bfseries 0.041$_{\pm 0.007}$ & 0.050$_{\pm 0.008}$ & nan \\ \cline{4-10}
 &  &  & 1.0 & 0.151$_{\pm 0.024}$ & 0.126$_{\pm 0.019}$ & 0.129$_{\pm 0.019}$ & \bfseries 0.038$_{\pm 0.007}$ & 0.059$_{\pm 0.009}$ & 0.181$_{\pm 0.026}$ \\ \cline{4-10}
 &  &  & 1.5 & 0.138$_{\pm 0.025}$ & 0.059$_{\pm 0.012}$ & 0.079$_{\pm 0.011}$ & \bfseries 0.034$_{\pm 0.008}$ & 0.104$_{\pm 0.007}$ & nan \\ \cline{3-10}
 &  & \multirow[c]{3}{*}{rouge1} & 0.5 & 0.216$_{\pm 0.046}$ & 0.212$_{\pm 0.043}$ & 0.209$_{\pm 0.042}$ & \bfseries 0.040$_{\pm 0.007}$ & 0.050$_{\pm 0.008}$ & nan \\ \cline{4-10}
 &  &  & 1.0 & 0.152$_{\pm 0.024}$ & 0.126$_{\pm 0.018}$ & 0.130$_{\pm 0.021}$ & \bfseries 0.039$_{\pm 0.007}$ & 0.060$_{\pm 0.009}$ & 0.176$_{\pm 0.027}$ \\ \cline{4-10}
 &  &  & 1.5 & 0.137$_{\pm 0.023}$ & 0.060$_{\pm 0.011}$ & 0.078$_{\pm 0.012}$ & \bfseries 0.034$_{\pm 0.008}$ & 0.105$_{\pm 0.008}$ & nan \\  \bottomrule
\end{tabular}
}
    \caption{RCE results for various experimental configurations.}
    \label{tab:all}
\end{table}

\begin{table}[H]
    \centering
    \resizebox{\columnwidth}{!}{%
\begin{tabular}{p{12cm}lllllll}
\toprule
Prompt & Reference & Generation & $\mathbb{P}(U_{\rm Ecc} \leq u)$ & $\mathbb{P}(U_{\rm Deg} \leq u)$ & $\mathbb{P}(U_{\rm EigV} \leq u)$ & $\mathbb{P}(U_{\rm SE} \leq u)$ & $\mathbb{P}(U_{\rm NLL} \leq u)$ \\
\midrule
Q: Who did Dr. Crippen murder? & his wife & His wife & 0.999 & 0.881 & 0.822 & 0.649 & 0.247 \\
Q: What are the only two musical notes which have no flats? & c and f & B and F & 0.999 & 0.761 & 0.769 & 0.898 & 0.691 \\
Q: Which Eastenders actor has played the policeman Nick Rowan on TV? & nick berry & Mark Jordon & 0.999 & 0.972 & 0.978 & 0.954 & 0.918 \\
Q: Which `B` was the name of the mechanical shark used in the original `Jaws` film? & bruce & Bruce & 0.999 & 0.761 & 0.769 & 0.337 & 0.183 \\
Q: Which actor does the interviewing in 'Interview with a Vampire'? & christian slater & Brad Pitt & 0.999 & 0.858 & 0.856 & 0.861 & 0.893 \\
Q: What did my true love bring to me on the Sixth Day of Christmas? & six geese-a-laying & Six geese a-laying & 0.999 & 0.761 & 0.769 & 0.736 & 0.688 \\
Q: In January 1957, Russell Endean became the first batsman to be dismissed from a test cricket match for doing what? & handling the ball & Handling the ball & 0.999 & 0.761 & 0.769 & 0.901 & 0.368 \\
Q: What are the first names of the two dancing instructors in the UK television series ‘Hi De Hi’? & barry and yvonne & Barry and Yvonne & 0.999 & 0.761 & 0.769 & 0.846 & 0.627 \\
Q: Who became the host of the UK television game show Blankety Blank in 1984? & les dawson & Les Dawson & 0.999 & 0.761 & 0.769 & 0.180 & 0.040 \\
Q: How much, in pounds sterling, does the Best in Show Winner receive at the annual Crufts Dog Show? & 100 pounds & £100 & 0.999 & 0.920 & 0.908 & 0.830 & 0.787 \\
Q: In the Billy Bunter stories, what is the surname of Bunter’s form teacher? & quelch & Quelch & 0.999 & 0.761 & 0.769 & 0.999 & 0.558 \\
Q: Which play is featured in the film The Producers? & springtime for hitler & Springtime for Hitler & 0.999 & 0.761 & 0.769 & 0.967 & 0.341 \\
Q: What provoked the war between Honduras and El Salvador in 1969? & a football match & A soccer match & 0.999 & 0.761 & 0.769 & 0.535 & 0.711 \\
Q: Which character was played by Linda Thorson in The Avengers? & tara king & Tara King & 0.999 & 0.824 & 0.885 & 0.919 & 0.399 \\
Q: According to a traditional English proverb, what is better than none? & half a loaf & A bad excuse & 0.999 & 0.972 & 0.978 & 0.931 & 0.908 \\
Q: In which Welsh village is there only one gay, apparently?! & llandewi breffi & Llanddewi Brefi & 0.999 & 0.926 & 0.963 & 0.950 & 0.906 \\
Q: On September 28th, NASA announced that what had been detected on Mars? & flowing water & Possible signs of life & 0.999 & 0.965 & 0.963 & 0.813 & 0.930 \\
Q: What are the first four words of the Bible, as recorded in Genesis? & in the beginning god & In the beginning, God & 0.653 & 0.650 & 0.651 & 0.574 & 0.557 \\
Q: Which national anthem was originally called the 'War Song for the Rhine Army'? & marsellaise & German national anthem & 0.694 & 0.858 & 0.837 & 0.785 & 0.888 \\
Q: Name the UK budget holiday company specialising in Turkey and Greece which went bust in July 2010? & goldtrail & Goldtrail & 0.999 & 0.920 & 0.902 & 0.894 & 0.655 \\
Q: Who has been President of France twice, but never been elected to the position? & alain poher & François Mitterrand & 0.999 & 0.920 & 0.902 & 0.854 & 0.864 \\
Q: What is the name of Madonna's proposed chain of fitness clubs? & hard candy fitness & Hard Candy Fitness & 0.999 & 0.761 & 0.769 & 0.996 & 0.183 \\
Q: Elvis Presley sang a few lines in German on which US hit song? & wooden heart & Wooden Heart & 0.999 & 0.761 & 0.769 & 0.998 & 0.270 \\
Q: What was the name of the book that was a collection of Aubrey Beardsley's work, published by Leonard Smithers in 1897? & a book of fifty drawings & The Yellow Book & 0.999 & 0.761 & 0.769 & 0.950 & 0.775 \\
Q: Dishes prepared with spinach can be referred to as what? & la florentine & Spinach dishes & 0.999 & 0.920 & 0.902 & 0.943 & 0.899 \\
Q: Which English civil engineer's most famous project was the construction of Tower Bridge over the River Thames in London? & sir john wolfe-barry & Sir John Wolfe Barry & 0.999 & 0.761 & 0.769 & 0.830 & 0.633 \\
Q: Where did the space probe New Horizons launched by NASA in 2006 aim to investigate? & pluto and the kuiper belt & Pluto and the Kuiper Belt & 0.999 & 0.905 & 0.904 & 0.905 & 0.576 \\
Q: Where woud you find a nave or an apse? & in a church & In a church & 0.999 & 0.761 & 0.769 & 0.236 & 0.185 \\
Q: What is the name of Jay-Z and Beyonce's daughter? & blue ivy & Blue Ivy & 0.999 & 0.976 & 0.965 & 0.975 & 0.354 \\
Q: 'Feel Like Making Love' and 'The First Time Ever I Saw Your Face' were hit singles for which female artist? & roberta flack & Roberta Flack & 0.999 & 0.761 & 0.769 & 0.864 & 0.046 \\
Q: In the nursery rhyme, who pulled pussy out of the well? & little tommy stout & Tommy & 0.999 & 0.976 & 0.987 & 0.962 & 0.882 \\
Q: "In the film of the same name, what was the name of ""The Hustler""?" & """fast eddie"" felson" & Fast Eddie Felson & 0.999 & 0.761 & 0.769 & 0.708 & 0.692 \\
Q: In Camberwick Green on Children's TV who was the commander of Pippin Fort? & captain snort & Captain Snort & 0.999 & 0.761 & 0.769 & 0.961 & 0.156 \\
Q: In Chigley on Children's TV who owned the steam railway and drove the steam engine 'Bessie'? & lord belborough & Lord Belborough & 0.999 & 0.761 & 0.769 & 0.951 & 0.401 \\
Q: Who won the gold medal in the women's Skeleton Bob at the 2010 Vancouver Winter Olympics? & amy williams & Amy Williams & 0.999 & 0.881 & 0.822 & 0.676 & 0.265 \\
Q: What decoration, a Cross, was first awarded in 1995 to Corporal Wayne Mills for his actions in Bosnia? & conspicuous gallantry & George Cross & 0.999 & 0.844 & 0.783 & 0.801 & 0.899 \\
Q: What was the French sounding winner of the 2011 Epsom Derby? & pour moi & Pour Moi & 0.999 & 0.761 & 0.769 & 0.321 & 0.101 \\
Q: Who originally provided the voice for TV's 'Basil Brush'? & ivan owen & Ivan Owen & 0.999 & 0.761 & 0.769 & 0.987 & 0.454 \\
Q: "Which actress played 'Valeria"" in the film Carry On Screaming?" & fenella fielding & Fenella Fielding & 0.999 & 0.761 & 0.769 & 0.862 & 0.206 \\
Q: Which of the 'Spice Girls' advertised 'Milky Way' ob t.v.? & emma bunton (baby spice) & Victoria Beckham (Posh Spice) & 0.999 & 0.949 & 0.963 & 0.985 & 0.847 \\
Q: Give any year in the life of the Portuguese prince known as Henry the Navigator. & 1394-1460 & 1394-1460 & 0.999 & 0.761 & 0.769 & 0.680 & 0.671 \\
Q: On which horse did Sir Gordon Richards ride his only Epsom Derby winner? & pinza & Pinza & 0.999 & 0.824 & 0.885 & 0.987 & 0.229 \\
Q: What was the name of the aeroplane in which Wiley Post became the first pilot to fly solo around the world? & 'winnie mae' & Winnie Mae & 0.999 & 0.761 & 0.769 & 0.849 & 0.654 \\
Q: Who was the husband of Rebekah Brooks from 2002 to 2009? & ross kemp & Ross Kemp & 0.999 & 0.761 & 0.769 & 0.826 & 0.746 \\
Q: Whole Again and Eternal Flame were Number Ones for which girl group in 2001? & atomic kitten & Atomic Kitten & 0.999 & 0.761 & 0.769 & 0.180 & 0.026 \\
Q: During a penalty shoot out in soccer where should the non participating players be & in the centre circle & Outside of the penalty area & 0.999 & 0.985 & 0.987 & 0.987 & 0.960 \\
Q: On which game show was Bobby Charlton once a contestant and winner & double your money & A Question of Sport & 0.999 & 0.961 & 0.963 & 0.987 & 0.952 \\
Q: From 'On Her Majesty's Secret Service' (1969), as Bond passes a janitor in Draco's headquarters, the man can be heard whistling what? & the goldfinger (1964) theme & "Goldfinger" & 0.999 & 0.944 & 0.940 & 0.984 & 0.886 \\
Q: A Paris grocer was jailed for two years in 1978 stabbing wife what? & a wedge of hard cheese & Knife & 0.999 & 0.976 & 0.987 & 0.974 & 0.849 \\
\bottomrule
\end{tabular}
}
    \caption{Examples of correctness and the according uncertainty levels.}
    \label{tab:all_qual}
\end{table}

\subsection{Qualitative Illustration}
\label{app:qual}

\noindent \tcbset{colback=blue!5!white,colframe=blue!75!black,fonttitle=\bfseries,width=\textwidth,nobeforeafter}
\begin{tcolorbox}[box align=center]
{\fontfamily{qcr}\selectfont 
    \noindent $\bx$: In 1840 the world’s first postage stamps printed were the Penny Black and which other?\\
    $\by$: twopenny blue \\
    $\widehat\by$: The Penny Red \\
    $\PP(U_{\rm SE} \leq u)$: 0.825\\
    $\PP(U_{\rm NLL} \leq u)$: 0.864
    \vspace{-1.5mm}
}
\end{tcolorbox}

\noindent
\tcbset{colback=red!5!white,colframe=red!75!black,fonttitle=\bfseries,width = \textwidth}
\begin{tcolorbox}[box align=center]
{\fontfamily{qcr}\selectfont 
    \noindent $\bx$: Championship dragon boat racing calls for a specialised long boat, a team of paddlers (typically 20), a sweeper to steer and which other of these?  \\
    $\by$: a drummer and drum\\
    $\widehat \by$: A drummer \\
    $\PP(U_{\rm SE} \leq u)$: 0.946\\
    $\PP(U_{\rm NLL} \leq u)$: 0.704
    \vspace{-1.5mm}
}
\end{tcolorbox}

\noindent \tcbset{colback=blue!5!white,colframe=blue!75!black,fonttitle=\bfseries,width=\textwidth,nobeforeafter}
\begin{tcolorbox}[box align=center]
{\fontfamily{qcr}\selectfont 
    \noindent $\bx$: Who has the highest suicide rate in the UK?\\
    $\by$: men - by a ratio of roughly 4 to 1 \\
    $\widehat\by$: Middle-aged men \\
    $\PP(U_{\rm SE} \leq u)$: 0.745\\
    $\PP(U_{\rm NLL} \leq u)$: 0.894
    \vspace{-1.5mm}
}
\end{tcolorbox}

\noindent
\tcbset{colback=red!5!white,colframe=red!75!black,fonttitle=\bfseries,width = \textwidth}
\begin{tcolorbox}[box align=center]
{\fontfamily{qcr}\selectfont 
    \noindent $\bx$: Which East Midlands club holds the Football League record for most games played?  \\
    $\by$:  nots county\\
    $\widehat \by$: Notts County\\
    $\PP(U_{\rm SE} \leq u)$: 0.842\\
    $\PP(U_{\rm NLL} \leq u)$: 0.793
    \vspace{-1.5mm}
}
\end{tcolorbox}
\vspace{1mm}

We provide more instances to show the qualitative effect of our \ref{eqn:rank-ece}-based assessment in Table~\ref{tab:all_qual}.


\subsection{Recalibration with Histogram Binning}\label{app:recalibrastion}
We use equal-mass histogram binning to recalibrate, in a post-hoc manner, the performance of an uncertainty (or confidence) measure on a specific benchmark. Specifically, given a dataset $\{(u_i, a_i)\}_{i=1}^n$ of uncertainty and correctness values computed over a benchmark, where each
$u_i\hspace{-1pt}=\hspace{-1pt}U(\bx;\widehat \by_i)$, $a_i\hspace{-1pt}=\hspace{-1pt}A(\bx_i ;\widehat \by_i)$, and $\widehat \by_i$ is a response generated by the LM. Then, we first randomly split it into the calibration set $\{(u_i, a_i)\}_{i=1}^{n_{\rm cal}}$ and the test set  $\{(u_i, a_i)\}_{i=n_{\rm cal}+1}^{n}$. Similar to the operations in Sec.~\ref{sec:rce+diagram}, we partition the range of $U$ into $B$ bins $\{{\rm bin}_{b}\}_{b=1}^B$ whose boundaries are quantiles of $\{(u_i, a_i)\}_{i=n_{\rm cal}+1}^{n}$. Then, we estimate the expected correctness level over the ${\rm bin}_{b}$ as 
\begin{equation*}
    {\rm crc}_{b,{\rm cal}}:=\frac{1}{|\cI_{b,{\rm cal}}|}\sum_{i\in \cI_{b,{\rm cal}}}a_i
\end{equation*}
where $\cI_{b,{\rm cal}}\triangleq \{i:1\leq i\leq n_{\rm cal}, u_i\in {\rm bin}_{b}\}$. 
We re-calibrate the measure $U$, defining $U_{\rm cal}$ via $U_{\rm cal}(\bx;\widehat \by)={\rm crc}_{b,{\rm cal}}$ for any $U(\bx;\widehat \by)\in {\rm bin}_{b}$. 
We split the benchmark data equally into calibration and test sets and evaluate the performance of the calibrated measure on the test set.  
Table~\ref{tab:all-SE-GPT-calibrated} and Fig.~\ref{fig:gpt_1.0_cal_meteor} and~\ref{fig:gpt_1.5_cal_triviaqa} list the RCE results of $U_{\rm SE}$ for GPT-3.5-turbo before and after calibration. 
We observe the calibrated measure is significantly better rank-calibrated, showing the effectiveness of this strategy.

\begin{figure*}[t]
\centering
\begin{subfigure}{.5\textwidth}
  \centering
  \includegraphics[width=0.5\linewidth]{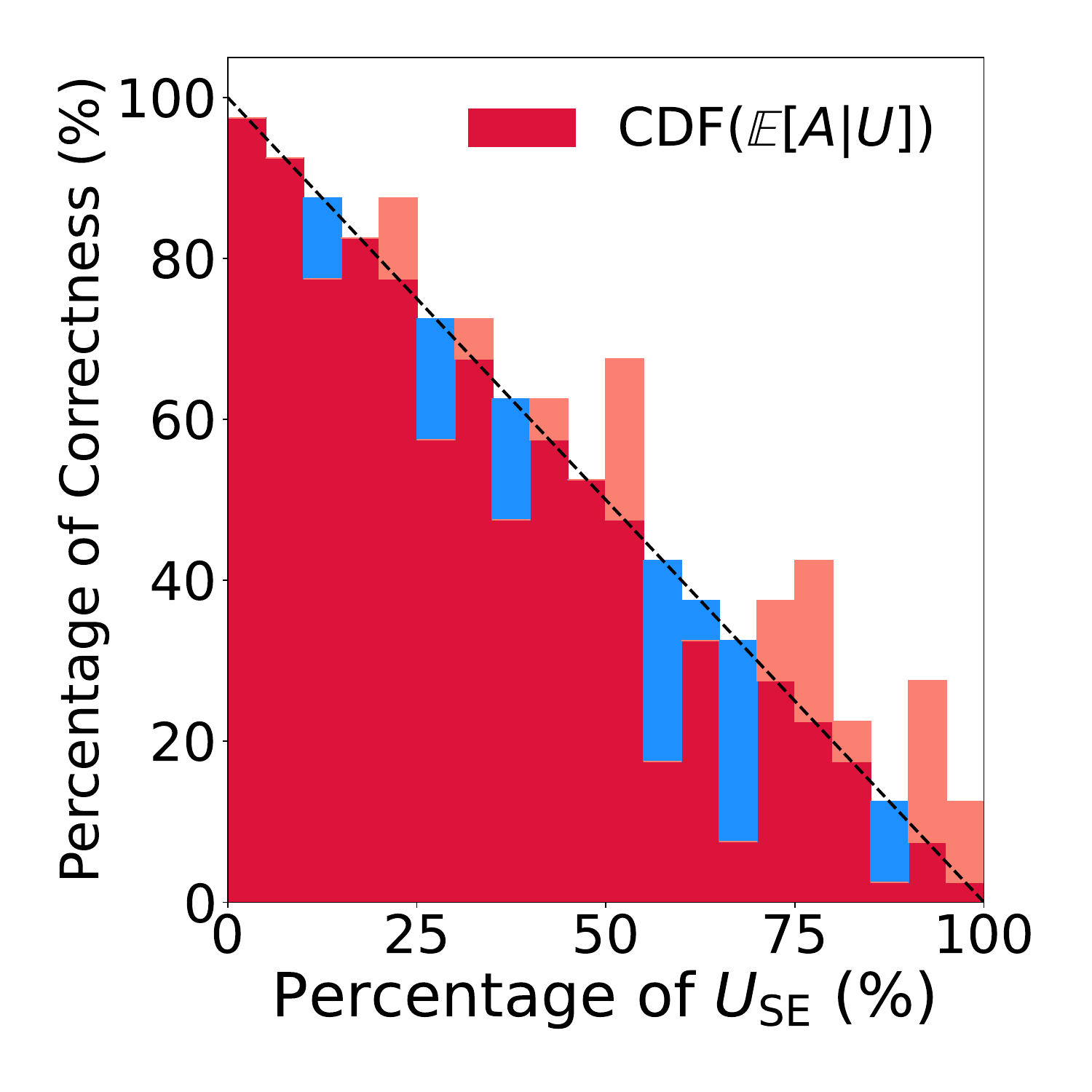}
  \hspace{-4mm}
  \includegraphics[width=0.5\linewidth]{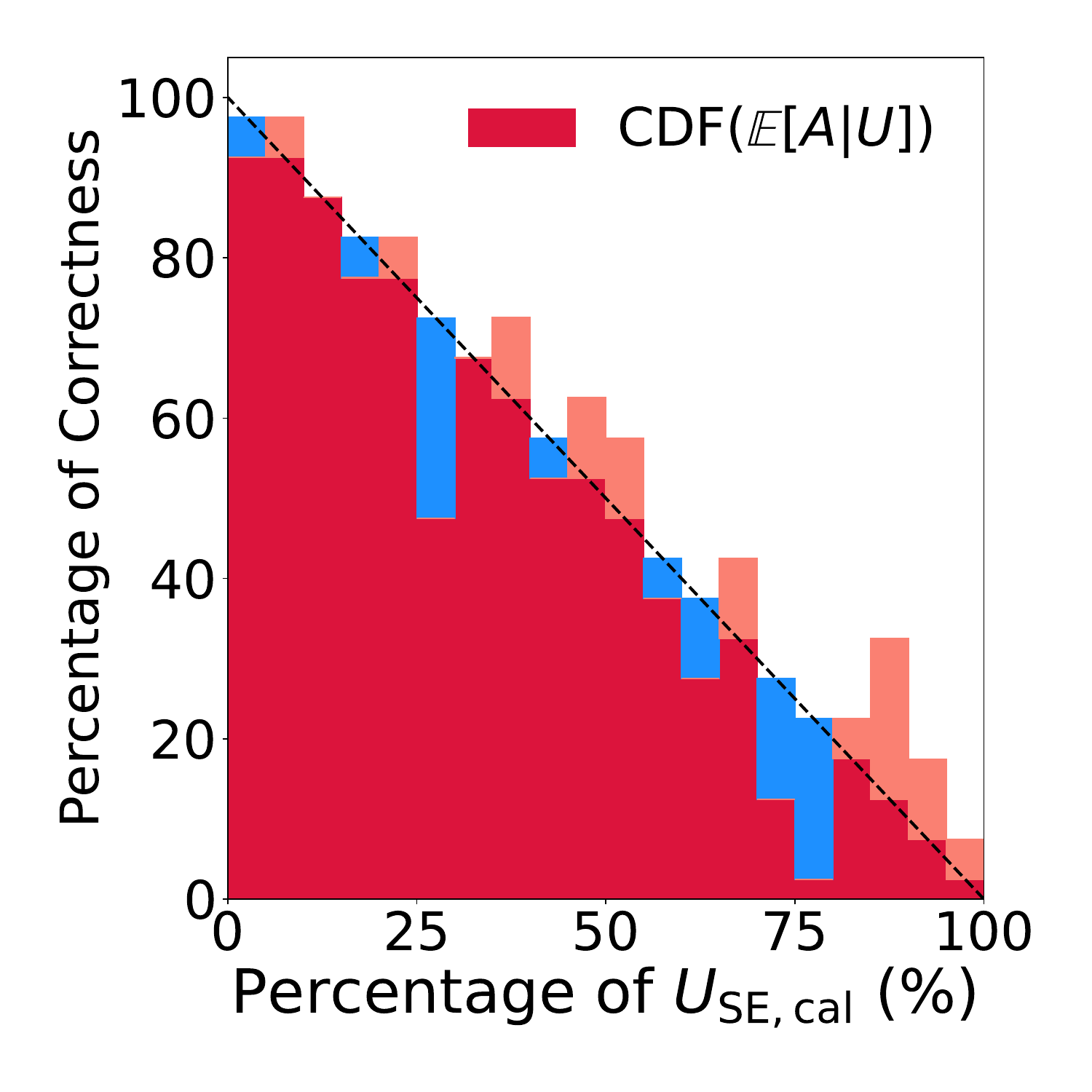}
  \caption{Meadow}
  \label{fig:rce_se_cal_meadow}
\end{subfigure}%
\begin{subfigure}{.5\textwidth}
  \centering
  \includegraphics[width=0.5\linewidth]{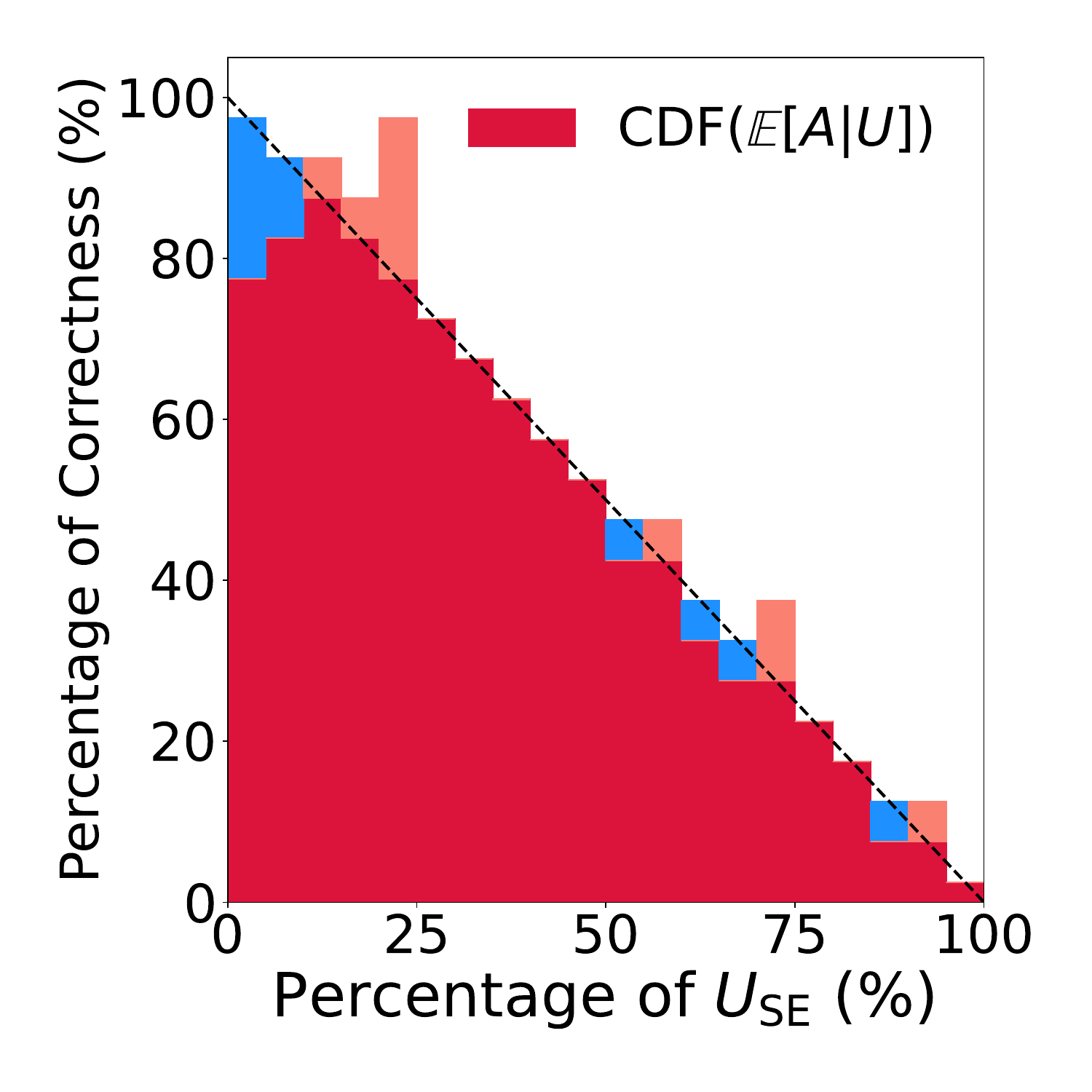}
  \hspace{-4mm}
  \includegraphics[width=0.5\linewidth]{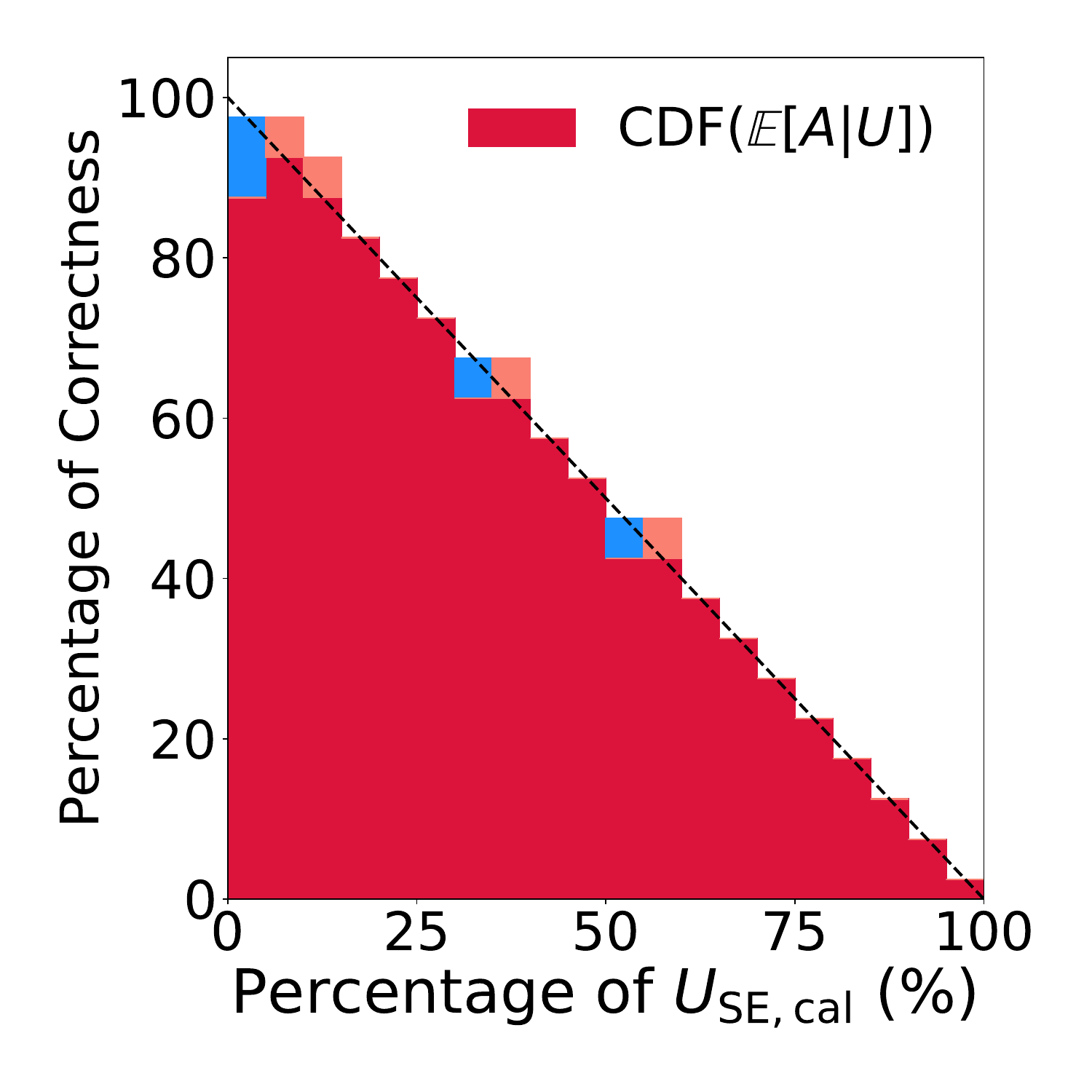}
  \caption{NQ-Open}
  \label{fig:rce_se_cal_nq-open}
\end{subfigure}\\
\begin{subfigure}{.5\textwidth}
  \centering
  \includegraphics[width=0.5\linewidth]{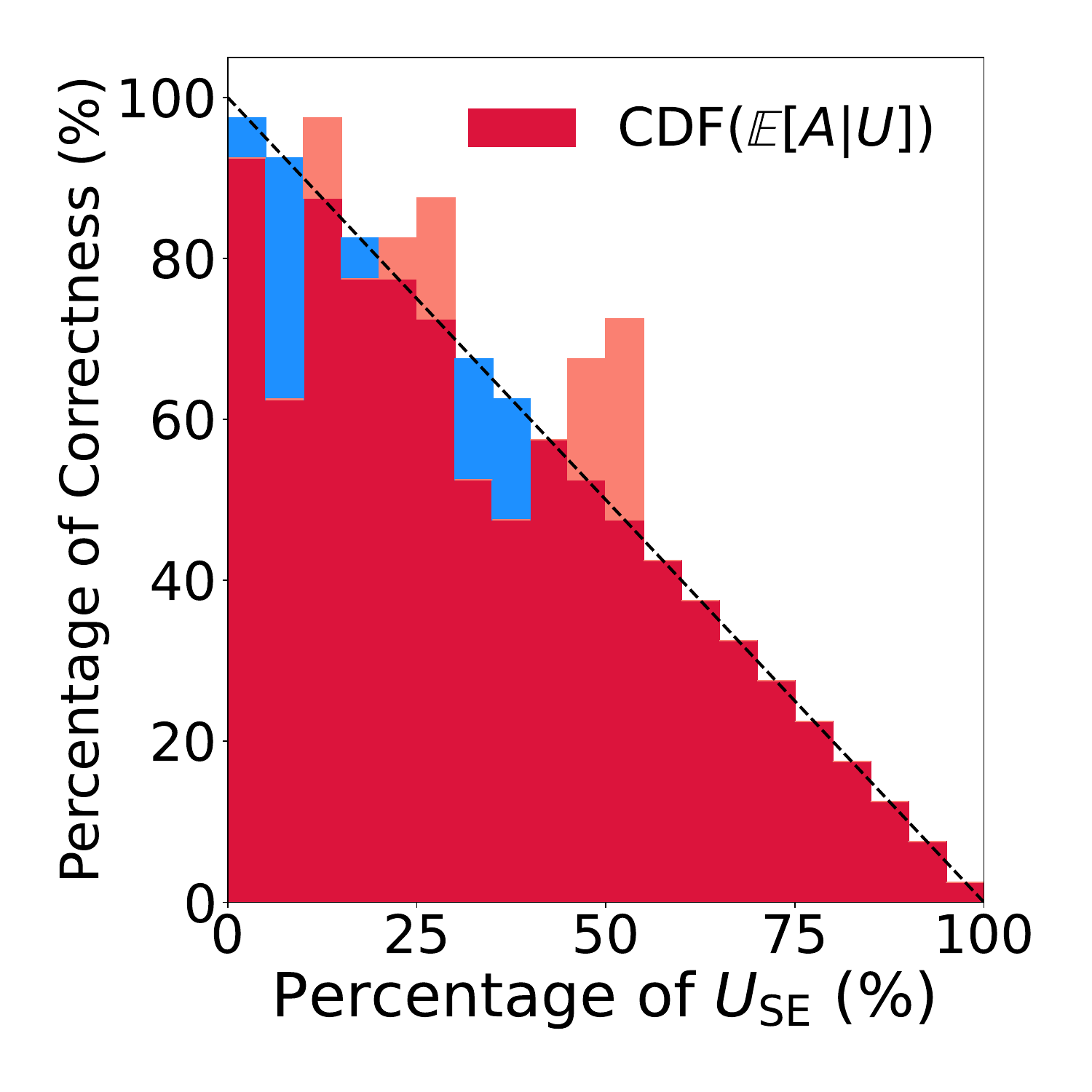}
  \hspace{-4mm}
  \includegraphics[width=0.5\linewidth]{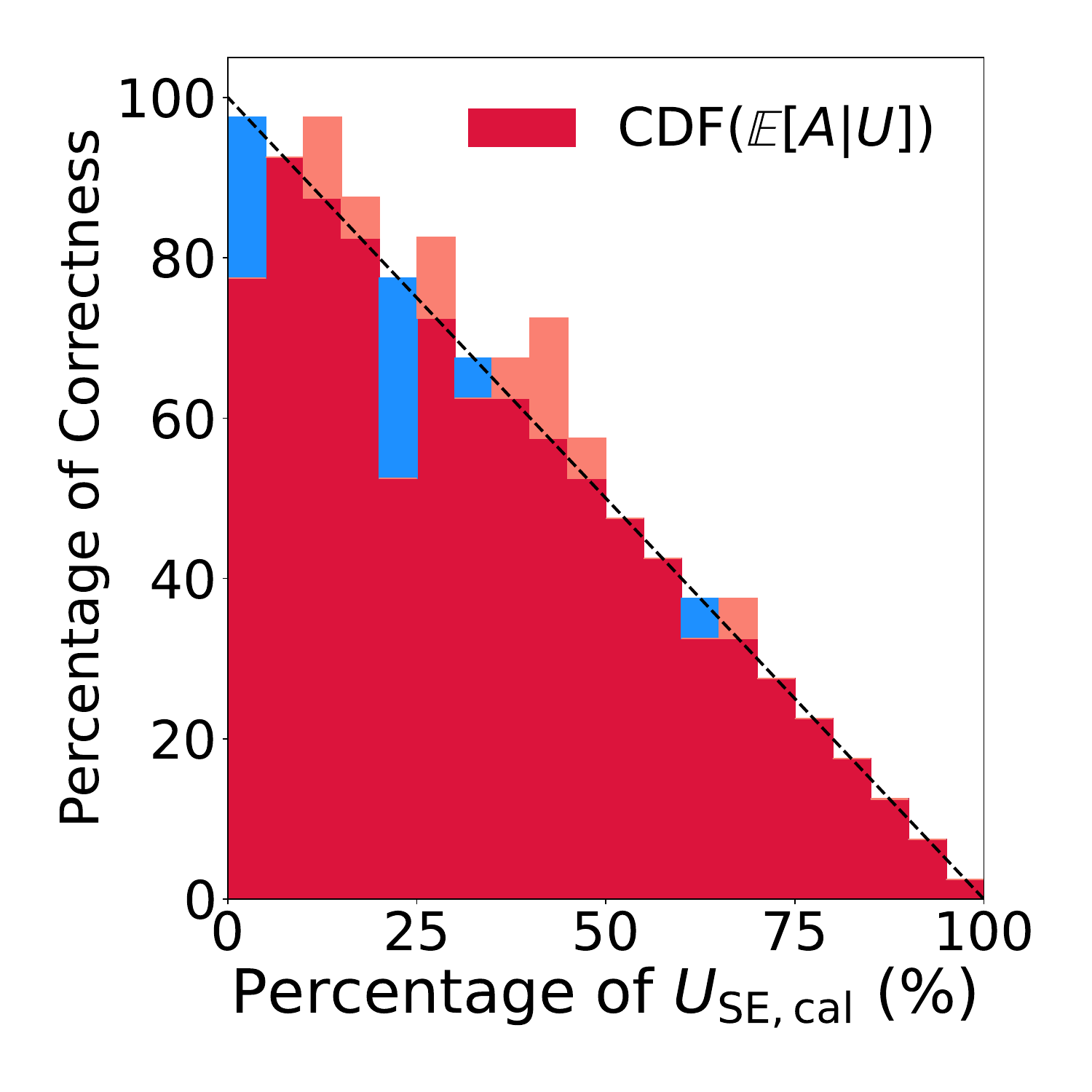}
  \caption{Squad}
  \label{fig:rce_se_cal_squad}
\end{subfigure}
\hspace{-2mm}
\begin{subfigure}{.5\textwidth}
  \centering
  \includegraphics[width=0.5\linewidth]{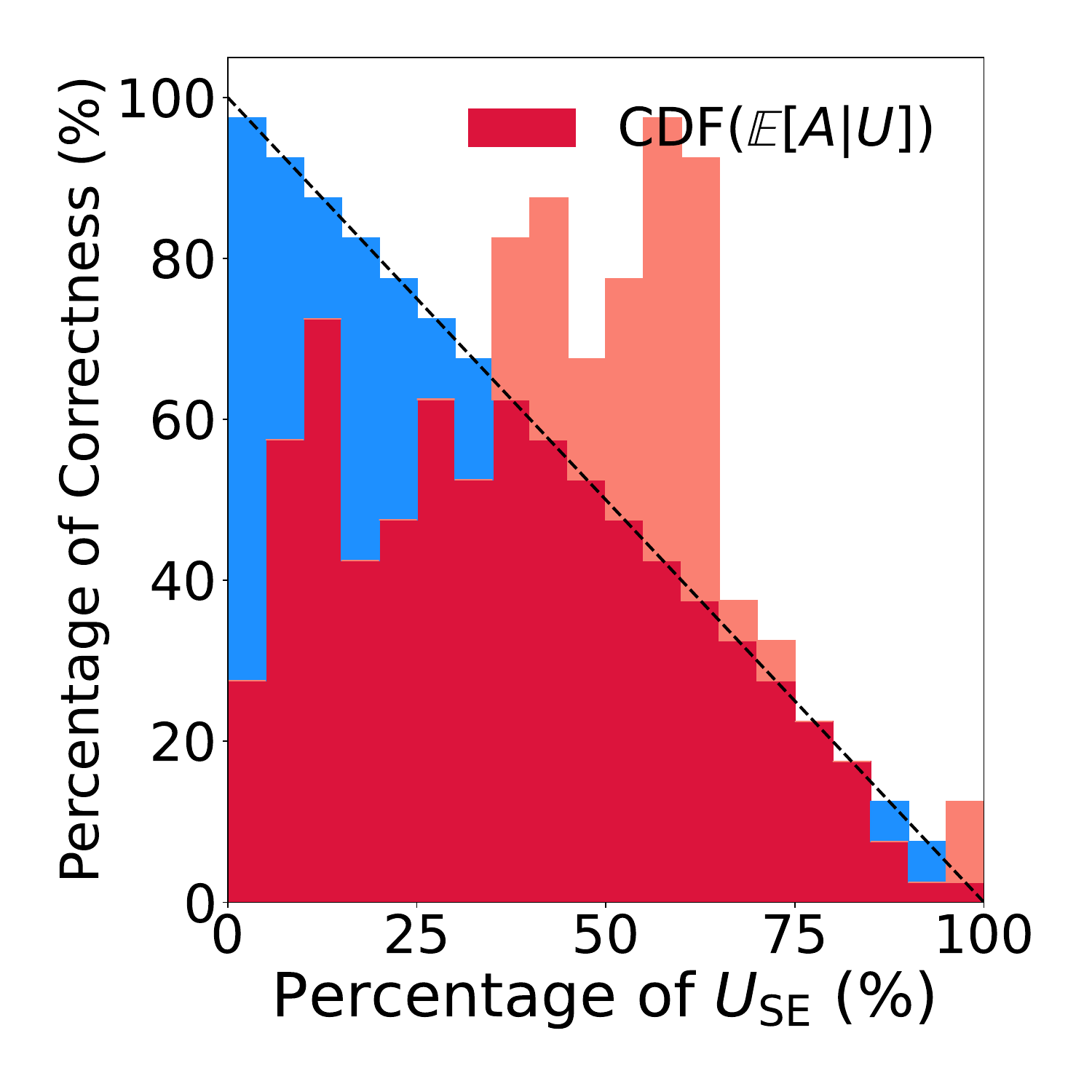}
  \hspace{-5mm}
  \includegraphics[width=0.5\linewidth]{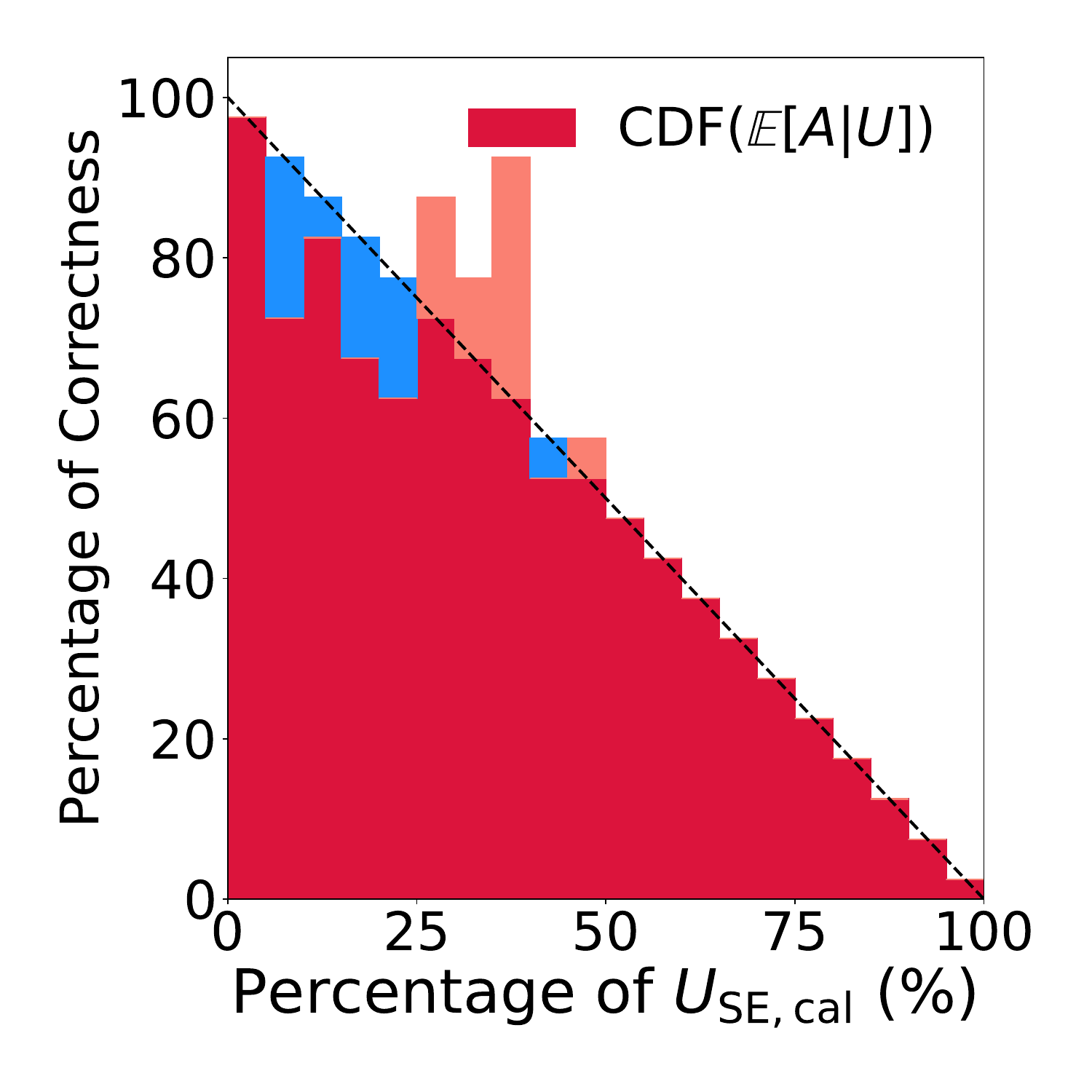}
  \caption{TrviaQA}
  \label{fig:rce_se_cal_triviaqa}
\end{subfigure}
\caption{
Indication diagrams of $U_{\rm SE}$ and $U_{\rm SE, cal}$ (post-calibrated) for GPT-3.5-turbo (temperature 1.0) on various benchmarks with the Meteor correctness.}
\label{fig:gpt_1.0_cal_meteor}
\vspace{-4mm}
\end{figure*}

\begin{figure*}
\centering
\begin{subfigure}{.5\textwidth}
  \centering
  \includegraphics[width=0.5\linewidth]{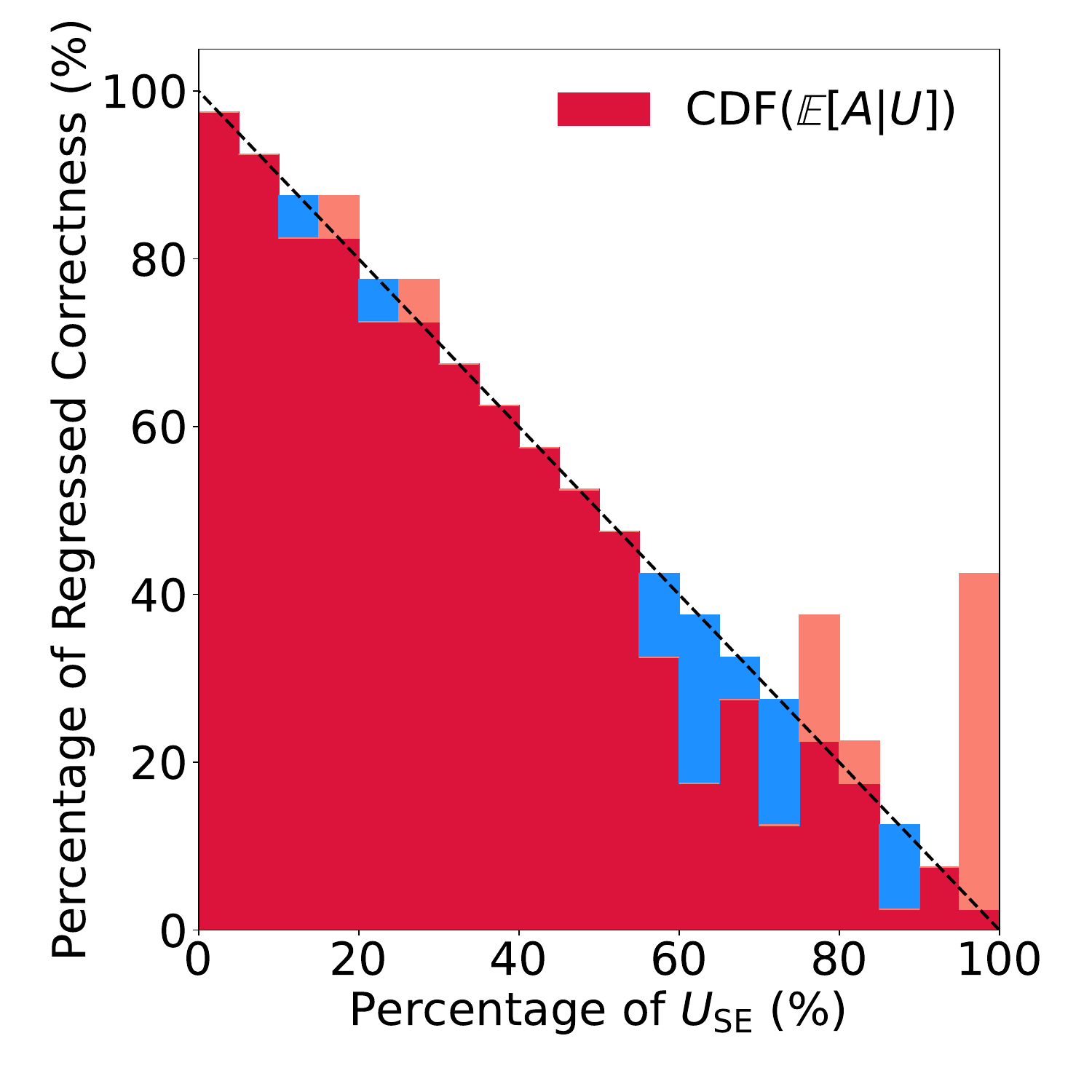}
  \hspace{-4mm}
  \includegraphics[width=0.5\linewidth]{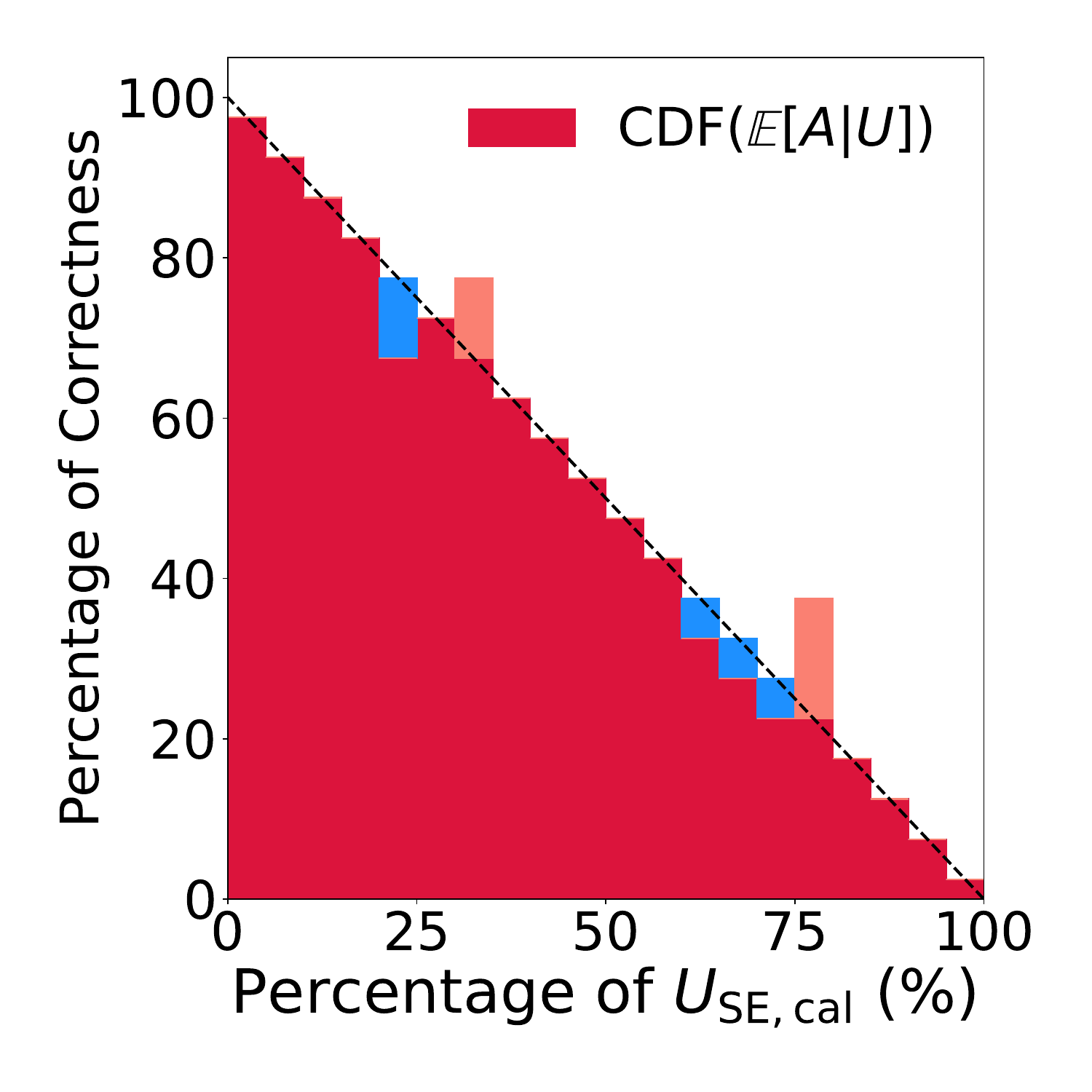}
  \caption{Bert Similarity}
\end{subfigure}%
\begin{subfigure}{.5\textwidth}
  \centering
  \includegraphics[width=0.5\linewidth]{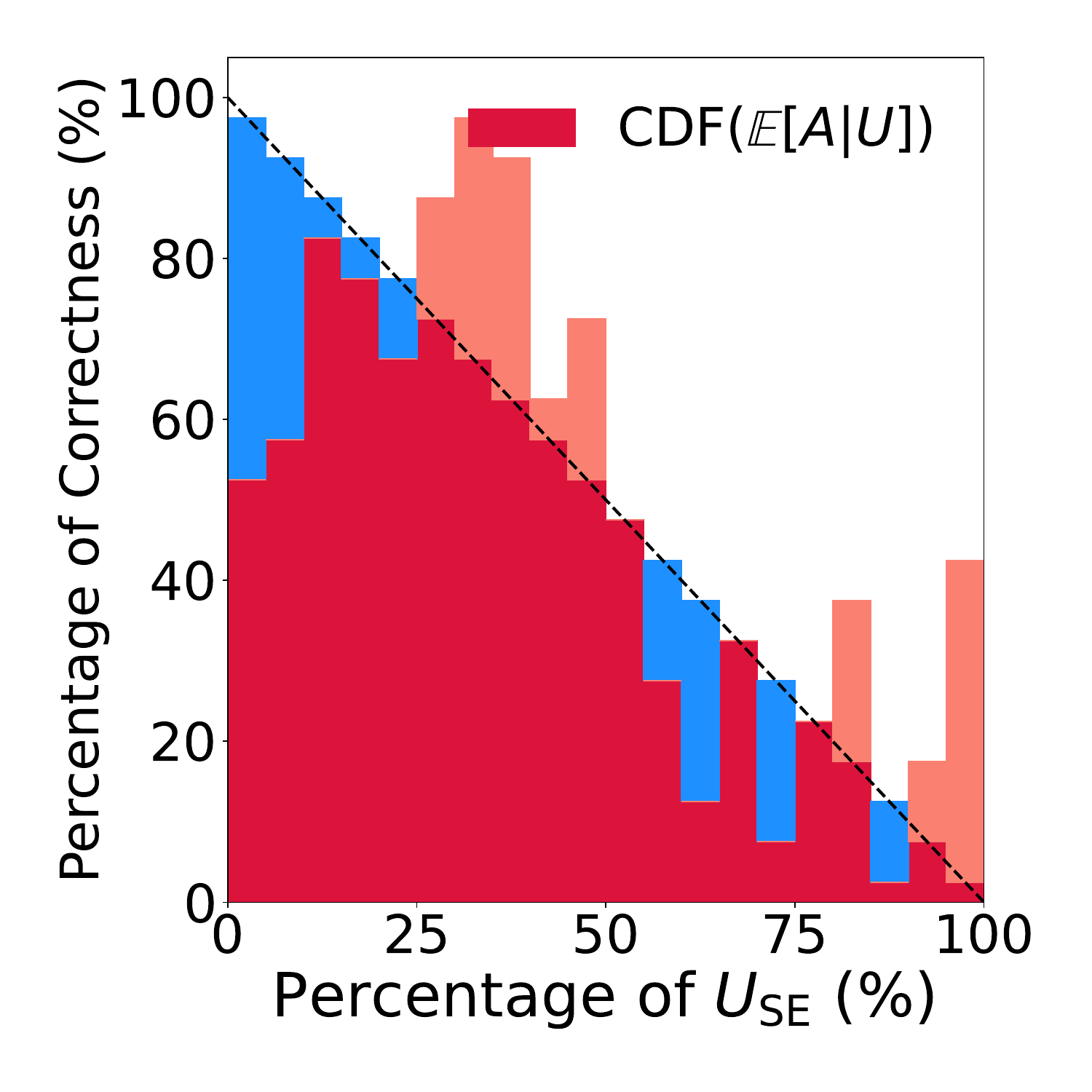}
  \hspace{-4mm}
  \includegraphics[width=0.5\linewidth]{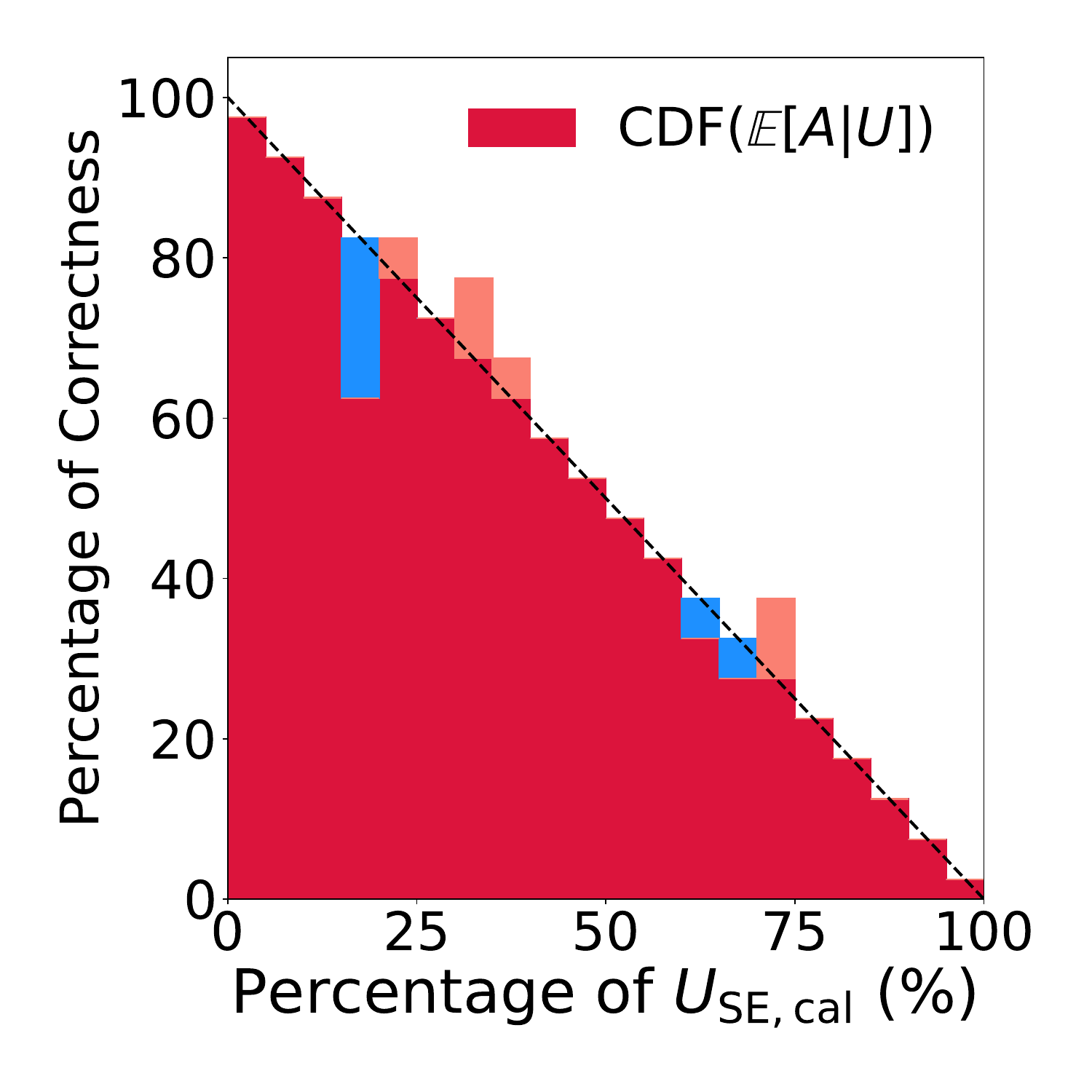}
  \caption{Meteor Score}
\end{subfigure}\\
\begin{subfigure}{.5\textwidth}
  \centering
  \includegraphics[width=0.5\linewidth]{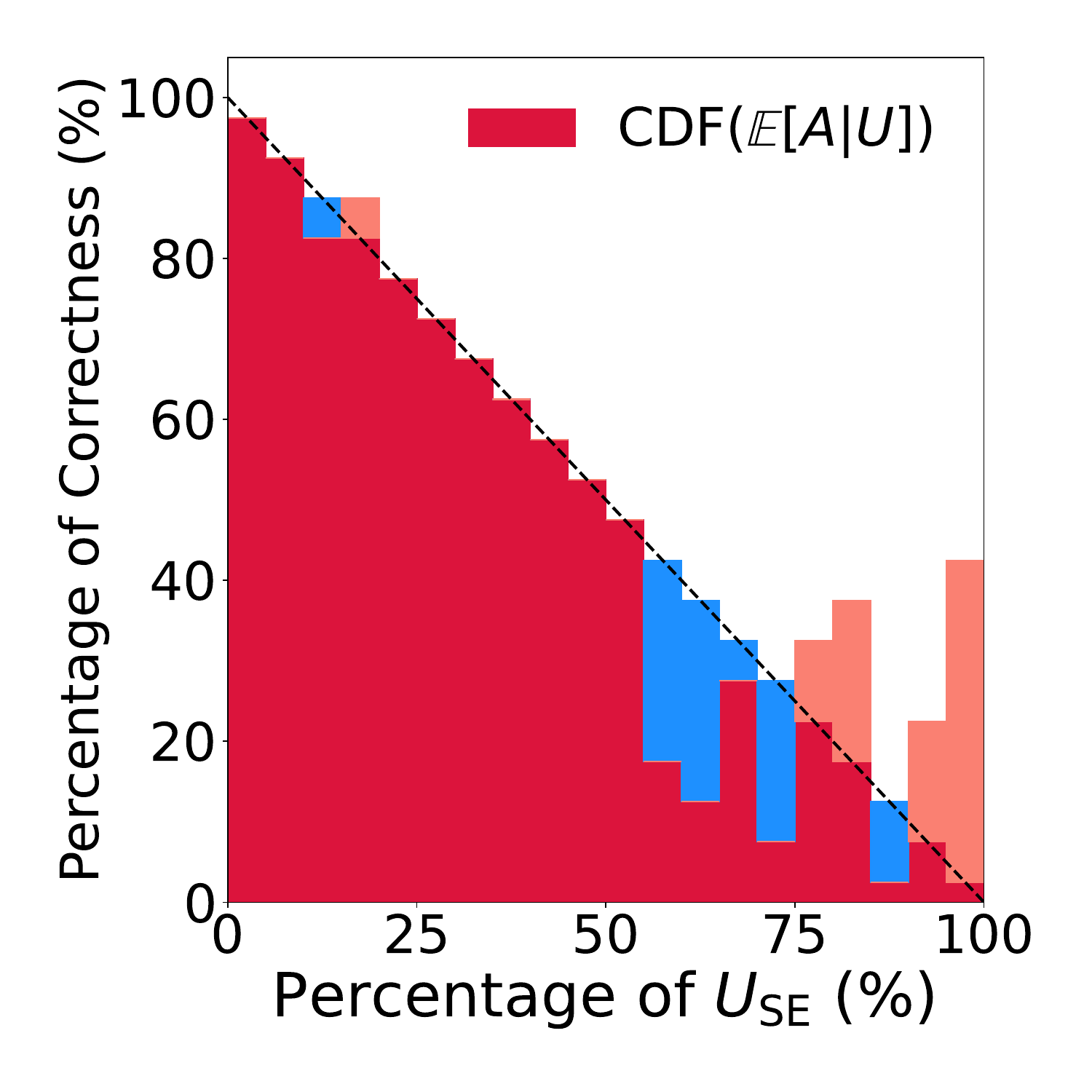}
  \hspace{-4mm}
  \includegraphics[width=0.5\linewidth]{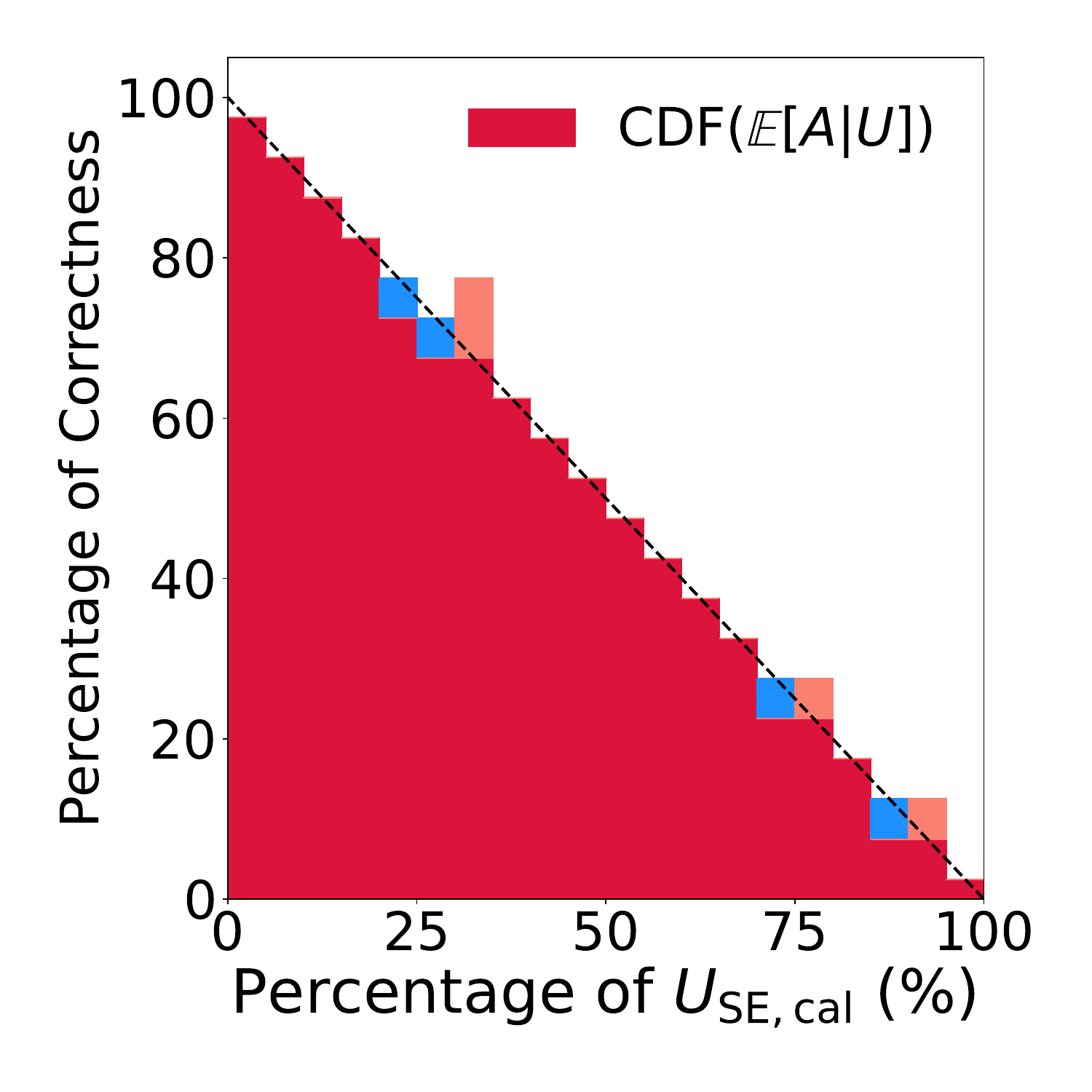}
  \caption{Rouge Score}
\end{subfigure}
\hspace{-2mm}
\begin{subfigure}{.5\textwidth}
  \centering
  \includegraphics[width=0.5\linewidth]{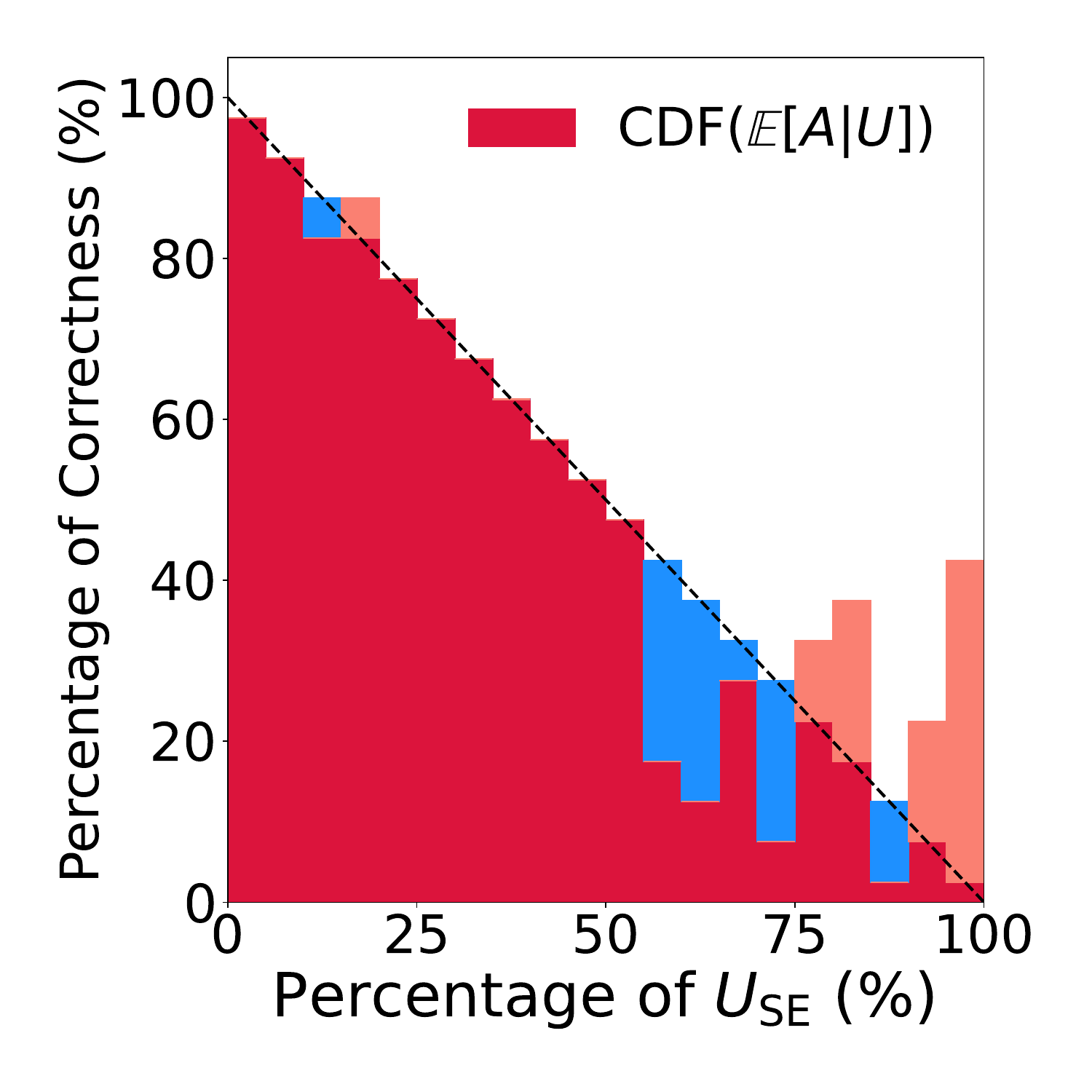}
  \hspace{-5mm}
  \includegraphics[width=0.5\linewidth]{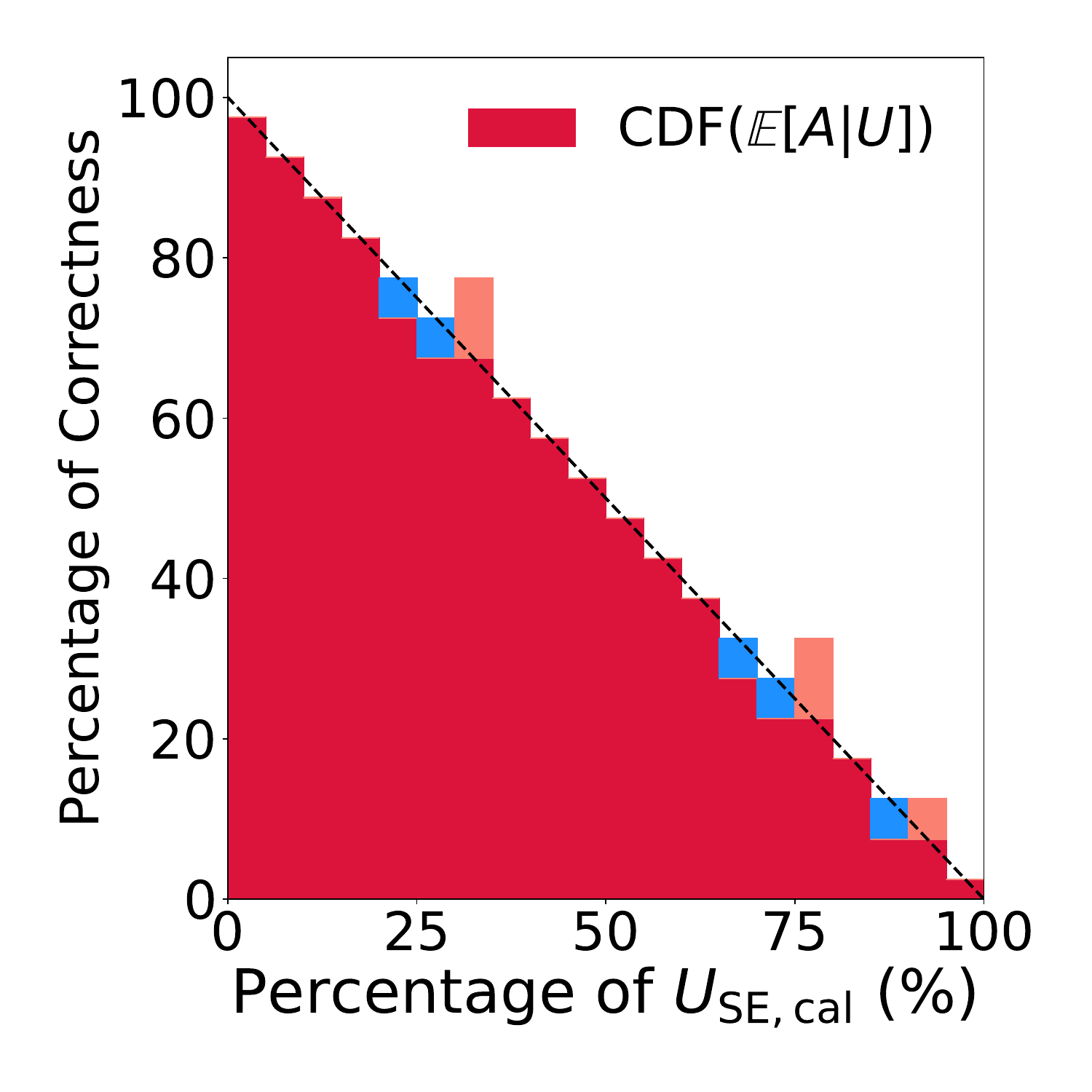}
  \caption{Rouge1 Score}
\end{subfigure}
\caption{
Indication diagrams of $U_{\rm SE}$ and $U_{\rm SE, cal}$ (post-calibrated) for GPT-3.5-turbo (temperature 1.5) on TriviaQA with various correctness scores.}
\label{fig:gpt_1.5_cal_triviaqa}
\end{figure*}

While effective, one should note that such a post-hoc recalibration strategy concerns a specific benchmark and is not a focus of our work. We leave devising benchmark-agnostic calibrated uncertainty/confidence measures for future work.

\subsection{Critical Difference Diagrams}\label{app:cd_diagram}
Here, we propose to combine the RCE metric with the \textit{critical difference} (CD) diagram~\citep{demvsar2006statistical}. 
Critical Difference diagrams are built on the Wilcoxon signed rank test and the Friedman test, giving a non-parametric comparison of multiple approaches aggregated
over several trials.

\begin{figure}[H]
  \centering\includegraphics[width=0.8\textwidth]{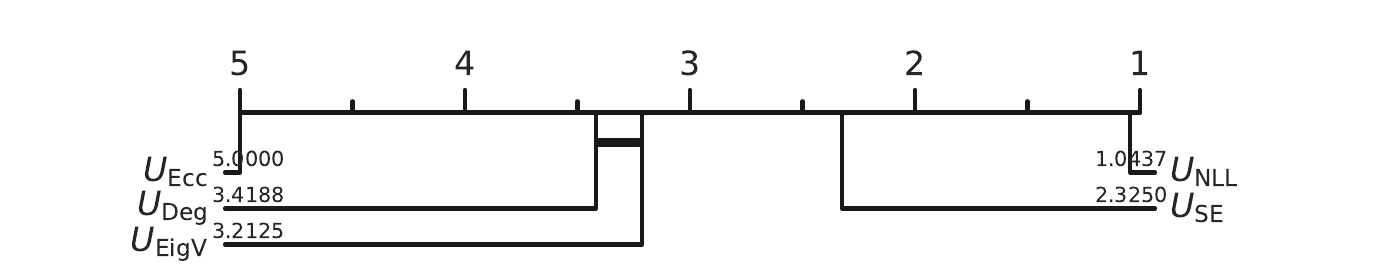}
\caption{CD diagram of Llama-2-chat on TriviaQA.}
\label{fig:cd_triviaqa}
\end{figure}

As a demonstration, the CD diagram of assessed measures for Llama-2-chat on TriviaQA is shown in Fig.~\ref{fig:cd_triviaqa}. 
The positions of various methods represent their averaged ranks over various experimental configurations (\eg, temperature, LM, bootstrap, etc), 
where a lower averaged rank indicates that the corresponding measure (\eg, $1.04$ for $U_{\rm NLL}$) performs better than others in an averaged sense. 
Here, a thick horizontal segment  connects measures (\eg, $U_{\rm Deg}$ and $U_{\rm EigV}$) 
if the difference between their averaged ranks is within the critical length determined by related hypothesis testing procedures. 
Measures that are disconnected (\eg, $U_{\rm Ecc}$, $U_{\rm Deg}$, and $U_{\rm NLL}$) 
have statistically significant differences in performance.

\subsection{Robustness Analysis}
\label{sec:robustness}
The RCE of uncertainty measures in practice may be affected by several factors. Therefore, we conduct ablation studies to analyze whether RCE is robust to two crucial key factors: correctness scores and model temperatures.

\paragraph{Correctness functions.}
We show RCEs for various models and correctness scores on TriviaQA and SQuAD in Fig~\ref{fig:robust_correct}. Each result is obtained using bootstrapping with 20 fixed seeds. 
We observe that the ranking of uncertainty measures is robust to correctness scores. 
For instance, we show the critical diagrams using GPT-3.5 on TriviaQA with varying correctness scores in Fig~\ref{fig:cds_correct}. In this setting, $U_{\rm NLL}$, $U_{\rm SE}$ and $C_{\rm Verb}$ rank consistently higher across different correctness scores. Second, as shown in Table~\ref{tab:all},
RCE values using different correctness scores are relatively stable. 
For instance, when using GPT-3.5 on TriviaQA, the RCE values of NLL are 0.065, 0.054, 0.037, and 0.039 with bert\_similarity, meteor, rouge-L, and rouge-1 scores, which are close. 

\begin{figure}
    \centering
    \includegraphics[width=0.8\textwidth]{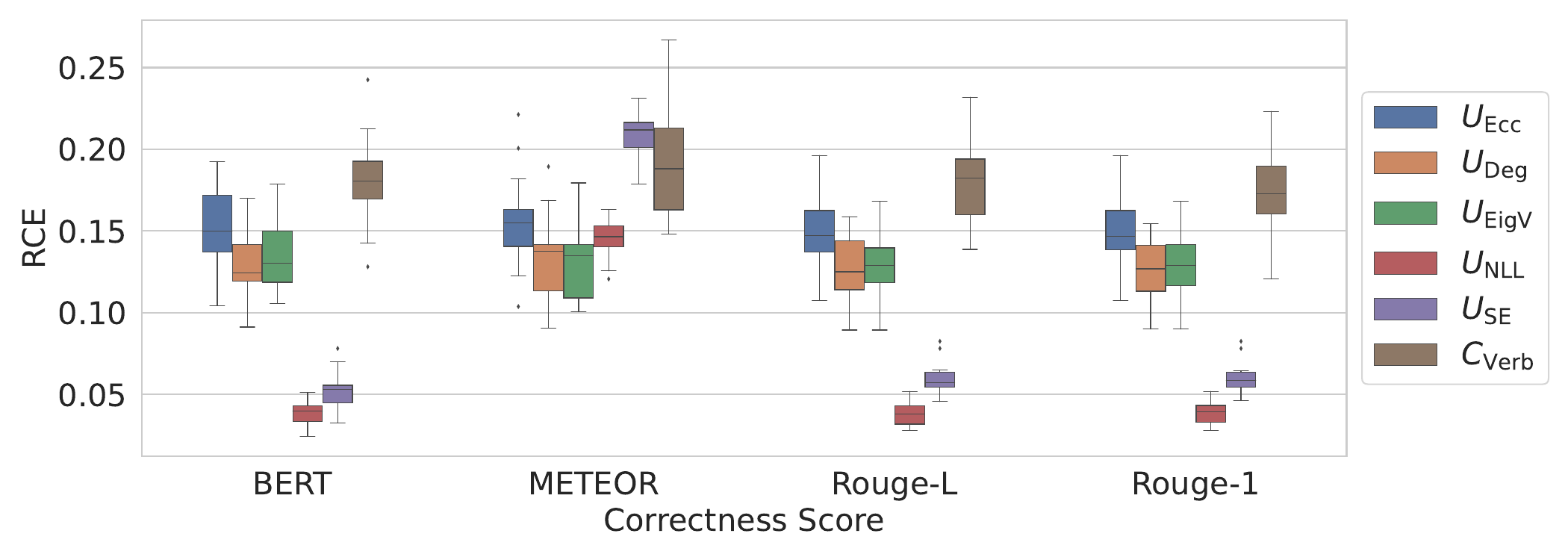}
    \includegraphics[width=0.8\textwidth]{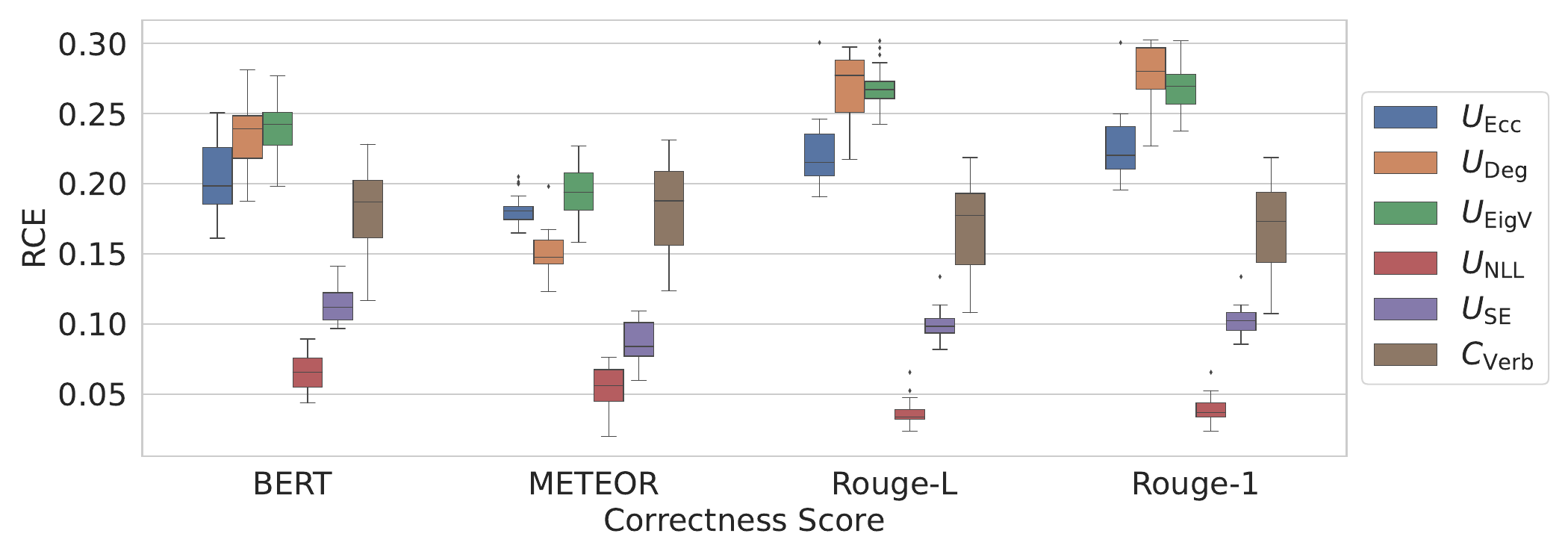}
    \includegraphics[width=0.8\textwidth]{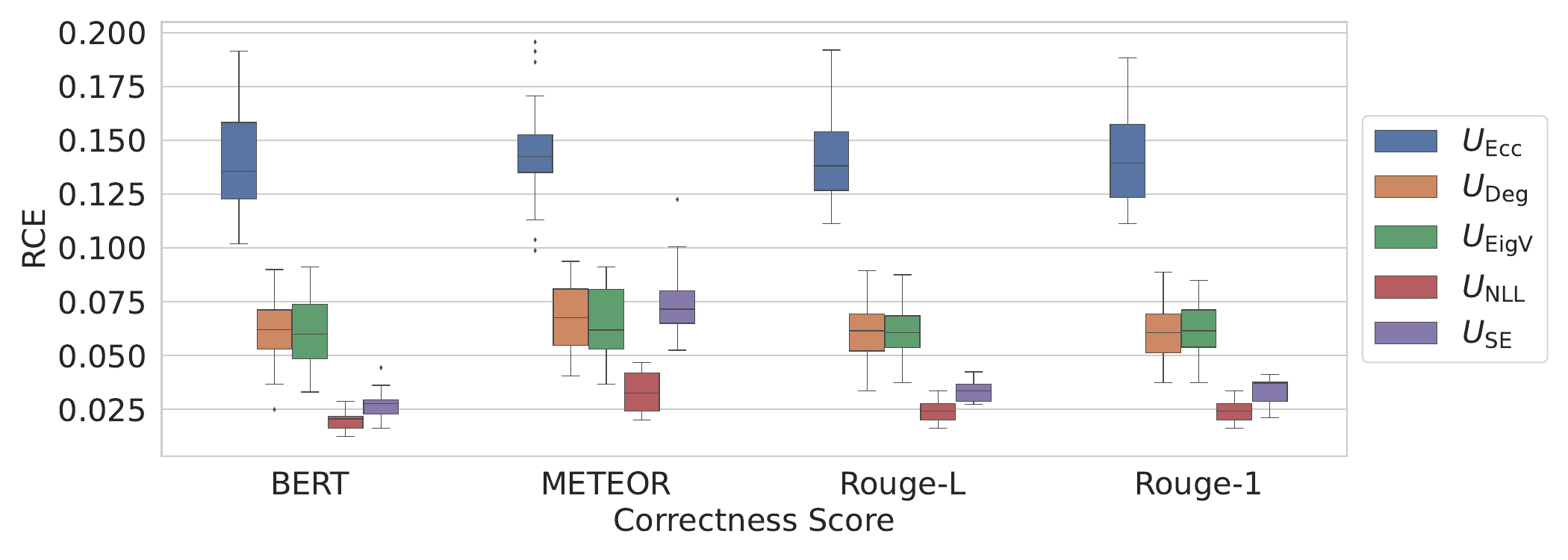}
    \includegraphics[width=0.8\textwidth]{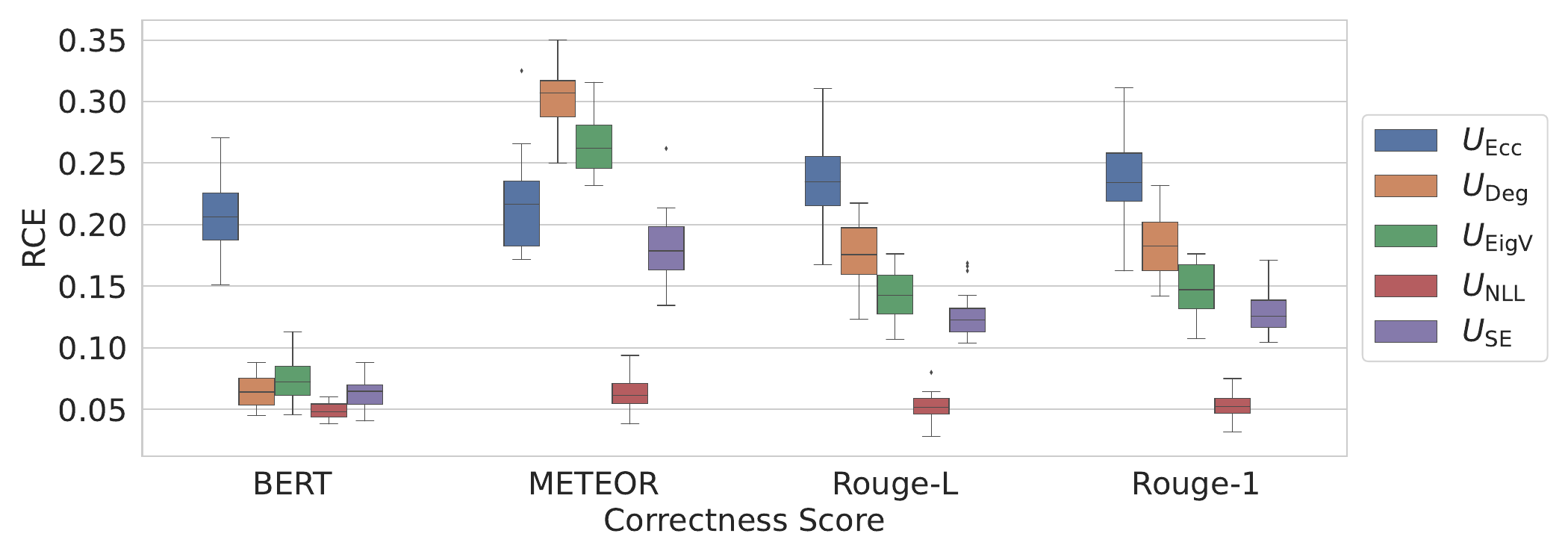}
    \caption{Box plots with various correctness functions under various configurations. The first row is for GPT-3.5-turbo on TriviaQA; the second row is for GPT-3.5-turbo on SQuAD; the third is for Llama-2-7b-chat on TriviaQA; and the fourth row is for Llama-2-7b-chat on SQuAD. 
    }
    \label{fig:robust_correct}
\end{figure}

\begin{figure}
    \centering
    \includegraphics[width=0.8\textwidth]{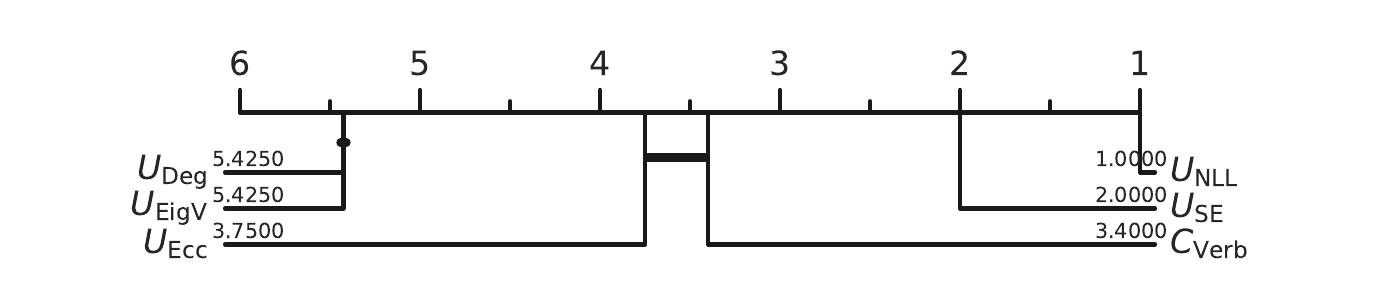}
    \includegraphics[width=0.8\textwidth]{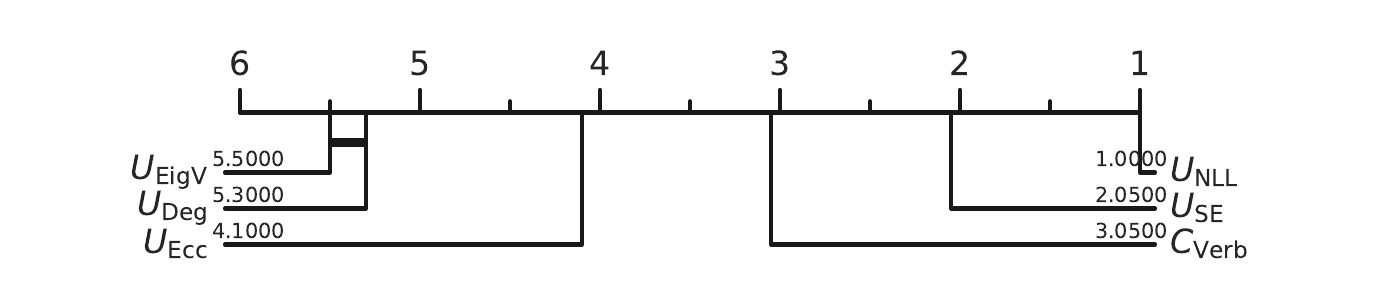}
    \includegraphics[width=0.8\textwidth]{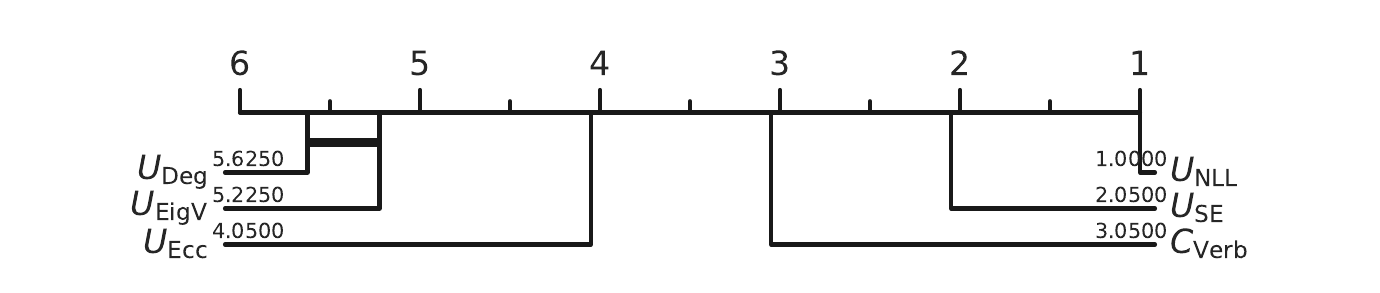}
    \includegraphics[width=0.8\textwidth]{figures/cds/gpt_correctness_squad/gpt-3.5-turbo_bert_similarity.pdf}
    \caption{CD diagrams using GPT-3.5 on TriviaQA with different correctness scores.}
    \label{fig:cds_correct}
\end{figure}

\paragraph{Temperature setting.}
We show the RCEs for various models and temperatures on TriviaQA and SQuAD in Fig.~\ref{fig:robust_temperature}. As above, each result is obtained using bootstrapping with 20 fixed seeds. The findings are similar to those regarding correctness scores. First, as shown in Fig.~\ref{fig:cds_temperature}, while RCE values are not constant, $U_{\rm NLL}$ ranks consistently highest across different temperatures. When only the best uncertainty measure is considered, the RCE rankings at different temperatures give consistent results. Second, the RCE values are stable across different temperatures. For instance, when using GPT-3.5 with the Rouge-L score, the RCE values are 0.041, 0.038, 0.034 with temperatures 0.5, 1.0, and 1.5.

\begin{figure}
    \centering
    \includegraphics[width=0.8\textwidth]{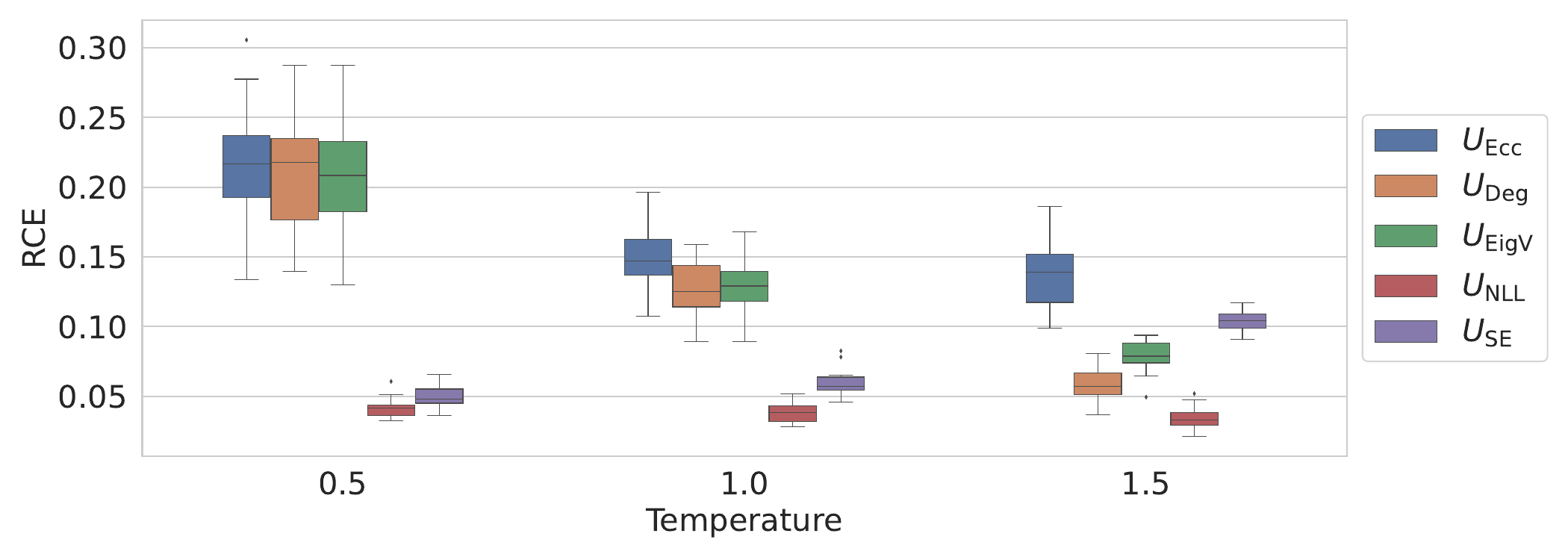}
    \includegraphics[width=0.8\textwidth]{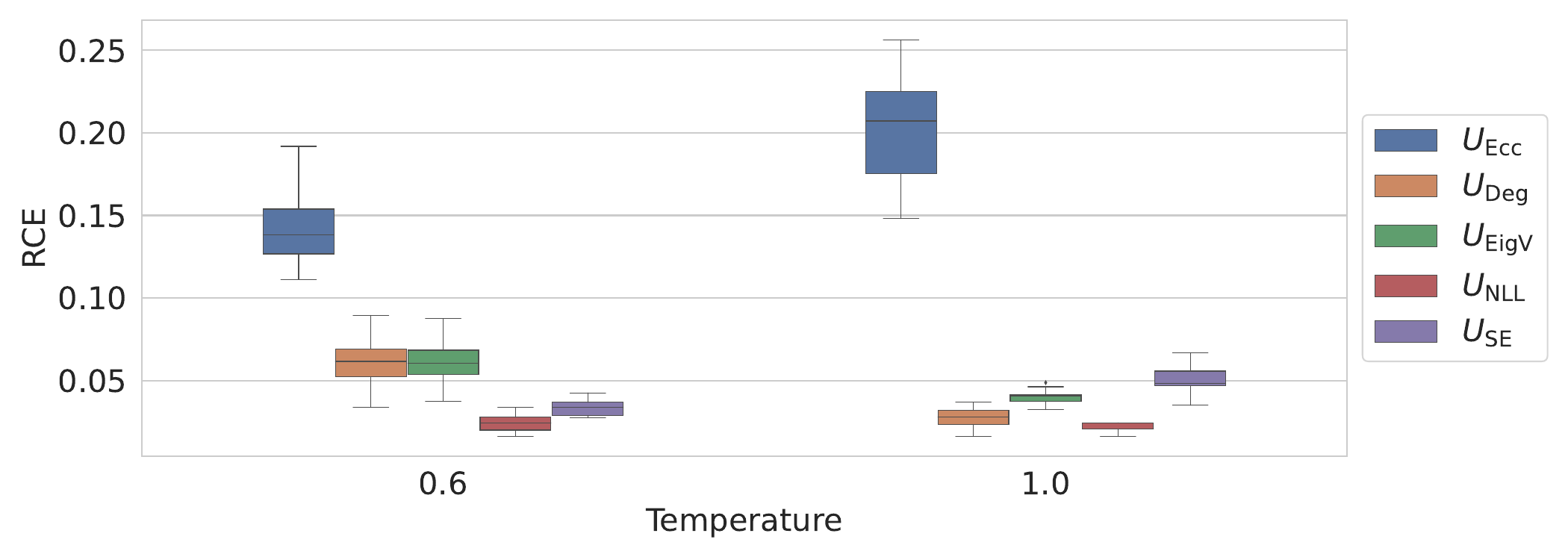}
    \caption{Box plots based on the generations of GPT-3.5-turbo and Llama-2-7b-chat with varying temperatures. The first row represents GPT-3.5-turbo with temperatures 0.5, 1.0, and 1.5, while the second row represents Llama-2-7b-chat with temperatures 0.6 and 1.0. Both results are evaluated on TriviaQA dataset.}
    \label{fig:robust_temperature}
\end{figure}

\begin{figure}
    \centering
    \includegraphics[width=0.8\textwidth]{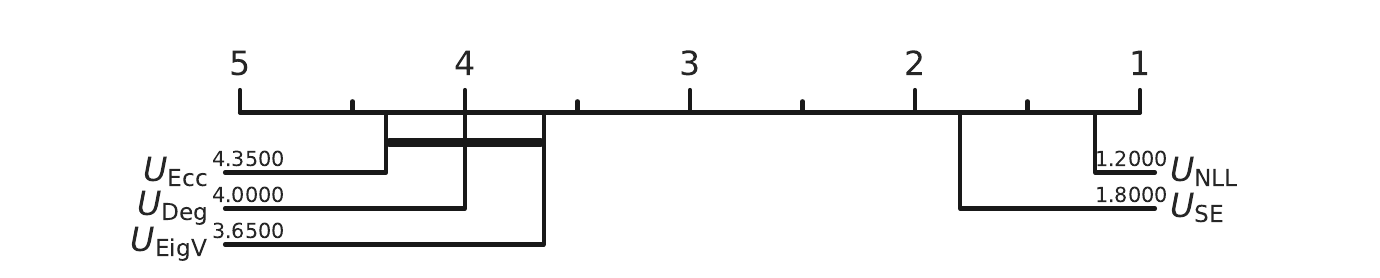}
    \includegraphics[width=0.8\textwidth]{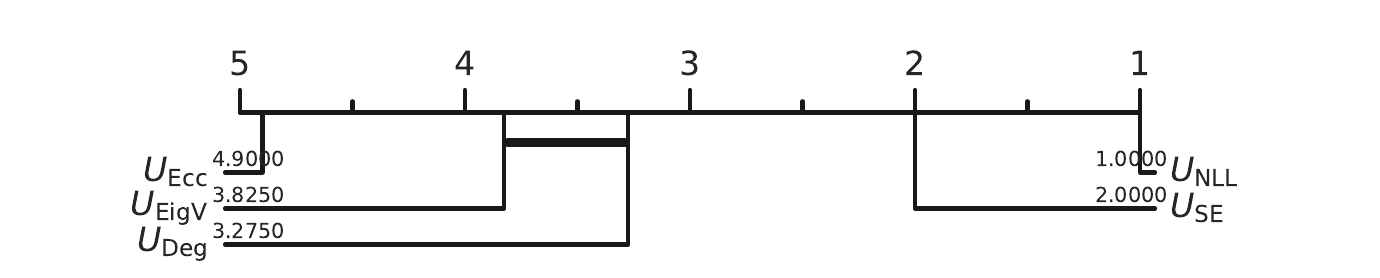}
    \includegraphics[width=0.8\textwidth]{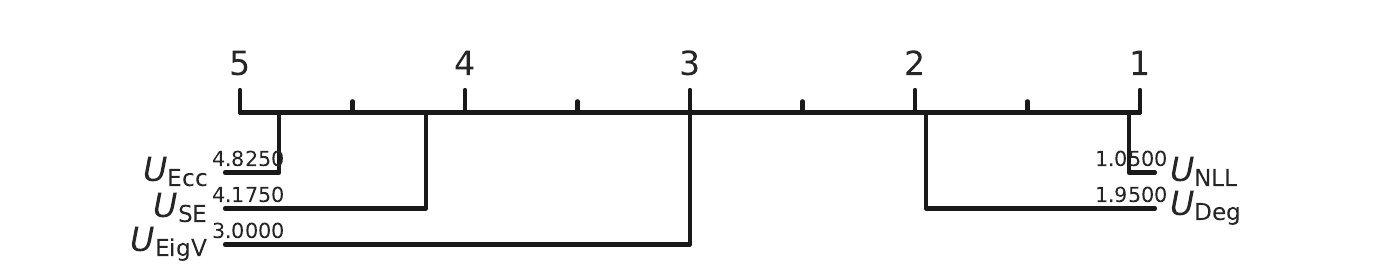}
    \caption{CD diagrams on using GPT-3.5 TriviaQA with temperature 0.5, 1.0, and 1.5.}
    \label{fig:cds_temperature}
\end{figure}

\paragraph{Influence of sample size.}
We show that the empirical RCE is robust regarding the influence of sample size, which is crucial in scenarios where labeled data is hard to acquire. To this end, we conducted a new experiment using less data in the RCE computation, simulating scenarios where only a small amount of labeled data can is available. Specifically, we utilize 20\%, 40\%, 80\%, 100\% of the TriviaQA dataset in computing the empirical RCE values of uncertainty/confidence measures for the GPT-3.5 model with temperature 1.0. The RCE results under the Bert-similarity and RougeL correctness are in Table~\ref{tab:small-dataset}. The binning scheme is the same as the one used in the paper (\ie, 20 equal-mass bins). From the new experimental results, we observe that the RCE results are fairly stable, up to reasonable standard deviations (denoted by the subscript numbers), for moderately large datasets.

\begin{table}
    \centering
    \small
\begin{tabular}{llllllll} \toprule
Proportion  & Correctness & $U_{\rm Ecc}$ & $U_{\rm Deg}$ & $U_{\rm EigV}$ & $U_{\rm NLL}$ & $U_{\rm SE}$ & $C_{\rm Verb}$ \\ \midrule
\multirow[c]{4}{*}{bert} & 20\% & 0.176$_{ \pm 0.022}$ & 0.153$_{ \pm 0.023}$ & 0.152$_{ \pm 0.024}$ & 0.058$_{ \pm 0.009}$ & 0.080$_{ \pm 0.015}$ & 0.254$_{ \pm 0.042}$ \\ \cline{2-8}
  & 40\% & 0.171$_{ \pm 0.020}$ & 0.151$_{ \pm 0.021}$ & 0.154$_{ \pm 0.020}$ & 0.048$_{ \pm 0.010}$ & 0.083$_{ \pm 0.013}$ & 0.211$_{ \pm 0.045}$ \\ \cline{2-8}
  & 80\% & 0.162$_{ \pm 0.022}$ & 0.153$_{ \pm 0.016}$ & 0.151$_{ \pm 0.017}$ & 0.043$_{ \pm 0.010}$ & 0.062$_{ \pm 0.012}$ & 0.203$_{ \pm 0.031}$ \\ \cline{2-8}
  & 100\% & 0.152$_{ \pm 0.025}$ & 0.129$_{ \pm 0.020}$ & 0.133$_{ \pm 0.020}$ & 0.039$_{ \pm 0.007}$ & 0.052$_{ \pm 0.012}$ & 0.182$_{ \pm 0.025}$ \\ \cline{1-8}
  \multirow[c]{4}{*}{rougeL} & 20\% & 0.178$_{ \pm 0.020}$ & 0.153$_{ \pm 0.024}$ & 0.153$_{ \pm 0.023}$ & 0.061$_{ \pm 0.010}$ & 0.098$_{ \pm 0.016}$ & 0.238$_{ \pm 0.035}$ \\ \cline{2-8}
  & 40\% & 0.172$_{ \pm 0.022}$ & 0.153$_{ \pm 0.021}$ & 0.156$_{ \pm 0.017}$ & 0.048$_{ \pm 0.009}$ & 0.090$_{ \pm 0.010}$ & 0.194$_{ \pm 0.040}$ \\ \cline{2-8}
  & 80\% & 0.156$_{ \pm 0.020}$ & 0.145$_{ \pm 0.017}$ & 0.146$_{ \pm 0.017}$ & 0.042$_{ \pm 0.009}$ & 0.073$_{ \pm 0.013}$ & 0.190$_{ \pm 0.030}$ \\ \cline{2-8}
  & 100\% & 0.151$_{ \pm 0.024}$ & 0.126$_{ \pm 0.019}$ & 0.129$_{ \pm 0.019}$ & 0.038$_{ \pm 0.007}$ & 0.059$_{ \pm 0.009}$ & 0.181$_{ \pm 0.026}$ \\
  \bottomrule
\end{tabular}
\caption{RCE results for GPT-3.5-turbo (temperature 1.0) performing on the TriviaQA data with various dataset sizes under the Bert-similarity and RougeL correctness.}
\vspace{-4mm}
    \label{tab:small-dataset}
\end{table}

\subsection{Conclusive Comparison}
While the RCE values and rankings are often stable when correctness score and temperature vary, there are exceptional situations where uncertainty measures rankings might fluctuate. This poses a challenge when aiming for conclusive comparisons for uncertainty measures across varying hyperparameter situations. 
To make conclusive comparisons aiming to identify 
a best method, 
we can use CD diagrams by taking multiple hyperparameter choices into account. For example, to draw conclusions agnostic to model temperature, we plot CD diagrams that show RCE rankings averaged from data collected at different temperatures, as shown in Fig.~\ref{fig:robust_comparison}. 
Based on these results, 
comparisons agnostic to the temperature can be made: $U_{\rm NLL}$ overall outperforms 
other methods with GPT-3.5 and Llama-2-chat on TriviaQA; $U_{\rm EigV}$ and $U_{\rm Deg}$ overall show statistically similar performance with Llama-2-chat on TriviaQA. 

\begin{figure}
    \centering
    \includegraphics[width=0.8\textwidth]{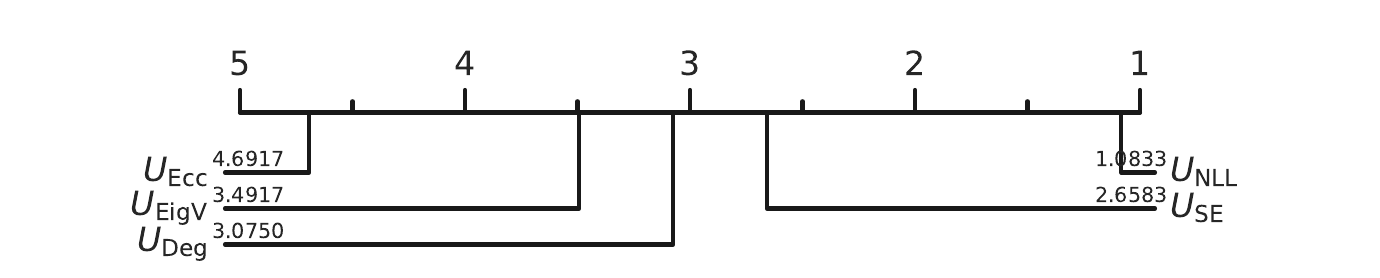}
    \includegraphics[width=0.8\textwidth]{figures/cds/llama_correctness_0.6_triviaqa/llama_metric.pdf}
    \caption{Conclusive comparison via critical difference diagrams. The first plot is with GPT-3.5-turbo on TriviaQA with temperatures 0.5, 1.0, and 1.5; the second is with Llama-2-chat on TriviaQA with temperatures 0.6 and 1.0.}
    \label{fig:robust_comparison}
\end{figure}

\subsection{Library Information}
The details of the main libraries used 
in our experiments
are as in Table~\ref{tab:lib}.

\begin{table}[H]
\centering
\begin{tabular}{|ll|ll|}
\toprule
Package & Version & Package  & Version \\
\midrule
transformer~\cite{wolf2020huggingfaces} & 4.32.1 & nltk~\cite{bird2009natural}  & 3.8.1 \\
spacy~\cite{spacy2}       & 3.6.1  & torch~\cite{NEURIPS2019_9015} & 2.0.1 \\
rouge-score~\cite{lin-2004-rouge} & 0.1.2  &     &    \\ \bottomrule
\end{tabular}
\caption{Information on main libraries used.}
\label{tab:lib}
\end{table}

\subsection{Artifact License and Terms}
We use four datasets, namely, Natural Questions, TriviaQA, SQuAD-1, and Meadow. Natural Questions is under the \textbf{CC BY-SA 3.0 license}, TriviaQA and Meadow are under the \textbf{Apache License 2.0}, and SQuAD-1 is under the \textbf{CC BY-SA 4.0 license}. 
We used two LLMs, 
namely \textit{ChatGPT-3.5} and \textit{Llama-2}. 
ChatGPT-3.5-turbo usage is subject to OpenAI's \textit{Sharing \& Publication Policy} and \textit{Usage Policies}. Llama-2 is under the {Llama-2 Community License}~\cite{Llamaacc89:online}. Our implementation and the data collected are under the \textbf{MIT License}.

Our use of the existing artifacts is consistent with their original intended use. Our created artifacts intend to verify our proposed method in our submission, which is consistent with the original access conditions. 

\section{AI Assistant Usage}
We used \textit{Copilot} to assist with coding.

\end{document}